\theoremstyle{plain}
\newtheorem{theorem}{Theorem}
\newtheorem{lemma}{Lemma}
\newtheorem{definition}{Definition}
\newtheorem{assumption}{Assumption}
\titlespacing*{\subsection}{0pt}{0.6\baselineskip}{0.1\baselineskip}
\DeclareMathOperator{\sech}{sech}
\DeclareMathOperator{\sat}{sat}
\DeclareMathOperator{\diag}{diag}
\DeclareMathOperator{\sgn}{sgn}
\DeclareMathOperator{\tr}{tr}
\begin{document}


\title{Robust and Agile Quadrotor Flight via Adaptive Unwinding-Free Quaternion Sliding Mode Control}

\author{Amin Yazdanshenas, and Reza Faieghi
\thanks{The authors are with the Autonomous Vehicles Laboratory, Department of Aerospace Engineering, Toronto Metropolitan University, Toronto, Canada{\tt\footnotesize \{amin.yazdanshenas,reza.faieghi\}@torontomu.ca}.}}

\markboth{}%
{Yazdanshenas and Faieghi: Robust and Agile Quadrotor Flight via Adaptive Unwinding-Free Quaternion Sliding Mode Control}


\maketitle

\begin{abstract}
This paper presents a new adaptive sliding mode control (SMC) framework for quadrotors that achieves robust and agile flight under tight computational constraints. The proposed controller addresses key limitations of prior SMC formulations, including (i) the slow convergence and almost-global stability of $\mathrm{SO(3)}$-based methods, (ii) the oversimplification of rotational dynamics in Euler-based controllers, (iii) the unwinding phenomenon in quaternion-based formulations, and (iv) the gain overgrowth problem in adaptive SMC schemes. Leveraging nonsmooth stability analysis, we provide rigorous global stability proofs for both the nonsmooth attitude sliding dynamics defined on $\mathbb{S}^3$ and the position sliding dynamics. Our controller is computationally efficient and runs reliably on a resource-constrained nano quadrotor, achieving 250 Hz and 500 Hz refresh rates for position and attitude control, respectively. In an extensive set of hardware experiments with over 130 flight trials, the proposed controller consistently outperforms three benchmark methods, demonstrating superior trajectory tracking accuracy and robustness with relatively low control effort. The controller enables aggressive maneuvers such as dynamic throw launches, flip maneuvers, and accelerations exceeding 3g, which is remarkable for a 32-gram nano quadrotor. These results highlight promising potential for real-world applications, particularly in scenarios requiring robust, high-performance flight control under significant external disturbances and tight computational constraints.

\end{abstract}

\begin{IEEEkeywords}
Quadrotors, uncrewed aerial vehicles, flight control, sliding mode control, unit quaternions, sensitivity analysis.
\end{IEEEkeywords}
\section*{Multimedia Material}\label{se:multimedia}
The experimental codes and videos related to this paper are accessible at the following links:\\
Code:~\href{https://github.com/AVL-TMU/CrazyAQSMC}{https://github.com/AVL-TMU/CrazyAQSMC}\\
Video:~\href{https://youtu.be/yhFYXKonTRk}{https://youtu.be/yhFYXKonTRk}
\vspace{-0.25\baselineskip} 
\section{Introduction}\label{se:intro}
\subsection{Motivation}
Quadrotors require robust control to maintain stability and precise maneuverability under disturbances and uncertainties.
One widely studied method in this context is sliding mode control (SMC).
However, despite several decades of research, state-of-the-art SMC approaches for quadrotor control still face major limitations.

One key challenge involves attitude control. As discussed in Section \ref{se:related_work}, coordinate-free methods exhibit slow convergence and provide only almost global stability. Meanwhile, Euler-angle-based strategies rely on simplified rotational dynamics, which degrade performance at large roll or pitch angles. Quaternion-based methods also face the unwinding issue, which can cause unnecessarily prolonged rotations.

A second challenge is the need to know the upper bounds of uncertainties. Adaptive switching gains eliminate the need for prior knowledge of these bounds. However, as explained in Section \ref{se:related_work}, the existing adaptive gains grow excessively over time, leading to actuator saturation, significant fluctuations in the control signal, and instability.

SMC, and nonlinear controllers in general, rely heavily on empirical tuning. Most studies claiming the superiority of one method over another base their conclusions on a single or limited set of parameter values, making the results highly dependent on the tuning process. Sensitivity analysis can mitigate this confounding factor by examining how performance varies across different tunings, enabling more robust and reliable conclusions about controller effectiveness.

\subsection{Contributions}
We present a new sliding mode controller that addresses \textit{all major limitations} of prior SMC approaches. 

Our controller demonstrates robust and agile quadrotor control at a level \textit{rarely achieved} by classical nonlinear methods. 
It consistently outperforms benchmark methods, not just under a single tuning, but systematically across multiple scenarios through a \textit{sensitivity analysis involving randomly varied tunings over 130 flight trials}.

Our adaptive gain mechanism responds in real time to varying levels of disturbances and tracking errors. It enables the system to \textit{withstand strong wind disturbances, execute flip maneuvers, and handle accelerations exceeding $3g$ on a nano quadrotor weighing only 32 $[\text{g}]$}, vulnerable to aggressive maneuvers due to its low mass.

The controller's low computational load enables implementation on a limited micro-controller of nano quadrotors, reliably running position and attitude control loops in 250 Hz, and 500 Hz, respectively, even with six adaptive switching gains that update in every computing frame.

From theorerical perspective, the controller presents the following advantages:
\begin{enumerate}[label=\roman*.]
    \item Provides global stability compared to the almost global stability of coordinate-free approaches.
    \item Eliminates the need for simplification of rotational dynamics in Euler-based approaches.
    \item Resolves the unwinding issue of existing quaternion-based SMC techniques.
    \item Incorporates adaptive gains that can increase/decrease over time, preventing their excessive growth in the previous method, which can cause instability.
\end{enumerate}


This paper is an evolved version of our earlier work \cite{yazdanshenas2024quaternion}, which introduced an unwinding-free quaternion-based sliding mode controller without adaptive gains, formal stability analysis, or hardware validation. Here, we extend that work by incorporating adaptive gains, providing stability analysis, and presenting comprehensive hardware results. 
\section{Related Work}\label{se:related_work}
\subsection{Optimization- and learning-based methods}
Recently, optimization-based methods, particularly nonlinear model predictive control (NMPC)  and its variants \cite{nan2022nonlinear, romero2022model, izadi2024multi}, have proven effective in quadrotor control, especially for agile maneuvers \cite{sun2022comparative}.
NMPC relies on an accurate system model. Disturbances and uncertainties can degrade its performance, leading to sub-optimal or unstable control. The integration of NMPC with SMC \cite{bhattacharjee2020robust} and disturbance observers \cite{li2022enhanced}, and robust formulations like tube-based NMPC \cite{michel2019design} and stochastic NNMPC \cite{xue2024output} have proven effective for robust control.

However, the computational load of NMPC remains a challenge for hardware implementation, especially with long prediction horizons. Most hardware implementations of NMPC have used it for either position control \cite{sun2022comparative, nan2022nonlinear} or attitude control \cite{kamel2015fast}, while relying on simpler controllers, e.g. incremental nonlinear dynamic inversion \cite{tal2020accurate}, dynamic inversion \cite{achtelik2013inversion}, or linear quadratic regulators \cite{bouabdallah2007full}, for the other control loop.

Machine learning (ML) has also led to effective control solutions for quadrotors.
Several studies have explored ML models, particularly reinforcement learning agents, for end-to-end learning of control policies \cite{song2023reaching}, achieving impressive performance in aggressive maneuvers.
Other studies have explored integrating ML models with NMPC \cite{romero2024actor} and classical methods \cite{yogi2024neural}. 
ML techniques can improve the accuracy of vehicle models used in controllers \cite{salzmann2023real} and/or optimize control design parameters \cite{annaswamy2023integration}.

While optimization- and learning-based methods hold strong potential for quadrotor control, classical nonlinear methods, including SMC, remain important for at least two reasons:
(i) Classical approaches are often simpler and more computationally efficient. Despite advances in flight control hardware, quadrotors, especially in the nano and micro classes, still face limited onboard computing.
(ii) Hybrid approaches that integrate optimization- and learning-based methods with classical control combine the strengths of both paradigms \cite{bhattacharjee2020robust, annaswamy2023integration, yogi2024neural}. Continued advances in classical nonlinear control will directly benefit these hybrid frameworks.

\subsection{Classical Nonlinear Control Methods}
Classical nonlinear control methods have a long heritage in quadrotor control. Examples include feedback linearization \cite{voos2009nonlinear}, SMC \cite{xu2006sliding}, backstepping \cite{madani2006backstepping}, adaptive \cite{nicol2011robust}, and passivity-based designs \cite{souza2014passivity}.  Arguably, the most common technique in this category is the geometric tracking controller (GTC) introduced by Lee et al. \cite{lee2010geometric}, which is a cascaded Lyapunov-based controller design on $\mathrm{SE(3)}$. GTC has been the basis for several quadrotor control techniques that we summarize under geometric methods.

\subsubsection{Geometric Methods} 
This class of controllers govern the vehicle’s dynamics on the $\mathrm{SE(3)}$ manifold. They employ a coordinate-free representation of quadrotor attitude using the rotation matrix $\mathbf{R}$ in the form of $\mathbf{R}_e = \mathbf{R}_d^\top  \mathbf{R}$ where the indices $e$ and $d$ represent error and desired values, respectively. This framework has underpinned several quadrotor controllers using SMC \cite{garcia2020robust}, backstepping \cite{lee2013backstepping}, adaptive control \cite{lee2012robust}, and other Lyapunov-based robust designs \cite{lee2013nonlinear, parra2012toward}. It has also been fundamental to SMC-based design for sapcecraft\cite{gong2020adaptive,ren2023adaptive}, and attitude control of rigid bodies, in general \cite{lee2012exponential}. The two challenges with GTC, which also apply to its SMC extension \cite{garcia2020robust}, include the slow convergence of attitude error, as pointed out in \cite{teng2022lie, lopez2020sliding} and its almost global stability \cite{bhat2000topological}. However, it continues to remain a popular controller, implemented on common platforms like PX4 Autopilot \cite{d2024efficient} and Crazyflie \cite{crazyflie_lee_controller}.

\subsubsection{SMC with Euler angle-based rotational dynamics}
One of the earliest sliding mode controllers for quadrotors is \cite{xu2006sliding}, which is based on the Euler angles representation of rotational dynamics. Numerous studies have since extended this controller in various directions, as explained below.

(i) The first direction focuses on enhancing trajectory tracking performance through variations of SMC. For example, \cite{zheng2014second} develops integral SMC for quadrotors by adding an integral term to the sliding surface, accelerating the initial convergence of vehicle states to the sliding surface. \cite{xiong2017global} introduces terminal SMC (TSMC), incorporating a nonlinear term to achieve a faster convergence compared to conventional SMC. However, TSMC relies on non-Lipschitz functions, which can cause singularities, motivating the development of non-singular TSMC (NTSMC) \cite{hou2020nonsingular, hassani2021robust}. Recent advancements include fixed-time NTSMC \cite{mechali2021observer, wang2020fixed, ai2019fixed, yu2021novel}, which guarantees convergence within a predefined time, regardless of initial conditions.

(ii) The second direction focuses on reducing chattering that arises from the discontinuity of the sign function $\sgn(\cdot)$ used in standard SMC law. The simplest solution is to replace $\sgn(\cdot)$ with a saturation function $\operatorname{sat}(\cdot)$, which mitigates chattering by introducing a thin boundary layer, but at the cost of small tracking errors, as shown in \cite{xu2006sliding}. Another alternative is the $\tanh(\cdot)$, as explored in \cite{noordin2021sliding, nguyen2021adaptive, noordin2022position}. Compared to the $\operatorname{sat}(\cdot)$, $\tanh(\cdot)$ avoids abrupt changes near the boundary layer and preserves the differentiability of the control input without significantly increasing control complexity or computational overhead. Other strategies include using fuzzy logic to create smooth switching functions \cite{zare2022quadrotor, jing2019quadrotor} and higher-order SMC (HOSMC) \cite{munoz2017second, chandra2022higher}, which extends control to the derivative of the sliding surface; however, these methods introduce greater complexity and increase computational load.

(iii) The third direction focuses on the integration of SMC with adaptive control. 
Some studies employ adaptation laws to estimate vehicle parameters in real-time, updating the control law accordingly \cite{bouadi2011adaptive, huang2019robust, mofid2018adaptive}. A different line of research leverages universal approximators, such as neural networks and fuzzy systems \cite{yogi2024neural, nekoukar2021robust}, to approximate unknown system dynamics and improve control accuracy. The above approaches lead to complex controllers.
An alternative is to use the partial knowledge of the quadrotor model in the control law and employ adaptive switching gains that can dynamically adjust to the upper bounds of model uncertainties and external disturbances. This approach has been applied to boundary-layer SMC \cite{nadda2018adaptive}, TSMC \cite{lian2021adaptive}, and second-order SMC \cite{thanh2018quadcopter}. However, in all these studies, the adaptation laws have a structure similar to
\begin{equation}\label{eq:gain_saturation}
\dot{k} = \gamma \lvert s \rvert
\end{equation}
where $k$ is the switching gain, $\gamma > 0$ is the learning rate, and $s$ is the sliding surface. The issue with this formulation is that for all $s \neq 0$, $\dot{k} > 0$, which leads to the excessive growth of $k$ over time, making the control law impractical.

It is important to note that all the aforementioned SMC studies rely on a simplified model of rotational dynamics that is only valid for small roll and pitch angles. 
To elaborate, let ${\boldsymbol{\eta}}=\left[\varphi, \theta,\psi\right]^\top $, where $ - \pi  < \varphi  \le \pi $, $ - \frac{\pi }{2} \le \theta  \le \frac{\pi }{2}$, and $ - \pi  < \psi  \le \pi $ be the Euler angles representing pitch, roll, and yaw in the yaw-pitch-roll sequence,  ${\boldsymbol{\omega}}$ the angular velocity vector, ${\rm{\mathbf{J}}}=\diag\left(J_x, J_y, J_z \right)$ the inertia matrix, and $\boldsymbol{\tau}$ the moments around the principal axes.
Then, the quadrotor attitude dynamics takes the following form
\begin{equation}\label{eq:EulerKinematicalEquation}
    \dot {\boldsymbol{\eta}} = {\bf{H}}\left({\boldsymbol{\eta}}\right) {\boldsymbol{\omega}},
\end{equation}
where  
\begin{equation}\label{eq:H}
{\bf{H}}\left( {\boldsymbol{\eta }} \right) = \left[ {\begin{array}{*{20}{c}}
1&{\sin \varphi \tan \theta }&{\cos \varphi \tan \theta }\\
0&{\cos \varphi }&{ - \sin \varphi }\\
0&{{{\sin \varphi } \mathord{\left/
 {\vphantom {{\sin \varphi } {\cos \theta }}} \right.
 \kern-\nulldelimiterspace} {\cos \theta }}}&{{{\cos \varphi } \mathord{\left/
 {\vphantom {{\cos \varphi } {\cos \theta }}} \right.
 \kern-\nulldelimiterspace} {\cos \theta }}}
\end{array}} \right],
\end{equation}
and
\begin{equation}\label{eq:rotDyn}
    \dot{\boldsymbol{\omega}} = {\bf{J}}^{ - 1}\left( -{\boldsymbol{\omega}} \times {\bf{J}}{\boldsymbol{\omega}}+ \boldsymbol{\tau} \right),
\end{equation}
where $\times$ indicates the cross product.
Now, if $\varphi$ and $\theta$ are small, $\bf{H}$ can be greatly simplified such that $\dot{\boldsymbol{\eta}} \approx {\boldsymbol{\omega}}$. Subsequently, \eqref{eq:EulerKinematicalEquation}-\eqref{eq:rotDyn} can be simplified to a set of second-order differential equations
\begin{equation}\label{eq:simplified_rot_dyn}
\left\{ {\begin{array}{*{20}{l}}
{\ddot \varphi  = \dot \theta \dot \psi (\frac{{{J_y} - {J_z}}}{{{J_x}}})   + \frac{1}{{{J_x}}}{\tau_1}},\\
{\ddot \theta  = \dot \varphi \dot \psi (\frac{{{J_z} - {J_x}}}{{{J_y}}})   + \frac{1}{{{J_y}}}{\tau_2}},\\
{\ddot \psi  = \dot \varphi \dot \theta (\frac{{{J_x} - {J_y}}}{{{J_z}}}) + \frac{1}{{{J_z}}}{\tau_3}}.\\
\end{array}} \right.
\end{equation}
The above equations have been the basis of all the aforementioned SMC papers and many others in the existing literature.
While the particular second-order structure of \eqref{eq:simplified_rot_dyn} lends well to the SMC formulations, it is only valid for small $\varphi$ and $\theta$ values. 
Thus, as we will show in real flight experiments, the controller performance degrades with the increase in $\varphi$ or $\theta$.

Many of the above studies have only demonstrated simulation results, and those that explored hardware implementations \cite{munoz2017second, yu2021novel, rios2018continuous, shao2021adaptive, lian2021adaptive, yogi2024neural, nekoukar2021robust} do not address this simplification in their experiments. 
In this paper, we present attitude control experiments on a gimbal that will highlight this limitation of Euler-based approaches.

\subsubsection{SMC with quaternion-based rotational dynamics}
While quaternion-based controllers for quadrotors are abundant, sliding mode controllers in this framework remain relatively under-explored.
Relevant studies include integral SMC \cite{sanchez2013time}, TSMC \cite{serrano2023terminal}, and HOSMC \cite{arellano2015quaternion}.
Additionally, \cite{abaunza2019quadrotor} presents a quaternion-based SMC law for attitude stabilization of quadrotors during throw launches.

There exist several gaps in quaternion-based SMC approaches:
(i) As will be explained in Section \ref{se:control_design}, the existing controllers are prone to unwinding, which arises from the ambiguity of quaternions and can lead to longer-than-necessary rotations.
(ii) No adaptive gain mechanisms have been developed for these methods, meaning they require prior knowledge of the upper bounds of uncertainties and disturbances.
This complicates real-world implementation and tuning, especially in dynamic environments with significant disturbances.
(iii) There is a lack of comprehensive hardware experimentation and validation for these controllers. Of the mentioned studies, only \cite{abaunza2019quadrotor} includes hardware implementations, and those experiments focus solely on attitude stabilization.

It becomes evident that there is a need for an adaptive, unwinding-free quaternion-based sliding mode controller for quadrotors, whose effectiveness is experimentally demonstrated and verified. Our contributions directly address this gap. Focusing on simplicity and practicality, we develop a new SMC framework in its most basic form, which can be implemented in real-time even on the limited computing hardware of nano quadrotors. Moreover, this controller can serve as a foundation for more advanced SMC formulations, e.g. TSMC, or be integrated with optimization- and learning-based methods, e.g. learning-based SMC.
\section{Preliminaries}
\subsection{Notations} 
Throughout this paper, unless stated otherwise, we use the following notation standards.

Italic letters indicate scalars, lowercase bold letters represent vectors, and uppercase bold letters denote matrices. 
For a vector $\mathbf{v}$, $v_i$ refers to its $i$-th element.
For a matrix $\mathbf{M}$, $M_{ij}$ denotes the element in the $i$-th row and $j$-th column. 
$\lambda_{\max}(\mathbf{M})$ denotes the largest eigenvalue of $\mathbf{M}$.
$\tr(\mathbf{M})$ denotes the trace of matrix $\mathbf{M}$, defined as the sum of its diagonal elements.

We denote the 2-norm of a vector $\mathbf{v}$ as $\|\mathbf{v}\|$, the absolute value of $\mathbf{v}$ as $\lvert\mathbf{v}\rvert$, and the $s$-th Hamard power of $\mathbf{v}$ as $\mathbf{v}^{\circ s}$.
We represent the dot product between two vectors $\mathbf{u}$ and $\mathbf{v}$ as $\mathbf{u} \cdot \mathbf{v}$ or $\langle \mathbf{u}, \mathbf{v} \rangle$, their cross product as $\mathbf{u} \times \mathbf{v}$, their element-wise product as $\mathbf{u} \circ \mathbf{v}$, and their element-wise division as $\mathbf{u} \oslash \mathbf{v}$.

The two coordinate frames used in this paper include the inertial frame 
$\mathcal{F}_e = \left\{\mathbf{e}_1, \mathbf{e}_2, \mathbf{e}_3\right\}$, and the body frame $\mathcal{F}_b = \left\{\mathbf{b}_1, \mathbf{b}_2, \mathbf{b}_3 \right\}$, illustrated in Fig. \ref{fig:quadrotor_model}. 
The rotation matrix $\mathbf{R} \in \mathrm{SO}(3)$ describes the rotation from $\mathcal{F}_b$ to $\mathcal{F}_e$.
$\mathrm{SO}(3)$ refers to the special orthogonal group in three dimensions, with its associated Lie algebra $\mathfrak{so}(3)$. 
The tangent space at a point $\mathbf{R} \in \mathrm{SO}(3)$ is denoted by $T_{\mathbf{R}} \mathrm{SO}(3)$, and consists of matrices of the form $\mathbf{R} \boldsymbol{\omega}^\times$ for any $\boldsymbol{\omega} \in \mathbb{R}^3$, where $(\cdot)^\times : \mathbb{R}^3 \to \mathfrak{so}(3)$ maps a vector to a skew-symmetric matrix.
The inverse mapping $(\cdot)^\vee : \mathfrak{so}(3) \to \mathbb{R}^3$ retrieves the vector representation from a skew-symmetric matrix.


We represent unit quaternions as $\mathbf{q} = \left[q_w, \vec{\mathbf{q}}^\top \right]^\top  \in \mathbb{S}^3$, where $\mathbb{S}^3$ refers to 3-dimensional unit sphere.
We denote the conjugate of unit quaternion $\mathbf{q}$ by
$\mathbf{q}^*$, and the multiplication of two unit quaternions $\mathbf{q}$ and $\mathbf{p}$ by $\mathbf{q} \otimes \mathbf{p}$. 
For a unit quaternion $\mathbf{q}$, we define an operator $
L_\mathbf{q}:\mathbb{R}^3 \rightarrow \mathbb{R}^3$ as follows 
\begin{equation}\label{eq:L}
    L_\mathbf{q}(\mathbf{v}) = (q_w^2-\|\vec{\mathbf{q}}\|^2)\mathbf{v} + 2(\vec{\mathbf{q}}\cdot\mathbf{v})\vec{\mathbf{q}}+2 q_w (\vec{\mathbf{q}} \times \mathbf{v}).
\end{equation}
It is easy to verify that $L_\mathbf{q}$ is homogeneous, i.e. $L_\mathbf{q}(s\mathbf{v}) = sL_\mathbf{q}(\mathbf{v})$, and isometric, i.e. $\|L_\mathbf{q}(\mathbf{v})\|=\|\mathbf{v}\|$.

We occasionally use Euler angles in this paper, denoted as ${\boldsymbol{\eta}}=\left[\varphi, \theta,\psi\right]^\top $, where $ - \pi  < \varphi  \le \pi $, $ - \frac{\pi }{2} \le \theta  \le \frac{\pi }{2}$, and $ - \pi  < \psi  \le \pi $. These angles represent roll $\varphi$, pitch $\theta$, and yaw $\psi$ in the yaw-pitch-roll sequence.

The notation $f \in C^k$ denotes that function $f$ is $k$ times continuously differentiable.

We use the subscripts $d$ to indicate desired values, and the subscript $e$ to indicate errors. The hat symbol $\hat{\cdot}$ identifies nominal values, and the tilde symbol $\tilde{\cdot}$ identifies uncertain terms.
The bar and underline symbols, $\bar{\cdot}$ and $\underline{\cdot}$ represent the upper and lower bounds of uncertain terms.

\subsection{Quadrotor dynamics}\label{sec:Dynamics}
Let us consider the quadrotor model and coordinate frames $\mathcal{F}_e$ and $\mathcal{F}_b$ presented in Fig. \ref{fig:quadrotor_model}.
Let $\boldsymbol{\xi} = \left[x, y, z\right]^\top $ represent the position, $\boldsymbol{\nu} = \left[u,v,w\right]$ the velocity, $\mathbf{q} = \left[q_{w}, \Vec{\mathbf{q}}^\top  \right]^\top $ the attitude, and $\boldsymbol{\omega} = \left[P,Q,R\right]^\top $ the angular velocity. 
Then, the quadrotor flight dynamics takes the following form 
\begin{equation}\label{eq:quad_dyn}
    \left\{
\begin{array}{l}
    \dot {\boldsymbol{\xi}} = \boldsymbol{\nu} \\
    \dot{\boldsymbol{\nu}}= -g\mathbf{e}_{3} + \frac{1}{m} L_\mathbf{q}(f \mathbf{e}_{3})+ \mathbf{d}_a \\
    \dot{\mathbf{q}} = \frac{1}{2}\mathbf{q} \otimes \left[0, \boldsymbol{\omega}^\top \right]^\top \\
    \dot{\boldsymbol{\omega}} = \mathbf{J}^{-1} \left(-\boldsymbol{\omega} \times \mathbf{J}\boldsymbol{\omega}  + \boldsymbol{\tau} \right) + \mathbf{d}_{\alpha}
\end{array}
\right.
\end{equation}
where $g$ is the gravity, $m$ is the mass, $\mathbf{J}$ is the inertia matrix, $f$ is the thrust, $\boldsymbol{\tau}$ is the moments, and $\mathbf{d}_a$ and $\mathbf{d}_\alpha$ represent external disturbances and unmodeled dynamics.

Let $\mathbf{u} \in \mathbb{R}^4$ be the 
control input expressed by
\begin{equation}
    \mathbf{u} = c_t \boldsymbol{\varpi}^{\circ2},
\end{equation}
where $\boldsymbol{\varpi}$ is the vector encompassing the angular speed of rotors, and $c_t$ is the rotors' thrust coefficient.
Then, the control input maps to the force and moments applied to the quadrotor through
\begin{equation}\label{eq:G_1}
    \begin{bmatrix}
        f\\ \boldsymbol{\tau}
    \end{bmatrix}
    = \mathbf{G}\mathbf{u}
\end{equation}
where
\begin{equation}
    \mathbf{G} = \begin{bmatrix}
        1&1&1&1\\
        l\;\sin(\beta) & -l\;\sin(\beta)&-l\;\sin(\beta) & l\;\sin(\beta)\\
        -l\;\cos(\beta) & l\;\cos(\beta)&-l\;\cos(\beta) & l\;\sin(\beta)\\
        -c_q/c_t &-c_q/c_t &c_q/c_t &c_q/c_t 
    \end{bmatrix}.
\end{equation}
with $c_q$ being the rotors' torque coefficient, and $\beta$ and $l$  geometric parameters defined in Fig. \ref{fig:quadrotor_model}.

\begin{figure}
    \centering
    \includegraphics[trim=0cm 1cm 12cm 0cm,clip,width=0.7\linewidth]{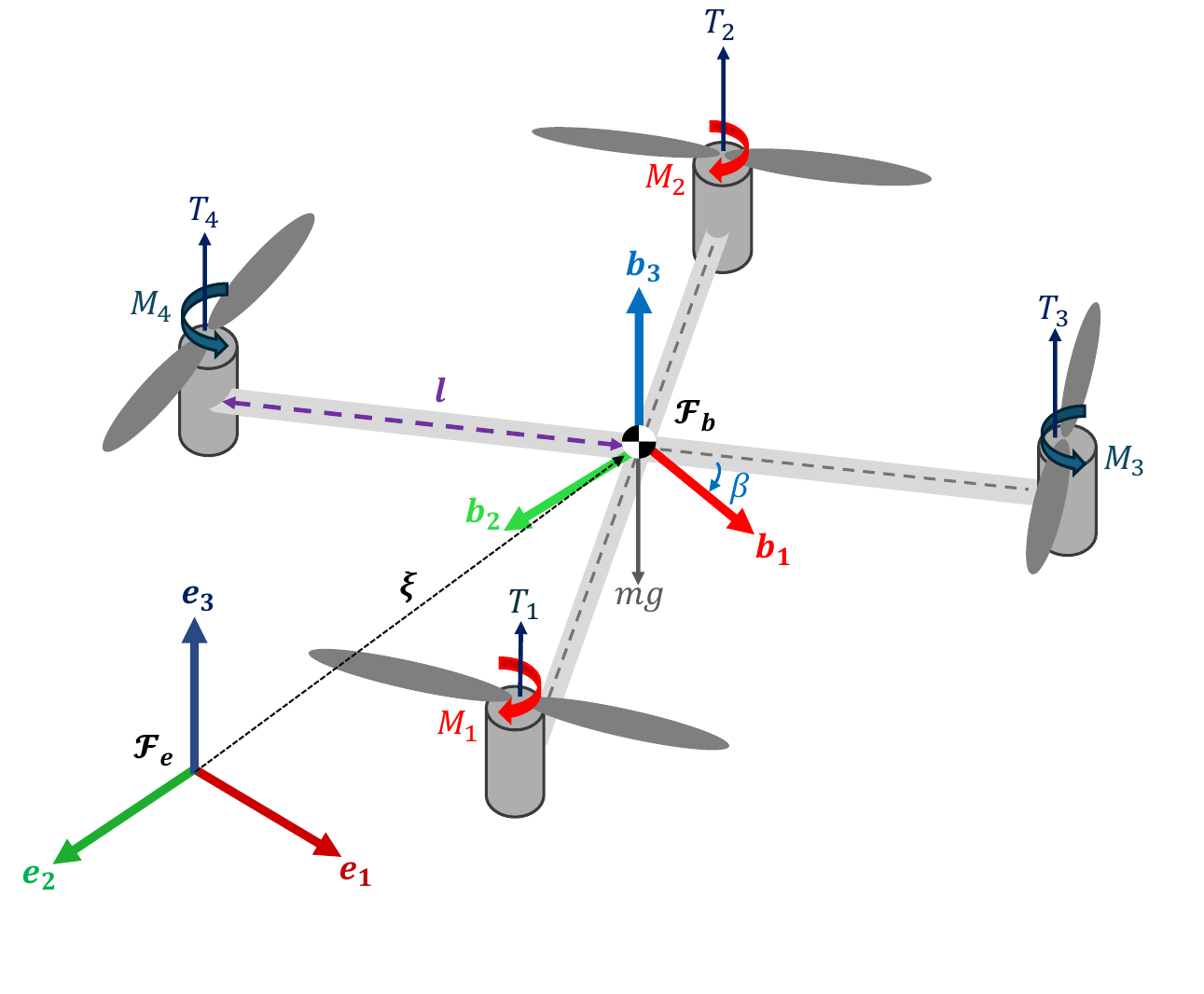}
    \caption{Quadrotor model and coordinate frames.}
    \label{fig:quadrotor_model}
\end{figure}

\subsection{Nonsmooth Stability Analysis}\label{se:nonsmooth_preliminaries}
To achieve unwinding-free performance, we rely on a sliding surface that is non-differentiable on a subset of $\mathbb{S}^3$, specifically where the scalar part 
of the error quaternion $q_{w_e}$ is zero.
Consequently, classical tools of Lyapunov stability theory, which rely on smoothness assumptions, are not directly applicable. 
Therefore, we adopt tools from nonsmooth analysis as explained below. 

\begin{definition}[Measure-zero set \cite{clarke2008nonsmooth}]
A set $ \mathcal{A} \subset \mathbb{R}^n $ is said to have measure zero if, for every $ \varepsilon > 0 $, there exists a countable collection of open balls $ \{\mathcal{B}_i\} $ such that
\begin{equation}
 \mathcal{A} \subset \bigcup_i \mathcal{B}_i \quad \text{and} \quad \sum_i \mathrm{vol}(\mathcal{B}_i) < \varepsilon,   
\end{equation}
where $\mathrm{vol}(\mathcal{B}_i)$ denotes the volume of ball $\mathcal{B}_i$.
\end{definition}

\begin{definition}[Clarke generalized gradient \cite{clarke2008nonsmooth}]
Let \( \mathbf{f}: \mathbb{R}^n \to \mathbb{R}^m \) be a locally Lipschitz function. The Clarke generalized gradient of \( \mathbf{f} \) at a point \( \mathbf{x} \in \mathbb{R}^n \), denoted \( \partial_C \mathbf{f}(\mathbf{x}) \), is defined as
\begin{equation}
\partial_C \mathbf{f}(\mathbf{x}) = \operatorname{co}\left\{ \lim_{i \to \infty} \nabla \mathbf{f}(\mathbf{x}_i) \;\middle|\; \mathbf{x}_i \to \mathbf{x},\; \mathbf{x}_i \notin \mathcal{N} \right\},
\end{equation}
where \( \operatorname{co} \) denotes the convex hull, and \( \mathcal{N} \subset \mathbb{R}^n \) is any measure-zero set. The index \( i \) traces a sequence \( \{\mathbf{x}_i\} \) of differentiability points of \( \mathbf{f} \), approaching \( \mathbf{x} \), with the limit taken over gradients \( \nabla \mathbf{f}(\mathbf{x}_i) \) at those nearby smooth points.
\end{definition}
Intuitively, if $\mathbf{f}$ is differentiable at $\mathbf{x}$ and continuously differentiable in a neighborhood of $\mathbf{x}$, then the Clarke generalized gradient reduces to the singleton $\partial_C \mathbf{f}(\mathbf{x}) = \{ \nabla \mathbf{f}(\mathbf{x}) \}$.
If it is not differentiable at $\mathbf{x}$, then $\partial_C \mathbf{f}(\mathbf{x})$ contains all limits of gradients at nearby differentiable points. The convex hull ensures that all such limiting gradients and their convex combinations are included, capturing all possible directional derivatives $\mathbf{f}$ can exhibit at $\mathbf{x}$.

The Clarke generalized gradient retains several properties of the classical gradient, such as linearity, the sum rule, and a generalized chain rule, which is particularly useful for Lyapunov stability analysis and is presented below.

\begin{lemma}[Clarke gradient chain rule \cite{clarke2008nonsmooth}]
Let $ f: \mathbb{R}^m \to \mathbb{R} $ be continuously differentiable and $ \mathbf{g}: \mathbb{R}^n \to \mathbb{R}^m $ be locally Lipschitz. Then the Clarke generalized gradient of the composition $ f \circ \mathbf{g} $ at a point $ \mathbf{x} \in \mathbb{R}^n $ satisfies
\begin{equation}\label{eq:c_chain_rule}
\partial_C (f \circ \mathbf{g})(\mathbf{x}) \subseteq \left\{ \langle \nabla  f(\mathbf{g}(\mathbf{x})), \boldsymbol{\iota} \rangle  \;\middle|\; \boldsymbol{\iota} \in \partial_C \mathbf{g}(\mathbf{x}) \right\}
\end{equation}
\end{lemma}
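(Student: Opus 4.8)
The plan is to argue directly from the definition of the Clarke generalized gradient as the convex hull of limiting classical gradients, exploiting that $f\in C^1$ so that the ordinary chain rule applies wherever $\mathbf{g}$ happens to be differentiable. First I would check well-posedness: on a bounded neighborhood $U$ of $\mathbf{x}$, the image $\mathbf{g}(U)$ lies in a compact set on which the continuous map $\nabla f$ is bounded, so $f$ is Lipschitz on $\mathbf{g}(U)$ and therefore $f\circ\mathbf{g}$, being a composition of Lipschitz maps, is locally Lipschitz; hence $\partial_C(f\circ\mathbf{g})(\mathbf{x})$ is defined, and by Rademacher's theorem $\mathbf{g}$ and $f\circ\mathbf{g}$ are differentiable almost everywhere on $U$, so the sequences appearing in the definition of $\partial_C$ exist.

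Next I would show that every limiting gradient of $f\circ\mathbf{g}$ at $\mathbf{x}$ lands in the right-hand set. Let $\Omega\subseteq U$ be the full-measure set where $\mathbf{g}$ is differentiable. For $\mathbf{y}\in\Omega$ the classical chain rule gives $\nabla(f\circ\mathbf{g})(\mathbf{y})=\nabla\mathbf{g}(\mathbf{y})^\top\nabla f(\mathbf{g}(\mathbf{y}))$, where $\nabla\mathbf{g}(\mathbf{y})$ is the Jacobian of $\mathbf{g}$; in particular $f\circ\mathbf{g}$ is also differentiable on $\Omega$. Since the complement of $\Omega$ is a null set, in evaluating $\partial_C(f\circ\mathbf{g})(\mathbf{x})$ we may restrict to sequences $\mathbf{x}_i\to\mathbf{x}$ with $\mathbf{x}_i\in\Omega$ — this is precisely where the freedom to delete an arbitrary measure-zero set $\mathcal{N}$ in the definition of $\partial_C$ is used. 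For such a sequence the local Lipschitz bound on $\mathbf{g}$ makes $\{\nabla\mathbf{g}(\mathbf{x}_i)\}$ uniformly bounded, so a subsequence converges, $\nabla\mathbf{g}(\mathbf{x}_i)\to\boldsymbol{\iota}$, and by definition $\boldsymbol{\iota}\in\partial_C\mathbf{g}(\mathbf{x})$; meanwhile continuity of $\mathbf{g}$ and of $\nabla f$ yields $\nabla f(\mathbf{g}(\mathbf{x}_i))\to\nabla f(\mathbf{g}(\mathbf{x}))$. Passing to the limit in the chain-rule identity gives $\nabla(f\circ\mathbf{g})(\mathbf{x}_i)\to\langle\nabla f(\mathbf{g}(\mathbf{x})),\boldsymbol{\iota}\rangle$, the bracket here encoding the matrix–vector product $\boldsymbol{\iota}^\top\nabla f(\mathbf{g}(\mathbf{x}))$, so every such limiting gradient lies in $S:=\{\langle\nabla f(\mathbf{g}(\mathbf{x})),\boldsymbol{\iota}\rangle:\boldsymbol{\iota}\in\partial_C\mathbf{g}(\mathbf{x})\}$.

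Finally I would close the argument by convexity. The generalized Jacobian $\partial_C\mathbf{g}(\mathbf{x})$ is a convex, compact set, so its image $S$ under the linear map $\boldsymbol{\iota}\mapsto\langle\nabla f(\mathbf{g}(\mathbf{x})),\boldsymbol{\iota}\rangle$ is again convex. Because $\partial_C(f\circ\mathbf{g})(\mathbf{x})$ is by definition the convex hull of the limiting gradients, each of which was just shown to lie in the convex set $S$, we obtain $\partial_C(f\circ\mathbf{g})(\mathbf{x})\subseteq S$, which is exactly \eqref{eq:c_chain_rule}.

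The step I expect to be the main obstacle is the measure-theoretic bookkeeping in the second paragraph: one must verify that the classical chain rule is valid precisely on the differentiability set of $\mathbf{g}$, that this set coincides with the differentiability set of $f\circ\mathbf{g}$ up to a null set, and that restricting the defining sequences to it leaves $\partial_C(f\circ\mathbf{g})(\mathbf{x})$ unchanged. Once that is in place, the subsequence extraction via the uniform Lipschitz bound and the observation that a linear image of a convex set is convex are routine.
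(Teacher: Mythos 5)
The paper does not prove this lemma at all---it is quoted from Clarke's monograph \cite{clarke2008nonsmooth} as a known result---so there is no in-paper argument to compare against. Your proof is a correct, self-contained rendition of the standard argument for the smooth-outer-function case of Clarke's chain rule: the classical chain rule on the full-measure differentiability set of $\mathbf{g}$, a uniform bound on $\nabla\mathbf{g}$ from the Lipschitz constant to extract a convergent subsequence landing in $\partial_C\mathbf{g}(\mathbf{x})$, continuity of $\nabla f$ to pass to the limit, and convexity of the linear image of the (convex, compact) generalized Jacobian to absorb the convex hull. The one step that carries real weight is exactly the one you flag: discarding the null set where $\mathbf{g}$ fails to be differentiable without changing $\partial_C(f\circ\mathbf{g})(\mathbf{x})$. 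Within this paper that is legitimate because Definition~2 already permits an arbitrary measure-zero set $\mathcal{N}$ and asserts the result is independent of that choice (itself a nontrivial fact proved in Clarke's book); since you only need the inclusion in one direction, invoking that independence is enough. A cosmetic remark: when you extract the subsequence along which $\nabla\mathbf{g}(\mathbf{x}_i)\to\boldsymbol{\iota}$, you should note explicitly that the subsequential limit of $\nabla(f\circ\mathbf{g})(\mathbf{x}_i)$ must coincide with the full-sequence limit that defines the element of $\partial_C(f\circ\mathbf{g})(\mathbf{x})$ under consideration; this is immediate but is the sentence that actually places that element in $S$.
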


To interpret \eqref{eq:c_chain_rule}, recall the classic chain rule of smooth functions
$\nabla(f \circ \mathbf{g})(\mathbf{x}) = \nabla \mathbf{g}(\mathbf{x})^\top \nabla f(\mathbf{g}(\mathbf{x})) = \left\langle \nabla f(\mathbf{g}(\mathbf{x})), \nabla \mathbf{g}(\mathbf{x}) \right\rangle$.
When $\mathbf{g}$ is nonsmooth, the same principle applies, but since the directional derivative may not be unique, we account for all possible limiting gradients, leading to a set-valued generalized gradient, where the inner product is computed between $\nabla f(\mathbf{g}(\mathbf{x}))$ and each element of $\partial_C \mathbf{g}(\mathbf{x})$. This property becomes useful in our analysis where we have a Lyapunov function of a nonsmooth sliding surface.

\begin{theorem} [Nonsmooth Lyapunov stability \cite{shevitz1994lyapunov}]
Consider the system \( \dot{\mathbf{x}} = \mathbf{f}(\mathbf{x}) \), where \( \mathbf{f}: \mathbb{R}^n \to \mathbb{R}^n \) is locally Lipschitz and \( \mathbf{f}(\mathbf{x}^\star) = \mathbf{0} \). Let \( V: \mathbb{R}^n \to \mathbb{R}_{\ge0} \) be a locally Lipschitz, positive definite function. Define the generalized directional derivative of \( V \) as
\begin{equation}\label{eq:nonsmooth_vdot_preliminaries}
    \dot{V}^{\circ}(\mathbf{x}) = \max \{   \langle \boldsymbol{\varsigma}, \mathbf{f}(\mathbf{x}) \rangle :{\boldsymbol{\varsigma} \in \partial_C V(\mathbf{x})}\}.
\end{equation}
If \( \dot{V}^{\circ}(\mathbf{x}) \leq 0 \) for all \( \mathbf{x} \), then $\mathbf{x}^\star$ is Lyapunov stable. If \( \dot{V}^{\circ}(\mathbf{x}) < 0 \) for all \( \mathbf{x} \neq 0 \), then $\mathbf{x}^\star$ is asymptotically stable.
\end{theorem}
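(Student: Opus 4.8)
The plan is to run the classical Lyapunov direct-method argument, replacing the chain rule for $C^1$ Lyapunov functions by its nonsmooth counterpart. Since $\mathbf{f}$ is locally Lipschitz, classical ODE theory gives, for each initial condition, a unique $C^1$ solution $\mathbf{x}(\cdot)$ on a maximal interval. The central object is $\phi(t) := V(\mathbf{x}(t))$. As $V$ is locally Lipschitz and $\mathbf{x}(\cdot)$ is $C^1$, the composition $\phi$ is locally Lipschitz in $t$, hence absolutely continuous and differentiable for almost every $t$. The first key step is to show that, at every $t$ where $\dot{\phi}(t)$ exists, $\dot{\phi}(t) \le \dot{V}^{\circ}(\mathbf{x}(t))$. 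This follows from the chain rule for the composition of a locally Lipschitz map with a (strictly) differentiable curve, which gives $\partial_C \phi(t) \subseteq \{\langle \boldsymbol{\varsigma}, \dot{\mathbf{x}}(t) \rangle : \boldsymbol{\varsigma} \in \partial_C V(\mathbf{x}(t))\}$, combined with the fact that $\dot{\phi}(t) \in \partial_C \phi(t)$ at points of differentiability; substituting $\dot{\mathbf{x}}(t) = \mathbf{f}(\mathbf{x}(t))$ and taking the maximum over $\boldsymbol{\varsigma}$ yields the inequality in \eqref{eq:nonsmooth_vdot_preliminaries}.

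For Lyapunov stability, assume $\dot{V}^{\circ}(\mathbf{x}) \le 0$ for all $\mathbf{x}$. Then $\dot{\phi}(t) \le 0$ for almost every $t$, and since $\phi$ is absolutely continuous, $\phi(t) \le \phi(0)$ for all $t \ge 0$; i.e. $V$ is nonincreasing along solutions. The standard sublevel-set argument then applies: given $\varepsilon > 0$, continuity and positive definiteness of $V$ let me choose $0 < \delta \le \varepsilon$ with $\sup_{\|\mathbf{x} - \mathbf{x}^\star\| \le \delta} V(\mathbf{x}) < \inf_{\|\mathbf{x} - \mathbf{x}^\star\| = \varepsilon} V(\mathbf{x})$; monotonicity of $\phi$ keeps every solution starting in $\|\mathbf{x}(0) - \mathbf{x}^\star\| \le \delta$ inside $\|\mathbf{x}(t) - \mathbf{x}^\star\| < \varepsilon$ for all $t \ge 0$, which is Lyapunov stability (and, since such solutions remain bounded, forward completeness of nearby trajectories).

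For asymptotic stability, assume $\dot{V}^{\circ}(\mathbf{x}) < 0$ for all $\mathbf{x} \neq \mathbf{x}^\star$. Stability is already established, so fix a solution with small initial condition; $\phi$ is nonincreasing and bounded below by $0$, hence $\phi(t) \downarrow c$ for some $c \ge 0$. Suppose $c > 0$. By positive definiteness and continuity of $V$ there is an annulus $\mathcal{A} = \{\, r_1 \le \|\mathbf{x} - \mathbf{x}^\star\| \le r_2 \,\}$ with $r_1 > 0$ in which the trajectory stays for all $t \ge 0$ (the lower bound from $\phi(t) \ge c > 0$, the upper bound from stability). The map $\mathbf{x} \mapsto \dot{V}^{\circ}(\mathbf{x})$ is upper semicontinuous, because $\mathbf{x} \mapsto \partial_C V(\mathbf{x})$ is an upper semicontinuous set-valued map with nonempty compact values and $\mathbf{f}$ is continuous; hence it attains a maximum $-\mu < 0$ on the compact set $\mathcal{A}$. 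Then $\dot{\phi}(t) \le -\mu$ for almost every $t$, so $\phi(t) \le \phi(0) - \mu t \to -\infty$, contradicting $\phi \ge 0$. Therefore $c = 0$, and since $V$ is positive definite and continuous, $\phi(t) \to 0$ forces $\mathbf{x}(t) \to \mathbf{x}^\star$, establishing asymptotic stability.

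I expect the main obstacle to be the first step, namely making $\dot{\phi}(t) \le \dot{V}^{\circ}(\mathbf{x}(t))$ rigorous: the Clarke chain rule stated in the excerpt is for a $C^1$ outer function, whereas here the nonsmooth function $V$ sits on the outside, so one must invoke the version valid for a locally Lipschitz map composed with a strictly differentiable curve; and one must be careful with the subtlety that a locally Lipschitz scalar function can be differentiable at a point while its Clarke gradient there is still a nondegenerate interval, so the correct statement is $\dot\phi(t) = \langle \boldsymbol{\varsigma}^\ast, \mathbf{f}(\mathbf{x}(t))\rangle$ for \emph{some} $\boldsymbol{\varsigma}^\ast \in \partial_C V(\mathbf{x}(t))$, which still suffices after bounding by the maximum. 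The upper semicontinuity of $\dot{V}^{\circ}$ used in the asymptotic part is a secondary but standard technical point.
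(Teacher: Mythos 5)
Your proposal is correct and follows exactly the route the paper itself points to: the paper offers no proof of this cited result (Theorem 1 of \cite{shevitz1994lyapunov}), remarking only that the nonsmooth proof mirrors the smooth Lyapunov argument with relations holding almost everywhere, which is precisely what you carry out via the a.e.\ bound $\dot\phi(t)\le \dot V^{\circ}(\mathbf{x}(t))$, absolute continuity of $\phi=V\circ\mathbf{x}$, and the compact-annulus/upper-semicontinuity contradiction for the asymptotic case. Your flagged subtleties (the chain rule needs the Lipschitz-outer/$C^1$-inner version rather than the paper's Lemma~1, and $\dot\phi(t)$ need only be \emph{some} element of the generalized inner products) are exactly the right ones, and the argument as sketched is sound.
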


The intuition behind Theorem 1 is that when the Lyapunov function \( V \) is nonsmooth, its rate of change along system trajectories is not uniquely defined. Therefore, we compute the Clarke generalized gradient, which captures all possible limiting gradients. The generalized time derivative \( \dot{V}^{\circ}(\mathbf{x}) \) is then defined as the maximum of inner products \( \langle \boldsymbol{\varsigma}, f(\mathbf{x}) \rangle \) over all \( \boldsymbol{\varsigma} \in \partial_C V(\mathbf{x}) \). If even the worst-case rate of change is nonpositive (or strictly negative), \( V \) is guaranteed to decrease (or strictly decrease), implying Lyapunov (or asymptotic) stability of the equilibrium at the origin.

There is a direct analogue between stability results for smooth and nonsmooth systems \cite{shevitz1994lyapunov, bacciotti1999stability}. As stated in \cite{shevitz1994lyapunov}, the proofs are identical to their smooth counterparts except that some relations hold \textit{almost everywhere}, due to measure-zero sets, instead of everywhere.

\section{Problem Statement}
Let us begin with defining the tracking error terms as follows
\begin{equation}\label{eq:error_definition}
\begin{aligned}
\boldsymbol{\xi}_e &= \boldsymbol{\xi} - \boldsymbol{\xi}_d, \quad
\boldsymbol{\nu}_e = \boldsymbol{\nu} - \boldsymbol{\nu}_d, \\
\mathbf{q}_e &= \mathbf{q}_d^* \otimes \mathbf{q}, \quad
\boldsymbol{\omega}_e = \boldsymbol{\omega} - \boldsymbol{\omega}_d.
\end{aligned}
\end{equation}

Using the quadrotor dynamics \eqref{eq:quad_dyn}, the error dynamics becomes
\begin{equation}\label{eq:err_dyn}
\begin{cases}
   \dot{\boldsymbol{\xi}}_e = \boldsymbol{\nu} - \dot{\boldsymbol{\xi}}_d, \\
   \dot{\boldsymbol{\nu}}_e     =  -g\mathbf{e}_{3} + \frac{1}{m} L_\mathbf{q}(f \mathbf{e}_{3}) + {\mathbf{d}}_a -\ddot{\boldsymbol{\xi}}_d, \\
      \dot{\mathbf{q}}_e =  \frac{1}{2}\left[
            -\Vec{\mathbf{q}}_e \cdot \boldsymbol{\omega}_e, \left(q_{w_e}\boldsymbol{\omega}_e+\Vec{\mathbf{q}}_e \times \boldsymbol{\omega}_e\right)^\top 
        \right]^\top ,
   \\
    \dot{\boldsymbol{\omega}}_e = \mathbf{J}^{-1} \left(-\boldsymbol{\omega} \times \mathbf{J}\boldsymbol{\omega}  + \boldsymbol{\tau} \right)+ {\mathbf{d}}_{\alpha}  - \dot{\boldsymbol{\omega}}_d.
\end{cases}
\end{equation}

\noindent \textbf{Problem Statement:} \textit{Determine $f$ and $\boldsymbol{\tau}$ such that the tracking errors in~\eqref{eq:error_definition} are uniformly ultimately bounded within an arbitrarily small neighborhood of the origin. The control law must ensure unwinding-free attitude tracking and incorporate adaptive gains that can both increase and decrease in response to the system’s behavior, given the following assumptions.}

\begin{assumption}\label{as:trajectory}
The desired trajectory is given by the desired position $\boldsymbol{\xi}_d(t) \in C^4$ and the desired yaw angle $\psi_d(t) \in C^2$. That is, there exist positive constants $B_{\xi_i}$ and $B_{\psi_i}$ such that
\begin{equation}
\begin{aligned}
    \big\| \boldsymbol{\xi}_d^{(i)}(t) \big\| &\le B_{\xi_i}, \quad i = 0,\dots,4, \\
    \big| \psi_d^{(i)}(t) \big| &\le B_{\psi_i}, \quad i = 0,1,2.
\end{aligned}
\end{equation}

\end{assumption}

\begin{assumption}\label{as:model}
$ m $ and $ \mathbf{J} $ are partially known, i.e., $ m = \hat{m} + \tilde{m} $ and $ \mathbf{J} = \hat{\mathbf{J}} + \tilde{\mathbf{J}} $, where $ \hat{m} $ and $ \hat{\mathbf{J}} $ are known nominal values, and $\tilde{m}$ and $\tilde{\mathbf{J}}$ are unknown but bounded.
\end{assumption}

\begin{assumption}\label{as:disturbance}
$ \mathbf{d}_a $ and $ \mathbf{d}_\alpha$ are unknown but bounded. That is, there exist two vectors $\bar{\mathbf{d}}_a$ and $\bar{\mathbf{d}}_\alpha$ with positive elements such that $ |\mathbf{d}_a| \le \bar{\mathbf{d}}_a $ and $ |\mathbf{d}_\alpha| \le \bar{\mathbf{d}}_\alpha $. The absolute value $ |\cdot| $ is interpreted element-wise.
\end{assumption}

\begin{assumption}\label{as:known_bounds}
    $ \bar{\mathbf{d}}_a $ and $ \bar{\mathbf{d}}_\alpha $ are known. (This assumption is not required for our adaptive control results.)
\end{assumption}
We break down the above problem into three parts, and present three theorems in the subsequent sections to establish our results, as follows:
\begin{enumerate}[label=\roman*.]
    \item First, we deal with attitude dynamics, presenting the first unwinding-free quaternion-based SMC law whose stability is established via nonsmooth Lyapunov stability analysis, extending the results of \cite{lopez2020sliding}.
    \item Second, we expand our results to the 6-DOF flight, introducing the first quaternion-based SMC framework that simultaneously applies SMC to both translational and rotational dynamics, with a formal guarantee of stability.
    \item Third, we augment our 6-DOF flight controller with new adaptation laws that enable automatic adjustment, both increase and decrease, of the switching gains in response to system behavior and disturbance levels, with global uniform ultimate boundedness results. 
\end{enumerate}
Hereafter, we refer to our quaternion-based SMC law as QSMC, and its adaptive variant as AQSMC.
\section{Attitude Control}\label{se:control_design}
To design our QSMC law for attitude control, we adopt the sliding surface introduced in \cite{lopez2020sliding}; however, we extend the results of \cite{lopez2020sliding} in several key directions:
\begin{enumerate}[label=\roman*.]
    \item As will be detailed shortly, the sliding surface is nondifferentiable on
    \begin{equation}\label{eq:N_s_q}
    \mathcal{N}_{\mathbf{s}_{q}} = \left\{ \mathbf{q}_e \in \mathbb{S}^3 \;\middle|\; q_{w_e} = 0,\; \|\Vec{\mathbf{q}}_e\| = 1 \right\}.
    \end{equation}
    Proposition 1 in \cite{lopez2020sliding} shows that $\mathcal{N}_{\mathbf{s}_{q}}$ is not invariant and poses no stability threat. However, the Lyapunov candidates used to prove stability in Theorems 1 -- 3 of \cite{lopez2020sliding} remain nondifferentiable on $\mathcal{N}_{\mathbf{s}_{q}}$, so their classical time derivatives are not defined there. We leverage nonsmooth stability analysis to rigorously address this issue for the first time.
    \item We introduce novel adaptation mechanisms for the switching gains of control law. These adaptive gains proved to be instrumental in achieving significantly higher agility and disturbance rejection, while also reducing control effort. They eliminate the need to know the upper bounds of uncertainties, greatly simplifying controller calibration and practical implementation.
    \item The results in \cite{lopez2020sliding} are limited to attitude control. Integrating quaternion-based sliding mode attitude control with sliding mode position control for quadrotors is nontrivial and has not been addressed in prior work \cite{sanchez2013time, serrano2023terminal, arellano2015quaternion, abaunza2019quadrotor}. We address this integration challenge in the following sections.
    \item Finally, we implement the proposed control strategy on a quadrotor and validate its practical effectiveness and robustness through hardware experiments, for the first time.
\end{enumerate}

To begin with the controller design, we adopt the following sliding surface
\begin{equation}\label{eq:s_att}
    \mathbf{s}_q = \boldsymbol{\omega}_e + \boldsymbol{\Lambda}_q \operatorname{sgn}_{+}(q_{w_e}) \Vec{\mathbf{q}}_e,
\end{equation}
where $\boldsymbol{\Lambda}_q = \mathrm{diag}({\Lambda}_{q_{ii}})$ is a design parameter and
\begin{equation}
    \operatorname{sgn}_{+}(\cdot) = 
    \begin{cases}
        1 & \text{if } \cdot \ge 0, \\
        -1 & \text{if } \cdot < 0.
    \end{cases}
\end{equation}
The attitude control law built upon \eqref{eq:s_att} takes the following form
\begin{equation}\label{eq:att_ctrl_law}
\begin{aligned}
    \boldsymbol{\tau} =\;& \hat{\mathbf{J}} \dot{\boldsymbol{\omega}}_d + \boldsymbol{\omega} \times \hat{\mathbf{J}} \boldsymbol{\omega} - \hat{\mathbf{J}} \boldsymbol{\Lambda}_q \operatorname{sgn}_{+}(q_{w_e}) \dot{\Vec{\mathbf{q}}}_e \\
    & - \hat{\mathbf{J}} \mathbf{K}_q \tanh\left( \mathbf{s}_q \oslash \boldsymbol{\phi}_q \right),
\end{aligned}
\end{equation}
where $\mathbf{K}_q = \mathrm{diag}(K_{q_{ii}})$ and $\boldsymbol{\phi}_q$ are design parameters. 

The term $\sgn_{+}(q_{w_e})$ is the key to handling the unwinding problem. 
As the quaternions $\mathbf{q}$ and $-\mathbf{q}$ represent the same physical orientation, $\operatorname{sgn_{+}}(q_{w_e})$ ensures the consistency of the unit quaternion sign in the control law.
To clarify further, let us assume the current vehicle attitude is $\mathbf{q} = [0.998, 0.02, 0, 0]^\top $ and the desired attitude is $\mathbf{q}_d = [0.999, 0.01, 0, 0]^\top $, indicating the need for a small correction in the vehicle orientation. Using the attitude error definition \eqref{eq:err_dyn} gives $\mathbf{q}_e = \mathbf{q}_d^\star \otimes \mathbf{q} = [-0.999, -0.001, 0, 0]^\top $, which suggests an unnecessarily large rotation.
Applying $\operatorname{sgn_{+}}(q_{w_e})$ results in $\sgn_{+}(q_{w_e}) \mathbf{q}_e = [0.999, 0.001, 0, 0]^\top $, which accurately reflects the intended small adjustment, thus resolving the unwinding issue.


However, $\mathbf{s}_q$ is not differentiable at $q_{w_e} = 0$. 
Taking the derivative of \eqref{eq:s_att} leads to
\begin{equation}\label{eq:s_q_dot_with_dirac}
    \dot{\mathbf{s}}_q = \dot{\boldsymbol{\omega}}_{e} + \boldsymbol{\Lambda}_{q} \sgn_+(q_{w_e})\dot{\Vec{\mathbf{q}}}_e + \boldsymbol{\Lambda}_{q} 2\delta(q_{w_e})\dot{q}_{w_e}\Vec{\mathbf{q}}_e,
\end{equation}
where $\delta(\cdot)$ is the unit impulse which captures the jump discontinuity at $q_{w_e} = 0$ and reflects the fact that $\mathbf{s}_q$ is not classically differentiable on the measure-zero set \eqref{eq:N_s_q}.

Handling such discontinuities is challenging within the framework of classical Lyapunov theory. Moreover, the set $\mathcal{N}_{\mathbf{s}_{q}}$ corresponds to non-equilibrium points, preventing the direct application of tools such as LaSalle’s invariance principle \cite{khalil2002nonlinear}. This subtlety is not addressed in \cite{lopez2020sliding}. We close this gap by presenting the following theorem based on nonsmooth stability analysis given in Section \ref{se:nonsmooth_preliminaries}.

\begin{theorem}[QSMC attitude control]
Consider $\mathbf{q}_e$ and $\boldsymbol{\omega}_e$ defined in \eqref{eq:error_definition}, with dynamics governed by \eqref{eq:err_dyn}, under Assumptions \ref{as:trajectory}--\ref{as:known_bounds}. Set the sliding surface as \eqref{eq:s_att}, and control law as \eqref{eq:att_ctrl_law}.
Define the lumped uncertainty bound $\bar{\boldsymbol{\delta}}_q$ as
\begin{equation}\label{eq:delta_q_bar}
    \bar{\boldsymbol{\delta}}_q = |\Tilde{\mathbf{J}} \dot{\boldsymbol{\omega}}_d| + |\boldsymbol{\omega} \times \Tilde{\mathbf{J}} \boldsymbol{\omega}| + |\Tilde{\mathbf{J}} \boldsymbol{\Lambda}_q \dot{\Vec{\mathbf{q}}}_e|.
\end{equation}
Suppose the following conditions are satisfied for all $i = 1,2,3$, 
\begin{equation}\label{eq:gain_condition}
\begin{aligned}
&\Lambda_{q_{ii}} > 0,\quad \phi_{q_i} > 0,\quad \pi_{q_i} > 0, \\
&K_{q_{ii}} = \frac{1}{\hat{J}_{ii}} \left( \bar{\delta}_{q_i} + \bar{d}_{\alpha_i} + \pi_{q_i} \right),
\end{aligned}
\end{equation}
where $\pi_{q_i}$ is a design parameter.
Then, $\mathbf{q}_e$ and $\boldsymbol{\omega}_e$ are both globally uniformly ultimately bounded.
\end{theorem}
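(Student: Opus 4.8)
The plan is a two-stage sliding-mode argument: a \emph{reaching stage} that drives $\mathbf{s}_q$ into a thin layer around the origin, followed by a \emph{reduced-order stage} on $\mathbb{S}^3$, both carried out with the nonsmooth tools of Section~\ref{se:nonsmooth_preliminaries} so that the discontinuity of $\mathbf{s}_q$ on the measure-zero set $\mathcal{N}_{\mathbf{s}_q}$ in \eqref{eq:N_s_q} is handled rigorously. First I would obtain the closed-loop sliding dynamics: away from $\mathcal{N}_{\mathbf{s}_q}$ the factor $\operatorname{sgn}_{+}(q_{w_e})$ is locally constant, so the impulsive term in \eqref{eq:s_q_dot_with_dirac} vanishes and $\dot{\mathbf{s}}_q = \dot{\boldsymbol{\omega}}_e + \boldsymbol{\Lambda}_q\operatorname{sgn}_{+}(q_{w_e})\dot{\Vec{\mathbf{q}}}_e$. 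Substituting $\dot{\boldsymbol{\omega}}_e$ from \eqref{eq:err_dyn}, the control law \eqref{eq:att_ctrl_law}, and $\mathbf{J}=\hat{\mathbf{J}}+\tilde{\mathbf{J}}$, the nominal feedforward and gyroscopic terms cancel and the residual $\tilde{\mathbf{J}}$-dependent terms regroup (the one multiplying $\dot{\mathbf{s}}_q$ being absorbed back on the left-hand side, which is exactly why $\bar{\boldsymbol{\delta}}_q$ in \eqref{eq:delta_q_bar} has its stated form), leaving the componentwise form $\dot s_{q_i} = -K_{q_{ii}}\tanh(s_{q_i}/\phi_{q_i}) + \hat J_{ii}^{-1}(\delta_{q_i}+d_{\alpha_i}')$ with $|\delta_{q_i}|\le\bar\delta_{q_i}$ (by \eqref{eq:delta_q_bar} and Assumption~\ref{as:model}, using $|\operatorname{sgn}_{+}(\cdot)|=1$) and $|d_{\alpha_i}'|\le\bar d_{\alpha_i}$ (by Assumption~\ref{as:disturbance}).

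For the \emph{reaching stage} I would use the $\boldsymbol{\Lambda}_q^{-1}$-weighted Lyapunov function $V_q = \tfrac12\,\mathbf{s}_q^\top\boldsymbol{\Lambda}_q^{-1}\mathbf{s}_q$. Off $\mathcal{N}_{\mathbf{s}_q}$ it is smooth, and with the gain choice $\hat J_{ii}K_{q_{ii}}=\bar\delta_{q_i}+\bar d_{\alpha_i}+\pi_{q_i}$ from \eqref{eq:gain_condition} each term of $\dot V_q$ is strictly negative once $|s_{q_i}|>\phi_{q_i}\tanh^{-1}\!\big((\bar\delta_{q_i}+\bar d_{\alpha_i})/(\bar\delta_{q_i}+\bar d_{\alpha_i}+\pi_{q_i})\big)=:\rho_{q_i}$, and is bounded by a $\rho_{q_i}$-proportional constant inside that layer. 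At a crossing of $\mathcal{N}_{\mathbf{s}_q}$ the sign flip changes $V_q$ by $-4\,|\dot q_{w_e}|\le 0$ (this is precisely what the $\boldsymbol{\Lambda}_q^{-1}$ weighting buys), i.e.\ $V_q$ can only jump \emph{downward}, while $\boldsymbol{\omega}_e$ and $\mathbf{q}_e$ themselves stay continuous. Since $\mathcal{N}_{\mathbf{s}_q}$ has measure zero and is not invariant (Proposition~1 of \cite{lopez2020sliding}), the decrease of $V_q$ holds for almost all $t$; invoking the nonsmooth Lyapunov theorem (Theorem~1) in its ultimate-boundedness form — with the Clarke generalized gradient furnishing the admissible one-sided derivatives on $\mathcal{N}_{\mathbf{s}_q}$ — gives that $\mathbf{s}_q$ is uniformly ultimately bounded into $\{\,|s_{q_i}|\le\rho_{q_i}\ \forall i\,\}$ and bounded for all $t\ge 0$. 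Consequently $\boldsymbol{\omega}_e=\mathbf{s}_q-\boldsymbol{\Lambda}_q\operatorname{sgn}_{+}(q_{w_e})\Vec{\mathbf{q}}_e$ with $\|\Vec{\mathbf{q}}_e\|\le 1$ is bounded, hence so is $\boldsymbol{\omega}$, and therefore the state-dependent bound $\bar{\boldsymbol{\delta}}_q$ and each $K_{q_{ii}}$ stay bounded along trajectories, ruling out finite escape.

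For the \emph{reduced-order stage}, once $\|\mathbf{s}_q\|\le\rho$ I would substitute $\boldsymbol{\omega}_e=\mathbf{s}_q-\boldsymbol{\Lambda}_q\operatorname{sgn}_{+}(q_{w_e})\Vec{\mathbf{q}}_e$ into the quaternion-error kinematics in \eqref{eq:err_dyn}. With the \emph{smooth} function $V_{q_e}=\|\Vec{\mathbf{q}}_e\|^2=1-q_{w_e}^2$ the cross term drops and $\dot V_{q_e}=q_{w_e}\,\Vec{\mathbf{q}}_e\!\cdot\!\mathbf{s}_q-|q_{w_e}|\,\Vec{\mathbf{q}}_e^\top\boldsymbol{\Lambda}_q\Vec{\mathbf{q}}_e\le|q_{w_e}|\,\|\Vec{\mathbf{q}}_e\|\,(\rho-\underline{\Lambda}_q\|\Vec{\mathbf{q}}_e\|)$ with $\underline{\Lambda}_q=\min_i\Lambda_{q_{ii}}$, so $\dot V_{q_e}<0$ whenever $\|\Vec{\mathbf{q}}_e\|>\rho/\underline{\Lambda}_q$, from \emph{any} $\mathbf{q}_e\in\mathbb{S}^3$; the instants at which $q_{w_e}=0$ are again nonissues by non-invariance, and for $\rho<\underline{\Lambda}_q$ the flow there points away from $q_{w_e}=0$. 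Hence $\Vec{\mathbf{q}}_e$ — equivalently $\mathbf{q}_e$, up to its double cover — is ultimately bounded into a ball of radius $\rho/\underline{\Lambda}_q$, after which $\|\boldsymbol{\omega}_e\|\le\rho+(\max_i\Lambda_{q_{ii}})\,\rho/\underline{\Lambda}_q$. Because $\rho\to 0$ as the boundary-layer widths $\phi_{q_i}\to 0$, both bounds can be made arbitrarily small, establishing global uniform ultimate boundedness of $(\mathbf{q}_e,\boldsymbol{\omega}_e)$.

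The hard part is the first-stage treatment of $\mathcal{N}_{\mathbf{s}_q}$: $\mathbf{s}_q$, and therefore any reaching Lyapunov function built from it, is genuinely \emph{discontinuous} there — an unavoidable consequence of the quaternion double cover — so classical Lyapunov theory does not apply on that set, and this is exactly the gap left open in \cite{lopez2020sliding}. Making rigorous the three facts that rescue the argument — $\mathcal{N}_{\mathbf{s}_q}$ is measure zero, it is non-invariant so trajectories meet it on a null set of times, and with the $\boldsymbol{\Lambda}_q^{-1}$ weighting $V_q$ jumps only downward across it — together with the closely related self-consistency between the state-dependent gains and the boundedness of $\boldsymbol{\omega}$, is where the care concentrates; the remaining sliding-mode estimates are routine.
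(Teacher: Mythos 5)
Your proposal is correct in substance but takes a genuinely different route from the paper's. The paper uses the single Lyapunov function $V_q=\tfrac12\mathbf{s}_q^\top\mathbf{J}\mathbf{s}_q$, treats the set $q_{w_e}=0$ by computing the Clarke generalized gradient of $\mathbf{s}_q$ and maximizing over the convexification parameter $\varrho$ (the maximand turns out to be independent of $\varrho$), obtains $\dot V_q^{\circ}\le -c_1\sqrt{V_q}+c_2$ in both cases, and concludes uniform ultimate boundedness directly, with the explicit reaching-time estimate \eqref{eq:v_q_bound_times}. You instead (i) pick the $\boldsymbol{\Lambda}_q^{-1}$-weighted surface norm precisely so that the \emph{jump} of $V_q$ at a crossing of $\mathcal{N}_{\mathbf{s}_q}$ equals $2\boldsymbol{\omega}_e\cdot\vec{\mathbf{q}}_e=-4\lvert\dot q_{w_e}\rvert\le 0$, and (ii) append a reduced-order stage with $V_{q_e}=\lVert\vec{\mathbf{q}}_e\rVert^2$, whose derivative $q_{w_e}\,\vec{\mathbf{q}}_e\cdot\mathbf{s}_q-\lvert q_{w_e}\rvert\,\vec{\mathbf{q}}_e^\top\boldsymbol{\Lambda}_q\vec{\mathbf{q}}_e$ converts ultimate boundedness of $\mathbf{s}_q$ into ultimate boundedness of $\vec{\mathbf{q}}_e$ and hence of $\boldsymbol{\omega}_e$. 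Each route buys something. The paper's is shorter and stays inside the nonsmooth framework it sets up in Section III-C. Yours is more careful on two points the paper glosses over: since $\lVert\vec{\mathbf{q}}_e\rVert=1$ on $\mathcal{N}_{\mathbf{s}_q}$, the surface $\mathbf{s}_q$ (hence any $V_q$ built from it) is genuinely \emph{discontinuous} there, not merely non-Lipschitz, so the downward-jump argument is arguably the more honest repair than a Clarke-gradient convexification; and the paper's closing assertion that $V_q$ is ``positive definite in $(\mathbf{q}_e,\boldsymbol{\omega}_e)$'' is not literally true ($V_q$ vanishes on the entire sliding manifold $\mathbf{s}_q=\mathbf{0}$), so your second stage supplies exactly the step needed to bound $\vec{\mathbf{q}}_e$ and $\boldsymbol{\omega}_e$ individually. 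Two small slips that do not affect the structure: the componentwise sliding dynamics should carry the factor $\hat J_{ii}/J_{ii}$ on the $\tanh$ term and $J_{ii}^{-1}$ (not $\hat J_{ii}^{-1}$) on the residuals --- harmless because $J_{ii}>0$ preserves every sign in the weighted $\dot V_q$ --- and the $\tilde{\mathbf{J}}\boldsymbol{\Lambda}_q\dot{\vec{\mathbf{q}}}_e$ entry of \eqref{eq:delta_q_bar} is simply the mismatch between the $\hat{\mathbf{J}}$ used in \eqref{eq:att_ctrl_law} and the $\mathbf{J}$ multiplying $\dot{\mathbf{s}}_q$, rather than anything ``absorbed'' into the left-hand side.
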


\begin{proof}
Consider the Lyapunov candidate function
\begin{equation}\label{eq:att_lyapunov_fcn}
    V_q = \frac{1}{2}\mathbf{s}_q^T \mathbf{J} \mathbf{s}_q
\end{equation}
We aim to compute $\dot{V}_q^{\circ}$ defined in \eqref{eq:nonsmooth_vdot_preliminaries}.
There exist two different cases to evaluate, depending on the value of $q_{w_e}$.

\textit{Case (i): $q_{w_e} \neq 0$.} In this case, $\mathbf{s}_q$ is continuously differentiable, and the right hand side of \eqref{eq:s_q_dot_with_dirac} reduces to $\dot{\boldsymbol{\omega}}_{e} + \boldsymbol{\Lambda}_{q} \sgn_+(q_{w_e})\dot{\Vec{\mathbf{q}}}_e$ as $\delta(q_{w_e})=0$.
Therefore, \eqref{eq:att_lyapunov_fcn} is continuously differentiable for all $q_{w_e}\neq0$. It follows from Definition 2 that $\partial_C V_q = \{\nabla V_q \}$. 
Consequently,
\begin{equation}\label{eq:theorem2_vdot_1}
    \dot{V}_q^{\circ} \big|_{q_{w_e} \neq 0} = \mathbf{s}_q^\top J (\dot{\boldsymbol{\omega}}_{e} + \boldsymbol{\Lambda}_{q} \sgn_+(q_{w_e})\dot{\Vec{\mathbf{q}}}_e). 
\end{equation}
Substituting \eqref{eq:err_dyn} and \eqref{eq:att_ctrl_law} into \eqref{eq:theorem2_vdot_1} yields
\begin{equation}\label{eq:theorem2_vdot_2}
\begin{aligned}
    \dot{V}_q^{\circ} \big|_{q_{w_e} \neq 0} = &\; \mathbf{s}_q^\top \big( 
    -\hat{\mathbf{J}}  \mathbf{K}_q \tanh\left(\mathbf{s}_q \oslash\boldsymbol{\phi}_q \right)
     -\Tilde{\mathbf{J}}\dot{\boldsymbol{\omega}}_d
     -\boldsymbol{\omega} \times \Tilde{\mathbf{J}}\boldsymbol{\omega} \\ &
    + \Tilde{\mathbf{J}} \boldsymbol{\Lambda}_{q} \operatorname{sgn_{+}}(q_{w_e})\dot{\Vec{\mathbf{q}}}_e 
     + \mathbf{J}\mathbf{d}_{{\alpha}}\big).
\end{aligned}
\end{equation}
The uncertain terms $-\Tilde{\mathbf{J}}\dot{\boldsymbol{\omega}}_d-\boldsymbol{\omega} \times \Tilde{\mathbf{J}}\boldsymbol{\omega} + \Tilde{\mathbf{J}} \boldsymbol{\Lambda}_{q} \operatorname{sgn_{+}}(q_{w_e})\dot{\Vec{\mathbf{q}}}_e$ have an upper bound given by \eqref{eq:delta_q_bar}, as does $\mathbf{d}_\alpha$, according to Assumption \ref{as:disturbance}. 
Therefore,
\begin{equation}\label{eq:v_dot_3}
\begin{aligned}
\dot{V}_q^{\circ}\big|_{q_{w_e} \neq 0} \leq\;& \sum_{i=1}^{3} \lvert s_{q_i} \rvert \left( \bar{\delta}_{q_i} + \bar{d}_{\alpha_i} \right) 
- s_{q_i} \hat{J}_{ii}K_{q_{ii}}\tanh{(\frac{s_{q_i}}{{\phi}_{q_i}})}.
\end{aligned}
\end{equation}
Next, substitute \eqref{eq:gain_condition} in \eqref{eq:v_dot_3}. To simplify notations, we set $\Delta_{q_i}=\bar{\delta}_{q_i} + \bar{d}_{\alpha_i}$ and $r_{q_i}=\frac{\Delta_{q_i}}{\Delta_{q_i}+\pi_{q_i}}$, which satisfies $0< r_{q_i}<1$ from \eqref{eq:gain_condition} and Assumption \ref{as:disturbance}.
Also, note that $\tanh(\cdot)$ is an odd function; therefore, $s_{q_i}\tanh(\frac{s_{q_i}}{\phi_i})=\lvert s_{q_i} \rvert \tanh(\frac{\lvert s_{q_i} \rvert }{\phi_i})$. Subsequently,
\begin{equation}\label{eq:v_dot_4}
\dot{V}_q^{\circ}\big|_{q_{w_e} \neq 0} \leq -\sum_{i=1}^{3} \left( \Delta_{q_i} + \pi_{q_i} \right)\lvert s_{q_i} \rvert \left( \tanh{\left(\frac{\lvert s_{q_i} \rvert}{{\phi}_{q_i}} \right)} - r_{q_i} \right).
\end{equation}
Let $s_{q_i}^\star= \phi_{q_i} \tanh^{-1}(r_{q_i}) =\frac{\phi_{q_i}}{2}  \ln(\frac{1+r_{q_i}}{1-r_{q_i}}) $.
For $\lvert s_{q_i} \rvert \ge s_{q_i}^\star$, 
$\tanh{(\frac{s_{q_i}}{{\phi}_{q_i}})} - r_{q_i} \ge 0$. Thus, $\dot{V}_q^{\circ}\big|_{q_{w_e} \neq 0} \leq 0$ with the following upper bound
\begin{equation}\label{eq:v_dot_5}
\dot{V}_q^{\circ}\big|_{q_{w_e} \neq 0} \leq - \underbrace{\min_{i=1,2,3}\{(\Delta_{q_i}+\pi_{q_i})(1-r_{q_i}) \} \frac{\sqrt{2}}{\sqrt{\lambda_{\max}(\mathbf{J})}}}_{c_1} \sqrt{V_q}.
\end{equation}
However, for $\lvert s_{q_i} \rvert < s_{q_i}^\star$, 
$\tanh{(\frac{s_{q_i}}{{\phi}_{q_i}})} - r_{q_i} < 0$, and the right hand side of \eqref{eq:v_dot_4} is bounded by a positive value as follows
\begin{equation}\label{eq:v_dot_6}
\dot{V}_q^{\circ}\big|_{q_{w_e} \neq 0} \leq \underbrace{\sum_{i=1}^{3} 
s_{q_i}^\star
\left( \Delta_{q_i}+\pi_{q_i} \right)}_{c_2}.
\end{equation}
To account for both regimes defined in \eqref{eq:v_dot_5} and \eqref{eq:v_dot_6}, we derive a conservative upper bound as follows
\begin{equation}\label{eq:v_dot_7}
\dot{V}_q^{\circ}\big|_{q_{w_e} \neq 0} \leq -c_1\sqrt{V_q}+c_2.
\end{equation}

\textit{Case (ii): $q_{w_e} = 0$.} 
In this case, \eqref{eq:att_lyapunov_fcn} is not continuously differentiable; therefore, we start by calculating the Clarke generalized gradient of $\mathbf{s}_q$.
It follows from Definition 2 that
\begin{equation}
\begin{aligned}
\partial_C \mathbf{s}_q \big|_{q_{w_e} = 0} =\;&\operatorname{co} \left\{\left[ \frac{\partial \mathbf{s}_q}{\partial q_{w_e}} \; \frac{\partial \mathbf{s}_q}{\partial \vec{\mathbf{q}}_e}\; \frac{\partial \mathbf{s}_q}{\partial \boldsymbol{\omega}_e} \right]\right\}\\ =\;&\operatorname{co} \left\{
    \left[ \mathbf{0}_{3 \times 1} \; \Lambda_q\; \mathbf{I}_{3 \times 3} \right], 
    \left[ \mathbf{0}_{3 \times 1} \; -\Lambda_q\; \mathbf{I}_{3 \times 3} \right]
\right\},
\end{aligned}
\end{equation}
which can be rewritten as
\begin{equation}\label{eq:CG_sq}
\partial_C \mathbf{s}_q \big|_{q_{w_e} = 0} = 
\left\{ 
\left[
\mathbf{0}_{3 \times 1} \; \varrho \Lambda_q \; \mathbf{I}_{3 \times 3}
\right] \;\middle|\; \varrho \in [-1, 1]
\right\}.
\end{equation}
Using \eqref{eq:nonsmooth_vdot_preliminaries} and \eqref{eq:CG_sq} leads to
\begin{equation}
\partial_C V_q \big|_{q_{w_e} = 0} = \left\{ \mathbf{s}_q^\top \mathbf{J} \left[ \mathbf{0}_{3 \times 1} \; \varrho \Lambda_q \; \mathbf{I}_{3 \times 3} \right] \;\middle|\; \varrho \in [-1,1] \right\}.
\end{equation}
Therefore,
\begin{equation}
\begin{aligned}
\dot{V}_q^{\circ}\big|_{q_{w_e} = 0}
=& \max_{\varrho \in [-1,1]} \biggl\{ \left\langle 
\left[ \mathbf{0} \; \varrho \mathbf{s}_q^\top \mathbf{J} \Lambda_q \; \mathbf{s}_q^\top \mathbf{J} \right]^\top,
\begin{bmatrix}
\dot{q}^w_e \\
\dot{\vec{\mathbf{q}}}_e \\
\dot{\boldsymbol{\omega}}_e
\end{bmatrix}
\right\rangle \biggr\}\\
 =& \max_{\varrho \in [-1,1]} \left\{ \varrho \mathbf{s}_q^\top \mathbf{J} \Lambda_q \dot{\vec{\mathbf{q}}}_e + \mathbf{s}_q^\top \mathbf{J} \dot{\boldsymbol{\omega}}_e \right\}.
\end{aligned}
\end{equation}
We now follow a process similar to \textit{Case (i)}, that is, substituting \eqref{eq:err_dyn} and \eqref{eq:att_ctrl_law}, followed by applying the upper bounds of the uncertain terms to arrive at
\begin{equation}\label{eq:vdot_maximization}
\begin{split}
    \dot{V}_q^{\circ}\big|_{q_{w_e} = 0}
    = \max_{\varrho \in [-1,1]} \biggl\{\sum_{i=1}^{3} \biggl(
    \lvert s_{q_i} \rvert \left( \bar{\delta}_{q_i} + \bar{d}_{\alpha_i} \right) 
    \\
    - s_{q_i} \hat{J}_{ii}K_{q_{ii}}\tanh({\frac{s_{q_i}}{{\phi}_{q_i}}) \biggr)} 
    \biggr\}.
\end{split}
\end{equation}
Since the maximand is independent of $\varrho$, the maximization simplifies to an identity operation.
Consequently, 
\begin{equation}\label{eq:vdot_rhs}
\dot{V}_q^{\circ}\big|_{q_{w_e} = 0} \leq \sum_{i=1}^{3}
    \lvert s_{q_i} \rvert \left( \bar{\delta}_{q_i} + \bar{d}_{\alpha_i} \right) 
    - s_{q_i} \hat{J}_{ii}K_{q_{ii}}\tanh{(\frac{s_{q_i}}{{\phi}_{q_i}})}.
\end{equation}
The right hand side of \eqref{eq:vdot_rhs} is similar to that of \eqref{eq:v_dot_3}. Following the same approach as in \eqref{eq:v_dot_3}--\eqref{eq:v_dot_7}, we obtain
\begin{equation}\label{eq:final_vdot_att_th2_part2}
    \dot{V}_q^{\circ}\big|_{q_{w_e} = 0} \leq - c_1 \sqrt{V_q}+c_2.
\end{equation}
Combining \eqref{eq:v_dot_7} and \eqref{eq:final_vdot_att_th2_part2} and integrating \eqref{eq:final_vdot_att_th2_part2} yields
\begin{equation}\label{eq:v_q_bound_times}
    V_q(t)\le
\begin{cases}
\bigl(\sqrt{V_q(0)}-\tfrac{c_1}{2}\,t\bigr)^2 & \text{for }t\le t^\star,\\[4pt]
\displaystyle\Bigl(\frac{c_2}{c_1}\Bigr)^{\!2} & \text{for }t\ge t^\star,
\end{cases}
\end{equation}
where $t^\star= \max\{0,\dfrac{2}{c_1}\bigl(\sqrt{V_q(0)}-\tfrac{c_2}{c_1}\bigr)\}$.
This implies that $V_q(t)$ enters a ball of radius $\left(\frac{c_2}{c_1}\right)^2$ in finite time $t^\star$, and remains within that ball thereafter. Since $V_q$ is positive definite in $(\mathbf{q}_e, \boldsymbol{\omega}_e)$, it follows that $\mathbf{q}_e$ and $\boldsymbol{\omega}_e$ are globally uniformly ultimately bounded.
\end{proof}




\vspace{-1\baselineskip} 
\section{6-DOF Flight Control}
In this section, we extend the results of Theorem 2 by introducing a position control law and integrating it with the attitude controller defined in \eqref{eq:s_att} and \eqref{eq:att_ctrl_law}.

First, we aim to determine $f$ in \eqref{eq:error_definition}.
To this end, we define an auxiliary variable
\begin{equation}\label{eq:kappa}
   \boldsymbol{\kappa} = f L_\mathbf{q}(\mathbf{e}_3),
\end{equation}
where we have applied the homogeneous property of $L_\mathbf{q}(\cdot)$. Applying the dot product of $L_\mathbf{q}(\mathbf{e}_3)$ to both sides of \eqref{eq:kappa}, followed by using the $L_\mathbf{q}(\cdot)$ isometric property and carrying out the formula \eqref{eq:L} yields
\begin{equation}\label{eq:f}
        f= \boldsymbol{\kappa} \cdot L_\mathbf{q}(\mathbf{e}_3)= \boldsymbol{\kappa} \cdot \begin{bmatrix}
            2(q_x q_z + q_w q_y)\\
            2(q_y q_z - q_w q_x)\\
q_w^2 - q_x^2 - q_y^2 + q_z^2\\
        \end{bmatrix}.
\end{equation}
Therefore, we can treat $\boldsymbol{\kappa}$ as our position control input and compute $f$ from \eqref{eq:f}.

Subsequently, we define the following sliding surface
\begin{equation}\label{eq:s_pos}
    \mathbf{s}_{\xi}  = \boldsymbol{\nu}_{e} + \boldsymbol{\Lambda}_{\xi} \boldsymbol{\xi}_{e},
\end{equation}
where $\boldsymbol{\Lambda}_\xi = \diag({\Lambda_{\xi_{ii}}})$ is a design parameter.
Based on this surface, we propose the following position control law\begin{equation}\label{eq:kappa2}{\boldsymbol{\kappa}} =  \hat{m} \left( {g\mathbf{e}_{3}} + \ddot{\boldsymbol{{\xi}}}_d -\boldsymbol{\Lambda}_{\xi} \boldsymbol{\nu}_{e} -\mathbf{K}_{\xi} \tanh\left(\mathbf{s}_{\xi}\oslash\boldsymbol{\phi}_{\xi}\right)\right),
\end{equation}
where $\mathbf{K}_{\xi}=\diag(K_{{\xi}_{ii}})$ and $\boldsymbol{\phi}_\xi$ are design parameters.

To ensure the thrust direction is always well defined, we impose the positive‑margin condition
\begin{equation}\label{eq:kappa_nonvanishing}
    \bigl\| \boldsymbol{\kappa}(t) \bigr\|
\;\ge\;
\underline{\kappa}_0 \;>\; 0,
\qquad\forall t\ge0.
\end{equation}
which is easily satisfied in practice by maintaining a small, non-zero throttle floor so the total thrust never reaches zero.

To tightly integrate the position control law with the attitude controller defined in \eqref{eq:s_att}–\eqref{eq:att_ctrl_law}, we propose the following attitude trajectory generation method. While our approach is inspired by \cite{lee2010geometric}, it introduces a key modification specifically tailored to our control framework, as will be detailed shortly.

Let us consider the desired rotation matrix $
    \mathbf{R}_{d} = \left[\mathbf{b}_{1_d}, \mathbf{b}_{2_d}, \mathbf{b}_{3_d}\right]$,
where $\mathbf{b}_{3_d} = {\boldsymbol{\kappa}}/{\lVert \boldsymbol{\kappa} \rVert}$.
The goal is to find $\mathbf{b}_{1_d}$ and $\mathbf{b}_{2_d}$ to construct ${\mathbf{R}}_{d}$, which will lead to finding ${\mathbf{q}}_d$, $\boldsymbol{\omega}_d$, and $\boldsymbol{\alpha}_d = \dot{\boldsymbol{\omega}}_d$.
Let $\mathbf{h}_d = [\cos \psi_d, \sin \psi_d, 0]^\top $ represent the desired heading direction of the vehicle in the horizontal plane.
We define $\mathbf{b}_{2_d}$ as $\mathbf{b}_{2_d} = {\boldsymbol{\chi}}/{\lVert \boldsymbol{\chi} \rVert}$ where $\boldsymbol{\chi} = \mathbf{b}_{3_d} \times \mathbf{h}_{d}$.
Next, we define $\mathbf{b}_{1_d} = \mathbf{b}_{2_d} \times \mathbf{b}_{3_d}$ to build ${\mathbf{R}}_{d}$, and subsequently ${\mathbf{q}_d}$.

To generate $\boldsymbol{\omega}_d$ and $\boldsymbol{\alpha}_d$, we use $\dot{\mathbf{R}}_{d} =[\dot{\mathbf{b}}_{1_d},\dot{\mathbf{b}}_{2_d},\dot{\mathbf{b}}_{3_d}]$ and $\ddot{\mathbf{R}}_{d} = [\ddot{\mathbf{b}}_{1_d}, \ddot{\mathbf{b}}_{2_d}, \ddot{\mathbf{b}}_{3_d}]$ and derive the following expressions
\begin{equation}\label{eq:ref_gen1}
    \dot{\mathbf{b}}_{3_d} = \frac{\dot{\boldsymbol{\kappa}}}{\lVert \boldsymbol{\kappa} \rVert} - \frac{\boldsymbol{\kappa} \cdot \dot{\boldsymbol{\kappa}}}{\lVert \boldsymbol{\kappa} \rVert ^3} \boldsymbol{\kappa},
\end{equation}
\begin{equation}
    \dot{\boldsymbol{\chi}} = \dot{\mathbf{b}}_{3_d} \times \mathbf{h}_{d} + \mathbf{b}_{3_d} \times \dot{\mathbf{h}}_d,
\end{equation}
\begin{equation}
    \dot{\mathbf{b}}_{2_d} = \frac{\dot{\boldsymbol{\chi}}}{\lVert \boldsymbol{\chi} \rVert} - \frac{\boldsymbol{\chi} \cdot \dot{\boldsymbol{\chi}}}{\lVert \boldsymbol{\chi} \rVert ^3} \boldsymbol{\chi},
\end{equation}
\begin{equation}
    \dot{\mathbf{b}}_{1_d} = \dot{\mathbf{b}}_{2_d} \times \mathbf{b}_{3_d} + \mathbf{b}_{2_d} \times \dot{\mathbf{b}}_{3_d},
\end{equation}
\begin{equation}
\begin{split}
    \ddot{\mathbf{b}}_{3_d} = \frac{\ddot{\boldsymbol{\kappa}}}{\lVert \boldsymbol{\kappa} \rVert} - 2\frac{ \boldsymbol{\kappa} \cdot \dot{\boldsymbol{\kappa}}}{\lVert \boldsymbol{\kappa} \rVert ^3} \dot{\boldsymbol{\kappa}} - \frac{ \lVert \dot{\boldsymbol{\kappa}} \rVert^2 + \boldsymbol{\kappa} \cdot \ddot{\boldsymbol{\kappa}}}{\lVert \boldsymbol{\kappa} \rVert ^3} \boldsymbol{\kappa} \\
     + 3\frac{ (\boldsymbol{\kappa} \cdot \dot{\boldsymbol{\kappa}})^2}{\lVert \boldsymbol{\kappa} \rVert ^5} \boldsymbol{\kappa},
\end{split}
\end{equation}
\begin{equation}
    \ddot{\boldsymbol{\chi}} = {\ddot{\mathbf{b}}_{3_d} \times \mathbf{h}_{d} + \mathbf{b}_{3_d} \times \ddot{\mathbf{h}}_{d} + 2 \dot{\mathbf{b}}_{3_d} \times \dot{\mathbf{h}}_{d}},
\end{equation}
\begin{equation}
\begin{split}
    \ddot{\mathbf{b}}_{2_d} = \frac{\ddot{\boldsymbol{\chi}}}{\lVert \boldsymbol{\chi} \rVert} - 2\frac{ \boldsymbol{\chi} \cdot \dot{\boldsymbol{\chi}}}{\lVert \boldsymbol{\chi} \rVert ^3} \dot{\boldsymbol{\chi}} - \frac{ \lVert \dot{\boldsymbol{\chi}} \rVert^2 + \boldsymbol{\chi} \cdot \ddot{\boldsymbol{\chi}}}{\lVert \boldsymbol{\chi} \rVert ^3} \boldsymbol{\chi} \\
     + 3\frac{ (\boldsymbol{\chi} \cdot \dot{\boldsymbol{\chi}})^2}{\lVert \boldsymbol{\chi} \rVert ^5} \boldsymbol{\chi},
\end{split}
\end{equation}
\begin{equation}
    \ddot{\mathbf{b}}_{1_d} = {\ddot{\mathbf{b}}_{2_d} \times \mathbf{b}_{3_d} + \mathbf{b}_{2_d} \times \ddot{\mathbf{b}}_{3_d} + 2 \dot{\mathbf{b}}_{2_d} \times \dot{\mathbf{b}}_{3_d}}.
\end{equation}

Using \eqref{eq:kappa2}, we compute $\dot{\boldsymbol{\kappa}}$ and $\ddot{\boldsymbol{\kappa}}$ as follows
\begin{equation}\label{eq:kappa_dot}
\begin{split}
    \dot{\boldsymbol{\kappa}} =  m \Bigl(\dddot{\boldsymbol{\xi}}_d -\boldsymbol{\Lambda}_{\xi}\mathbf{a}_{e}  -\dot{\mathbf{K}}_{\xi} \tanh\left(\mathbf{s}_{\xi}\oslash\boldsymbol{\phi}_{\xi}\right)\\
    -\mathbf{K}_{\xi}\sech^{2}(\mathbf{s_{\xi}}\oslash\boldsymbol{\phi}_{\xi}) \circ \dot{\mathbf{s}}_{\xi}\oslash\boldsymbol{\phi}_{\xi}\Bigr),
\end{split}
\end{equation}
\begin{equation}\label{eq:kappa_2dot}
\begin{split}
    \ddot{\boldsymbol{\kappa}} =  m \Biggl( \boldsymbol{\xi}^{(4)}_d -\boldsymbol{\Lambda}_{\xi} \mathbf{j}_{e}  -\ddot{\mathbf{K}}_{\xi} \tanh\left(\mathbf{s}_{\xi}\oslash\boldsymbol{\phi}_{\xi}\right)
    \\ 
    -\sech^{2}(\mathbf{s_{\xi}}\oslash\boldsymbol{\phi}_{\xi}) \circ \Bigl(2\dot{\mathbf{K}}_{\xi}\dot{\mathbf{s}}_{\xi}\oslash\boldsymbol{\phi}_{\xi}
    +\mathbf{K}_{\xi} \ddot{\mathbf{s}}_{\xi}\oslash\boldsymbol{\phi}_{\xi} \\ - 2\mathbf{K}_{\xi} \tanh(\mathbf{s_{\xi}}\oslash\boldsymbol{\phi}_{\xi}) \circ (\dot{\mathbf{s}}_{\xi}\oslash\boldsymbol{\phi}_{\xi})^{\circ 2}\Bigr)\Biggr),
    \end{split}
\end{equation}
where $\mathbf{a}_{e}$ and $\mathbf{j}_{e}$ are the acceleration and jerk errors.
In QSMC, $\mathbf{K}_\xi$ is constant, so $\dot{\mathbf{K}}_\xi = \ddot{\mathbf{K}}_\xi = \mathbf{0}$. However, we present the general forms above as they will be directly applicable when developing adaptive switching gains in the next section.

Having established $\mathbf{R}_d$, $\dot{\mathbf{R}}_d$, and $\ddot{\mathbf{R}}_d$, we utilize Poisson’s kinematic equation $\dot{\mathbf{R}}_d =\mathbf{R}_d\omega^{\times}_{R_d}$~\cite{stevens2015aircraft}, to compute the desired angular velocity and acceleration as follows
\begin{equation}
    \boldsymbol{\omega}_{R_d} = (\mathbf{R}_d^\top  \dot{\mathbf{R}}_d)^{\vee},
\end{equation}
\begin{equation}
    {\boldsymbol{\alpha}}_{R_d} =\dot{{\boldsymbol{\omega}}}_{R_d}= \biggr(\mathbf{R}_d^\top  \ddot{\mathbf{R}}_d-{\boldsymbol{\omega}^{\times}_{{R_d}}}^2 \biggr)^{\vee}.
\end{equation}
However, both $\boldsymbol{\omega}_{R_d}$ and $\boldsymbol{\alpha}_{R_d}$ reside in the tangent space $T_{R_d} \mathrm{SO}(3)$; while our error definitions \eqref{eq:error_definition} and subsequent control design requires these quantities to lie in $T_{R} \mathrm{SO}(3)$.
To address this, we map $\boldsymbol{\omega}_{R_d}$ and ${\boldsymbol{\alpha}}_{R_d}$ to $T_{R} \mathrm{SO}(3)$ as follows
\begin{equation}\label{eq:trajectory_gen1}
    \boldsymbol{\omega}_d = \mathbf{R}^\top \mathbf{R}_d\omega_{R_d},
\end{equation}
\begin{equation}\label{eq:trajectory_gen2}
\boldsymbol{\alpha}_d =\dot{\boldsymbol{\omega}}_d   =-\omega^{\times}\mathbf{R}^\top \mathbf{R}_d\omega_{R_d} +\mathbf{R}^\top \mathbf{R}_d\dot{\omega}_{R_d}.
\end{equation}
It is noteworthy that in \cite{lee2010geometric}, the quantities in $T_{R_d} \mathrm{SO}(3)$ are used for control developments. By mapping them to $T_{R} \mathrm{SO}(3)$ and using \eqref{eq:trajectory_gen1} in \eqref{eq:err_dyn}, we significantly simplify both the control design and the subsequent stability analysis, which would otherwise be considerably more tedious.


\begin{theorem}[QSMC 6-DOF flight control]
Consider the error terms defined in \eqref{eq:error_definition}, with dynamics governed by \eqref{eq:err_dyn}, under Assumptions \ref{as:trajectory}--\ref{as:known_bounds}. 
For attitude control, adopt the sliding surface \eqref{eq:s_att}, control law \eqref{eq:att_ctrl_law}, lumped uncertainty bound \eqref{eq:delta_q_bar}, and conditions for design parameters \eqref{eq:gain_condition}.
For position control, set the sliding surface \eqref{eq:s_pos}, and apply control law \eqref{eq:f}, \eqref{eq:kappa}, and \eqref{eq:kappa_nonvanishing}.
Define the lumped uncertainty bound $\bar{\boldsymbol{\delta}}_\xi$ as
\begin{equation}\label{eq:delta_pos_bar}
    \bar{\boldsymbol{\delta}}_{\xi} = \lvert \frac{\hat{m}}{m}-1 \rvert(\lvert \ddot{\boldsymbol{{\xi}}}_d \rvert + \lvert \boldsymbol{\Lambda}_{\xi} \boldsymbol{\nu}_{e}\rvert + \lvert{g\mathbf{e}_{3}}\rvert).
\end{equation}
Suppose the the following conditions are satisfied for all $i = 1,2,3$, 
\begin{equation}\label{eq:gain_condition_pos}
\begin{aligned}
&\Lambda_{\xi_{ii}} > 0,\quad \phi_{\xi_i} > 0,\quad \pi_{\xi_i} > 0, \\
&K_{\xi_{ii}} = \frac{1}{\rho_{\xi}} \left( \bar{\delta}_{\xi_i} + \bar{d}_{a_i} + \pi_{\xi_i} \right),
\end{aligned}
\end{equation}
where $\pi_{\xi_i}$ is a design parameter, and $\rho_\xi = \frac{\hat{m}}{m}$.
Obtain $\mathbf{q}_d$, $\boldsymbol{\omega}_d$, and $\boldsymbol{\alpha}_d$ from $\boldsymbol{\kappa}$ and $\psi_d$ via \eqref{eq:ref_gen1}--\eqref{eq:trajectory_gen2}.
Then, $\boldsymbol{\xi}_e$, $\boldsymbol{\nu}_e$, $\mathbf{q}_e$ and $\boldsymbol{\omega}_e$ are all globally uniformly ultimately bounded.
\end{theorem}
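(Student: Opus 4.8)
The plan is to treat the closed loop as a cascade. The position law \eqref{eq:kappa2} produces the virtual input $\boldsymbol{\kappa}$; the map \eqref{eq:ref_gen1}--\eqref{eq:trajectory_gen2} turns $\boldsymbol{\kappa}$ and $\psi_d$ into the attitude reference $(\mathbf{q}_d,\boldsymbol{\omega}_d,\boldsymbol{\alpha}_d)$; and the attitude law \eqref{eq:att_ctrl_law} drives $(\mathbf{q}_e,\boldsymbol{\omega}_e)$ into their ultimate bound exactly as in Theorem 2. Accordingly I would first re-examine the position subsystem with the attitude error entering as a perturbation, then invoke Theorem 2 for the attitude subsystem with the generated signals playing the role of the reference, and finally close the loop between the two by a no‑finite‑escape / bootstrap argument.

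For the position subsystem I would take $V_\xi = \tfrac12 \mathbf{s}_\xi^\top \mathbf{s}_\xi$ and compute $\dot{\mathbf{s}}_\xi = \dot{\boldsymbol{\nu}}_e + \boldsymbol{\Lambda}_\xi \boldsymbol{\nu}_e$ using $\dot{\boldsymbol{\nu}}_e$ from \eqref{eq:err_dyn}, the homogeneity identity $\tfrac1m L_\mathbf{q}(f\mathbf{e}_3) = \tfrac{f}{m} L_\mathbf{q}(\mathbf{e}_3)$ with $f = \boldsymbol{\kappa}\cdot L_\mathbf{q}(\mathbf{e}_3)$ from \eqref{eq:f}, and the control law \eqref{eq:kappa2}. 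Adding and subtracting $\tfrac1m \boldsymbol{\kappa} = \rho_\xi\bigl(g\mathbf{e}_3 + \ddot{\boldsymbol{\xi}}_d - \boldsymbol{\Lambda}_\xi \boldsymbol{\nu}_e - \mathbf{K}_\xi \tanh(\mathbf{s}_\xi\oslash\boldsymbol{\phi}_\xi)\bigr)$, the derivative splits into: (i) the restoring term $-\rho_\xi \mathbf{K}_\xi \tanh(\mathbf{s}_\xi\oslash\boldsymbol{\phi}_\xi)$; (ii) the mass/disturbance mismatch $(\rho_\xi - 1)\bigl(g\mathbf{e}_3 + \ddot{\boldsymbol{\xi}}_d - \boldsymbol{\Lambda}_\xi \boldsymbol{\nu}_e\bigr) + \mathbf{d}_a$, bounded element-wise by $\bar{\boldsymbol{\delta}}_\xi + \bar{\mathbf{d}}_a$ via \eqref{eq:delta_pos_bar} and Assumption \ref{as:disturbance}; and (iii) an attitude-mismatch term $\boldsymbol{\Delta}_\kappa = \tfrac1m\bigl((\boldsymbol{\kappa}\cdot L_\mathbf{q}(\mathbf{e}_3)) L_\mathbf{q}(\mathbf{e}_3) - \boldsymbol{\kappa}\bigr)$. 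With $K_{\xi_{ii}}$ chosen as in \eqref{eq:gain_condition_pos} so that $\rho_\xi K_{\xi_{ii}} = \bar\delta_{\xi_i} + \bar d_{a_i} + \pi_{\xi_i}$, terms (i)--(ii) reproduce verbatim the estimate $\dot V_\xi \le -c_3 \sqrt{V_\xi} + c_4$ of the Theorem 2 computation \eqref{eq:v_dot_3}--\eqref{eq:v_dot_7}, so the whole difficulty collapses to dominating $\mathbf{s}_\xi^\top \boldsymbol{\Delta}_\kappa$.

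The key geometric fact is that $\boldsymbol{\Delta}_\kappa$ vanishes with the attitude error: by construction $\boldsymbol{\kappa}/\|\boldsymbol{\kappa}\| = \mathbf{b}_{3_d} = \mathbf{R}_d \mathbf{e}_3$ and $L_\mathbf{q}(\mathbf{e}_3) = \mathbf{R}\mathbf{e}_3$, so $\|(\boldsymbol{\kappa}\cdot L_\mathbf{q}(\mathbf{e}_3)) L_\mathbf{q}(\mathbf{e}_3) - \boldsymbol{\kappa}\| = \|\boldsymbol{\kappa}\|\sqrt{1 - (\mathbf{e}_3^\top \mathbf{R}_d^\top \mathbf{R}\,\mathbf{e}_3)^2}$, and since $\mathbf{e}_3^\top \mathbf{R}_d^\top \mathbf{R}\,\mathbf{e}_3 = q_{w_e}^2 - q_{x_e}^2 - q_{y_e}^2 + q_{z_e}^2 = 1 - 2(q_{x_e}^2 + q_{y_e}^2)$, this gives $\|\boldsymbol{\Delta}_\kappa\| \le \tfrac{2}{m}\|\boldsymbol{\kappa}\|\,\|\vec{\mathbf{q}}_e\|$; in particular $\|\boldsymbol{\Delta}_\kappa\| \le \tfrac{2}{m}\|\boldsymbol{\kappa}\|$ unconditionally, and by \eqref{eq:kappa2} with Assumption \ref{as:trajectory} and the boundedness of $\tanh(\cdot)$, $\|\boldsymbol{\kappa}\|$ is bounded by an affine function of $\|\boldsymbol{\nu}_e\|$ (hence of $\|\mathbf{s}_\xi\|$ and $\|\boldsymbol{\xi}_e\|$), so the closed loop is forward complete. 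I would then run the cascade argument: Theorem 2 applies to the attitude loop provided $\boldsymbol{\omega}_d, \dot{\boldsymbol{\omega}}_d$ are bounded, which follows from \eqref{eq:ref_gen1}--\eqref{eq:trajectory_gen2}, the positive-margin condition \eqref{eq:kappa_nonvanishing}, Assumption \ref{as:trajectory}, and the boundedness of $\boldsymbol{\kappa},\dot{\boldsymbol{\kappa}},\ddot{\boldsymbol{\kappa}}$ on any finite interval; Theorem 2 then forces $V_q$ (hence $\|\vec{\mathbf{q}}_e\|$) into an ultimate bound $\epsilon_q$, made arbitrarily small by the choice of $(\phi_{q_i},\pi_{q_i})$, in a finite time $t^\star_q$ depending only on the initial data; for $t \ge t^\star_q$ the perturbation obeys $\mathbf{s}_\xi^\top \boldsymbol{\Delta}_\kappa \le \tfrac{2}{m}\epsilon_q \|\boldsymbol{\kappa}\|\,\|\mathbf{s}_\xi\|$, which is dominated by the $-c_3\sqrt{V_\xi}$ term, yielding $\dot V_\xi \le -c_3'\sqrt{V_\xi} + c_4'$ and thus global uniform ultimate boundedness of $\mathbf{s}_\xi$; since \eqref{eq:s_pos} is a Hurwitz surface, this gives boundedness of $\boldsymbol{\xi}_e,\boldsymbol{\nu}_e$, which also rules out escape on $[0,t^\star_q]$ and closes the bootstrap, completing the proof.

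The main obstacle is precisely this two-way coupling together with the saturation in the switching term: because $\tanh(\cdot)$ saturates, the restoring term grows only linearly in $\|\mathbf{s}_\xi\|$, whereas the bound on $\mathbf{s}_\xi^\top \boldsymbol{\Delta}_\kappa$ carries the factor $\|\boldsymbol{\kappa}\|$, which itself grows with $\|\boldsymbol{\nu}_e\|$. Keeping this genuinely dominated requires a careful accounting showing that, once $\|\vec{\mathbf{q}}_e\| \le \epsilon_q$, the product $\tfrac{2}{m}\epsilon_q\|\boldsymbol{\kappa}\|$ stays below the effective restoring rate over the region where $\|\mathbf{s}_\xi\|$ has not yet reached its ultimate bound — i.e., tuning the attitude design parameters so that $\epsilon_q$ is small relative to $\min_i \pi_{\xi_i}$ and $\|\boldsymbol{\Lambda}_\xi\|$. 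I would also need to verify that the ultimate bound on $\mathbf{s}_\xi$ propagates, through \eqref{eq:s_pos}, to the stated ultimate bound on $(\boldsymbol{\xi}_e,\boldsymbol{\nu}_e)$, and that $c_3,c_4,t^\star_q$ are independent of the initial condition, so the boundedness obtained is uniform and global.
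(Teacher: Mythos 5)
Your proposal is correct in structure but takes a genuinely different route from the paper, and in doing so it surfaces something the paper's proof passes over. The paper's proof is monolithic: it computes $\dot V_\xi$ from \eqref{eq:vdot_pos_1}, substitutes \eqref{eq:f} and \eqref{eq:kappa}, bounds the mass mismatch and disturbance by \eqref{eq:delta_pos_bar}, obtains $\dot V_\xi \le -c_3\sqrt{V_\xi}+c_4$ directly, then separately verifies well-posedness of the reference generator and finally adds $V_q$ and $V_\xi$ into a composite Lyapunov function $V_1$ whose generalized derivative inherits the two bounds. In that substitution the paper implicitly identifies $\tfrac{1}{m}L_\mathbf{q}(f\mathbf{e}_3)=\tfrac{1}{m}(\boldsymbol{\kappa}\cdot L_\mathbf{q}(\mathbf{e}_3))L_\mathbf{q}(\mathbf{e}_3)$ with $\tfrac{1}{m}\boldsymbol{\kappa}$, i.e., it drops exactly the projection residual you call $\boldsymbol{\Delta}_\kappa$, which vanishes only when the body $z$-axis is aligned with $\mathbf{b}_{3_d}$. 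Your cascade/bootstrap decomposition makes this term explicit, and your geometric bound $\|\boldsymbol{\Delta}_\kappa\|\le\tfrac{2}{m}\|\boldsymbol{\kappa}\|\,\|\vec{\mathbf{q}}_e\|$ (via the $(3,3)$ entry of $\mathbf{R}_d^\top\mathbf{R}$ being $1-2(q_{x_e}^2+q_{y_e}^2)$) is the right quantitative handle on it. What your approach buys is an honest treatment of the position--attitude coupling; what the paper's approach buys is brevity and a single composite ultimate bound, at the cost of not accounting for the thrust-direction error at all.

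The residual issue you flag is real and you should not wave it away: because $\tanh$ saturates, the restoring term in $\dot V_\xi$ is bounded by a constant, while your perturbation bound carries $\|\boldsymbol{\kappa}\|$, which by \eqref{eq:kappa2} grows affinely in $\|\boldsymbol{\nu}_e\|$ and hence in $\sqrt{V_\xi}$. Dominating $\tfrac{2}{m}\epsilon_q\|\boldsymbol{\kappa}\|\,\|\mathbf{s}_\xi\|$ by $-c_3\sqrt{V_\xi}$ therefore genuinely requires a smallness condition relating the attitude ultimate bound $\epsilon_q$ to $\pi_{\xi_i}$, $\boldsymbol{\Lambda}_\xi$, and the saturation level of $\mathbf{K}_\xi\tanh(\cdot)$ --- a condition that appears neither in the theorem hypotheses nor in the paper's proof. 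To close your argument you would need either to state that condition explicitly, or to restrict to a (large but not global) invariant set on which $\|\boldsymbol{\kappa}\|$ is a priori bounded. As written, both your sketch and the paper's proof leave the same gap; yours at least names it.
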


\begin{proof}
The proof proceeds in three steps. First, we analyze the stability of the position control loop. Second, we examine the well-posedness and boundedness of the attitude reference signals generated from the position control output. Finally, we show that the combined system achieves uniform ultimate boundedness by analyzing the composite Lyapunov function for the full 6-DOF closed-loop dynamics.

Let us begin with
\begin{equation}\label{eq:V_pos}
    V_\xi = \frac{1}{2}\mathbf{s}^\top _{\xi}\mathbf{s}_{\xi}.
\end{equation}
Given \eqref{eq:s_pos}, $V_\xi$ is smooth, and therefore, we proceed with the classical derivative of $V_\xi$ followed by substituting \eqref{eq:err_dyn}
\begin{equation}\label{eq:vdot_pos_1}
    \dot{V}_\xi = \mathbf{s}^\top \left( -{g\mathbf{e}_{3}} + \frac{1}{{m}} L_\mathbf{q}(f\mathbf{e}_{3}) + \mathbf{d}_a - \ddot{\boldsymbol{{\xi}}}_d + \boldsymbol{\Lambda}_{\xi} \boldsymbol{\nu}_{e} \right)
\end{equation}
Substituting \eqref{eq:f}, \eqref{eq:kappa}, and \eqref{eq:gain_condition_pos} into \eqref{eq:vdot_pos_1}, applying \eqref{eq:delta_pos_bar} and Assumption \ref{as:disturbance}, and following derivations similar to \eqref{eq:v_dot_3}--\eqref{eq:v_dot_4} yields
\begin{equation}\label{eq:vdot_pos_2}
\dot{V}_\xi \leq -\sum_{i=1}^{3} \biggl( \left( \Delta_{\xi_i} + \pi_{\xi_i} \right)\lvert s_{\xi_i} \rvert \left( \tanh{(\frac{\lvert s_{\xi_i} \rvert}{{\phi}_{\xi_i}})} - r_{\xi_i} \right) \biggr),
\end{equation}
where $\Delta_{\xi_i}=\bar{\delta}_{\xi_i} + \bar{d}_{a_i}$ and $r_{\xi_i}=\frac{\Delta_{\xi_i}}{\Delta_{\xi_i}+\pi_{\xi_i}} \in (0,1)$.
As seen in \eqref{eq:v_dot_5}--\eqref{eq:v_dot_6}, the expression \eqref{eq:vdot_pos_2} translates into
\begin{equation}\label{eq:vdot_pos_3}
\dot{V}_\xi = -c_3 \sqrt{V_\xi} + c_4,
\end{equation}
where $c_3 = \min_{i=1,2,3}\{(\Delta_{\xi_i}+\pi_{\xi_i})(1-r_{\xi_i})\}\sqrt{2}$, and $c_4=\sum_{i=1}^3s_{\xi_i}^\star(\Delta_{\xi_i}+\pi_{\xi_i})$, and $s_\xi^\star= \frac{\phi_{\xi_i}}{2}\ln(\frac{1+r_{\xi_i}}{1-r_{\xi_i}})$.

Similar to \eqref{eq:final_vdot_att_th2_part2}--\eqref{eq:v_q_bound_times}, \eqref{eq:vdot_pos_3} concludes that $V_\xi(t)$, and subsequently
$\mathbf{s}_\xi(t)$, $\boldsymbol{\xi}_e(t)$, and $\boldsymbol{\nu}_e(t)$ are bounded, i.e., $\|\mathbf{s}_\xi(t)\|\le\ \bar{s}_\xi$, $\|\boldsymbol{\xi_e}(t)\| \le \bar{\xi}_e$, and $\|\boldsymbol{\nu}_e(t)\| \le \bar\nu_e$.
Moreover, the error dynamics \eqref{eq:err_dyn} and the boundedness of $\boldsymbol{\xi}_d^{(3)}$, $\boldsymbol{\xi}_d^{(4)}$, and $\mathbf{d}_a$ from Assumptions \ref{as:trajectory} and \ref{as:model} imply that $\dot{\boldsymbol{\nu}}_e$ and $\ddot{\boldsymbol{\nu}}_e$ are bounded. Therefore, $\dot{\boldsymbol{s}}_\xi = \dot{\boldsymbol{\nu}}_e + \boldsymbol{\Lambda}_\xi \boldsymbol{\nu}_e$ and $\ddot{\mathbf{s}}_\xi = \ddot{\boldsymbol{\nu}}_e + \boldsymbol{\Lambda}_\xi \dot{\boldsymbol{\nu}}_e$ are bounded.

Using \eqref{eq:kappa}, \eqref{eq:kappa_nonvanishing}, and $0<\|\tanh(\mathbf{s_{\xi}} \oslash \boldsymbol{\phi}_{\xi})\|<\sqrt{3}$, we conclude $ 0<\underline{\kappa}_0 \le \|\boldsymbol{\kappa}(t)\| \le \bar{\kappa}_0 $, where
\begin{equation}
\bar{\kappa}_0 = \hat m\bigl(g+B_{\xi_2}+\|\Lambda_\xi\|\bar\nu+\sqrt{3}\|K_\xi\|\bigr).
\end{equation}
Differentiating \eqref{eq:kappa} twice and invoking the $C^{4}$ bounds
$\|\boldsymbol{\xi}_d^{(i)}\|\le B_{\xi_i}\,(i=2,3,4)$ from Assumption \ref{as:trajectory},
plus the boundedness of $\mathbf{s}_\xi$ and $\dot{\mathbf{s}}_\xi$, yields
$\|\boldsymbol{\kappa}^{(i)}(t)\|\le\bar\kappa_i$ for $i=0,1,2$.

Because $\|\boldsymbol{\kappa}(t)\|\ge\underline{\kappa}_0 >0$, all normalizations in \eqref{eq:ref_gen1}--\eqref{eq:trajectory_gen2} are well defined.
Also, $\psi_d \in C^{2}$ from Assumption \ref{as:trajectory}. Thus, the mapping \eqref{eq:ref_gen1}--\eqref{eq:trajectory_gen2} is $C^{2}$ and produces bounded signals
\begin{equation}
\|\mathbf{q}_d(t)\|=1,\quad
\|\boldsymbol{\omega}_d(t)\|\le\bar\omega,\quad
\|\boldsymbol{\alpha}_d(t)\|\le\bar\alpha.
\end{equation}
This implies that the desired trajectory generated via \eqref{eq:kappa}--\eqref{eq:kappa_nonvanishing} and the mapping \eqref{eq:ref_gen1}--\eqref{eq:trajectory_gen2} is smooth, bounded, and free of singularities, satisfying \eqref{eq:delta_q_bar} in Theorem 2.

Let us now consider the stability of translational and rotational dynamics together. Define
\begin{equation}
    V_1 = V_q + V_\xi
\end{equation}
It follows from \eqref{eq:v_dot_7}, \eqref{eq:final_vdot_att_th2_part2} and \eqref{eq:vdot_pos_3} that 
\begin{equation}
    \dot{V}_1^\circ = -c_1 \sqrt{V_q} + c_2 -c_3 \sqrt{V_\xi} + c_4.
\end{equation}
Note that $-c_1\sqrt{V_q}-c_3\sqrt{V_\xi}\le -\min\{c_1,c_3\}\bigl(\sqrt{V_q}+\sqrt{V_\xi}\bigr)\le
-c_5\,\sqrt{V_1}$, where $c_5 = \frac{1}{2}\min\{c_3,c_4\}$. Also, set $c_6=c_2+c_4$.
Then,
\begin{equation}\label{eq:70}
    \dot{V}_1^\circ = -c_5 \sqrt{V_1} + c_6.
\end{equation}
Similar to \eqref{eq:v_q_bound_times}, we conclude that $V_1(t)$ enters a ball of radius $\left( \frac{c_6}{c_5} \right)^2$ in finite time $t^\star= \max\{0,\dfrac{2}{c_5}\bigl(\sqrt{V_1(0)}-\tfrac{c_6}{c_5}\bigr)\}$, and remains within that ball thereafter. 
This implies global uniform ultimate boundedness of $\boldsymbol{\xi}_e$, $\boldsymbol{\nu}_e$, $\mathbf{q}_e$, and $\boldsymbol{\omega}_e$.
\end{proof}

Note that, in contrast to several existing quaternion-based approaches \cite{sanchez2013time, serrano2023terminal, arellano2015quaternion, abaunza2019quadrotor}, the proposed QSMC offers a full 6-DOF flight control law entirely based on SMC, thereby leveraging the inherent robustness of SMC for both position and attitude control simultaneously. Furthermore, unlike many other 6-DOF Euler-based SMC designs, the proposed QSMC avoids any simplification of the attitude dynamics. This results in significantly enhanced control performance, particularly during aggressive maneuvers, as will be shown in our experiments.



\section{Adaptive 6-DOF Flight Control}
As detailed in our hardware tests, the 6-DOF QSMC in Theorem 3 already outperforms benchmark controllers. Here, we take it further by introducing adaptation laws that unlock dramatically higher robustness and agility.

This adaptive framework removes the need for Assumption \ref{as:known_bounds}, making the controller far more practical. The adaptation law automatically adjusts the switching gains in response to changing flight conditions, improving tracking accuracy, while reducing control effort, compared to QSMC.

Crucially, our results resolve a longstanding issue in prior works such as \cite{nadda2018adaptive,lian2021adaptive,thanh2018quadcopter}: unchecked gain overgrowth as explained in \eqref{eq:gain_saturation}. This is, to our knowledge, the first sound fix to this problem that is validated by extensive experiments.

To develop AQSMC, we allow the switching gains $\mathbf{K}_\xi$ and $\mathbf{K}_q$ in \eqref{eq:att_ctrl_law} and \eqref{eq:kappa} to evolve over time as follows
\begin{equation}\label{eq:adaptation_law_2}
\begin{aligned}&
\dot{{K}}_{q_{ii}}=
\begin{cases}
 \Gamma_{q_{ii}} \hat{J}_{ii} \lvert s_{q_i}\rvert \tanh{(\frac{\lvert s_{q_i}\rvert }{\phi_{q_i}} - \epsilon_{q_i})} & \text{ if }\; {K}_{q_{ii}}>{K}_{q_{ii}}^\text{th}\\
 \mu_{q_i} & \text{ if }\; {K}_{q_{ii}}\leq {K}_{q_{ii}}^\text{th}
\end{cases}\\&
\dot{{K}}_{\xi_{ii}}=
\begin{cases}
 \Gamma_{\xi_{ii}}\rho_\xi \lvert s_{\xi_i}\rvert \tanh{(\frac{\lvert s_{\xi_i}\rvert }{\phi_{\xi_i}} - \epsilon_{\xi_i})} & \text{ if }\; {K}_{\xi_{ii}}>{K}_{\xi_{ii}}^\text{th}\\
 \mu_{\xi_i} & \text{ if }\; {K}_{\xi_{ii}}\leq {K}_{\xi_{ii}}^\text{th} 
\end{cases}
\end{aligned}
\end{equation}
Each gain is governed by four tunable parameters: $\Gamma = \diag(\Gamma_{ii}) > 0$, $\epsilon > 0$, $\mu > 0$, and $K^{\text{th}} > 0$. We now explain how these parameters govern the adaptation behavior. Later in Theorem 4, we analyze the stability, and in Section \ref{se:adaptive_experiments}, we provide practical insights into their impact via hardware tests.
Note that, for notational brevity, we omit subscripts in this section when the component-wise meaning is clear and applies to both attitude and position control.

The key term $\tanh(\frac{s}{\phi}-\varepsilon)$ changes sign at $|s|=\varepsilon \phi$, enabling ${K}$ to both increase and decrease during flight.
Therefore, ${K}$ increases when the error is large ($|s|>\varepsilon \phi$) and decreases when the error is small, allowing the controller to adapt aggressively only when needed.

Although this behavior is theoretically sound, our hardware experiments revealed that allowing ${K}$ to decay too much jeopardizes responsiveness to sudden disturbances. To address this, we introduce a threshold $K^\text{th}$, which sets the minimum allowable value for $K$. If ${K}$ drops below this threshold, it is increased at a constant rate $\mu$, ensuring the controller remains responsive and resilient to abrupt changes, without requiring high gains at all times.

Introducing \eqref{eq:adaptation_law_2} slightly modifies the attitude reference generation in 
\eqref{eq:ref_gen1}--\eqref{eq:trajectory_gen2}. In particular, $\dot{\mathbf{K}}_\xi$ and $\ddot{\mathbf{K}}_\xi$ in \eqref{eq:kappa_dot} and \eqref{eq:kappa_2dot} are no longer zero. We substitute \eqref{eq:adaptation_law_2} for $\dot{\mathbf{K}}_\xi$, and for $\ddot{\mathbf{K}}_{\xi}$, we have
\begin{equation}
\begin{aligned}
    \ddot{K}_{\xi_{ii}} = &\gamma_{\xi_{ii}}\rho_\xi \sgn{(s_{\xi_i})}\dot{s}_{\xi_i} \biggl(\tanh{(\frac{\lvert s_{\xi_i}\rvert }{\phi_{\xi_i}} - \epsilon_{\xi_i})}\\&
 +\frac{\lvert s_{\xi_i}\rvert}{\phi_{\xi_i}} \sech^{2}{(\frac{\lvert s_{\xi_i}\rvert }{\phi_{\xi_i}} - \epsilon_{\xi_i})} \biggr)
\end{aligned}
\end{equation}
for $K_{\xi_{ii}} > K_{\xi_{ii}}^{th}$, and  $\ddot{K}_{\xi_{ii}} = 0$ for $K_{\xi_{ii}} \leq K_{\xi_{ii}}^{th}$.
\begin{theorem}[AQSMC 6-DOF flight control]
Consider the error terms defined in \eqref{eq:error_definition}, with dynamics governed by \eqref{eq:err_dyn}, under Assumptions \ref{as:trajectory}--\ref{as:disturbance}. 
Adopt the sliding surfaces \eqref{eq:s_att} and \eqref{eq:s_pos}, the control laws \eqref{eq:att_ctrl_law}, \eqref{eq:f}, \eqref{eq:kappa}, and \eqref{eq:kappa_nonvanishing}. Let $\mathbf{K}_q$ and $\mathbf{K}_\xi$ evolve over time using \eqref{eq:adaptation_law_2}.
Suppose the the following conditions are satisfied for all $i = 1,2,3$, 
\begin{equation}\label{eq:gain_condition_adaptive}
\begin{aligned}
&\Lambda_{q_{ii}} > 0,\quad \phi_{q_i} > 0, \quad \Lambda_{\xi_{ii}} > 0,\quad \phi_{\xi_i} > 0,\\
&\Gamma_{q_{ii}} > 0,\quad \varepsilon_{q_i} > 0,\quad K_{q_{ii}}^{\text{th}} > 0, \quad \mu_{\xi_i} > 0, \\
&\Gamma_{\xi_{ii}} > 0,\quad \varepsilon_{\xi_i} > 0,\quad K_{\xi_{ii}}^{\text{th}} > 0, \quad \mu_{q_i} > 0, \\
\end{aligned}
\end{equation}
Obtain $\mathbf{q}_d$, $\boldsymbol{\omega}_d$, and $\boldsymbol{\alpha}_d$ from $\boldsymbol{\kappa}$ and $\psi_d$ via \eqref{eq:ref_gen1}--\eqref{eq:trajectory_gen2}.
Then, $\boldsymbol{\xi}_e$, $\boldsymbol{\nu}_e$, $\mathbf{q}_e$ and $\boldsymbol{\omega}_e$ are all globally uniformly ultimately bounded.
\end{theorem}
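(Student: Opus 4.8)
The plan is to extend the composite-Lyapunov argument of Theorem 3 by augmenting $V_1$ with quadratic terms that penalize the mismatch between each adaptive gain and a (fixed but unknown) ideal value that would dominate the lumped uncertainty. Concretely, I would first show---exactly as in Theorem 3---that the position sliding surface $\mathbf{s}_\xi$ and the signals $\boldsymbol{\xi}_e,\boldsymbol{\nu}_e,\dot{\mathbf{s}}_\xi,\ddot{\mathbf{s}}_\xi$ remain bounded, so that $\boldsymbol{\kappa}$ and its first two derivatives are bounded and bounded away from zero via \eqref{eq:kappa_nonvanishing}; this part is unchanged in structure, with the only new wrinkle that $\dot{\mathbf{K}}_\xi,\ddot{\mathbf{K}}_\xi$ are now nonzero but are themselves bounded functions of bounded quantities (from \eqref{eq:adaptation_law_2} and the displayed formula for $\ddot K_{\xi_{ii}}$), so the reference-generation maps \eqref{eq:ref_gen1}--\eqref{eq:trajectory_gen2} still deliver $C^2$, bounded, singularity-free $\mathbf{q}_d,\boldsymbol{\omega}_d,\boldsymbol{\alpha}_d$. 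Because of the circularity ($\mathbf{K}_\xi$ bounded $\Rightarrow$ $\boldsymbol{\kappa}$ bounded, but $\mathbf{K}_\xi$ bounded is itself part of what must be proven), I would organize this step as the position-loop stability claim below, whose Lyapunov analysis simultaneously bounds $V_\xi$ and the gain-mismatch term.

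For each loop define ideal constant gains $K^\star_{q_{ii}},K^\star_{\xi_{ii}}$ large enough that $\hat J_{ii}K^\star_{q_{ii}}\ge \bar\delta_{q_i}+\bar d_{\alpha_i}$ and $\rho_\xi K^\star_{\xi_{ii}}\ge \bar\delta_{\xi_i}+\bar d_{a_i}$ (these exist by Assumptions \ref{as:model}--\ref{as:disturbance}, and crucially do \emph{not} need to be known to implement the controller), together with gain errors $\tilde K_{q_{ii}}=K_{q_{ii}}-K^\star_{q_{ii}}$, $\tilde K_{\xi_{ii}}=K_{\xi_{ii}}-K^\star_{\xi_{ii}}$. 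Take the augmented candidate
\begin{equation}\label{eq:V_adaptive_candidate}
V_2 = V_q + V_\xi + \sum_{i=1}^3 \frac{1}{2\Gamma_{q_{ii}}}\tilde K_{q_{ii}}^2 + \sum_{i=1}^3 \frac{1}{2\Gamma_{\xi_{ii}}}\tilde K_{\xi_{ii}}^2 .
\end{equation}
As in Theorem 2, $V_q$ is nonsmooth on $\mathcal{N}_{\mathbf{s}_q}$, so I would again split on $q_{w_e}=0$ versus $q_{w_e}\neq 0$ and use the nonsmooth derivative \eqref{eq:nonsmooth_vdot_preliminaries}; the Clarke-gradient computation \eqref{eq:CG_sq} carries over verbatim since the added terms are smooth in the gains, and the $\varrho$-maximization is again vacuous. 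Differentiating \eqref{eq:V_adaptive_candidate} along the closed loop and substituting \eqref{eq:att_ctrl_law}, \eqref{eq:kappa2}, and the adaptation laws \eqref{eq:adaptation_law_2}, the $-\mathbf{s}_q^\top\hat{\mathbf{J}}\mathbf{K}_q\tanh(\cdot)$ and $-\mathbf{s}_\xi^\top\rho_\xi\mathbf{K}_\xi\tanh(\cdot)$ feedback terms split into an $K^\star$-part that dominates the uncertainty (giving a strictly negative contribution $-\sum(\hat J_{ii}K^\star_{q_{ii}}-\bar\delta_{q_i}-\bar d_{\alpha_i})|s_{q_i}|\tanh(|s_{q_i}|/\phi_{q_i})$ modulo the usual boundary-layer slack $c_2$) and a $\tilde K$-part of the form $-\sum \tilde K_{q_{ii}}\hat J_{ii}|s_{q_i}|\tanh(|s_{q_i}|/\phi_{q_i})$. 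The design intent is that the adaptation term $\tfrac{1}{\Gamma_{q_{ii}}}\tilde K_{q_{ii}}\dot K_{q_{ii}} = \tilde K_{q_{ii}}\hat J_{ii}|s_{q_i}|\tanh(|s_{q_i}|/\phi_{q_i}-\epsilon_{q_i})$ cancel this $\tilde K$-part up to the $\epsilon$-shift in the $\tanh$; since $\tanh(a-\epsilon)-\tanh(a)$ is bounded by a constant for all $a\ge 0$ and $|s_{q_i}|\tanh(|s_{q_i}|/\phi_{q_i})$ is bounded on the no-slip region and linearly dominated outside it, the residual is absorbed into a new constant $c_7$. I would handle the $K\le K^{\text{th}}$ branch separately: there $\dot K=\mu>0$ so $K$ is driven back up, the branch is active only on a bounded region of gain-space (once $K$ exceeds $K^\star$ it can be kept there by enlarging $K^\star$ if necessary, or one simply notes the region $\{K\le\max(K^{\text{th}},K^\star)\}$ is forward-bounded), and on it the sign-indefinite cross term $\tfrac{\mu}{\Gamma}\tilde K$ is bounded because $\tilde K$ is.

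Collecting terms I expect an estimate of the shape $\dot V_2^{\circ} \le -c_8\big(\sqrt{V_q}+\sqrt{V_\xi}\big) + c_9$ on the region where the $\tanh$ terms dominate, with $c_8>0$ built from $\hat J_{ii}K^\star_{q_{ii}}-\bar\delta_{q_i}-\bar d_{\alpha_i}$, $\rho_\xi K^\star_{\xi_{ii}}-\bar\delta_{\xi_i}-\bar d_{a_i}$, and $c_9$ lumping the boundary-layer slack and the $\epsilon$-shift residual. This bound does \emph{not} by itself control the gain-error coordinates, so the final step is a standard boundedness-via-ultimate-boundedness argument: $\dot V_2^\circ$ being negative outside a compact set in $(\mathbf{s}_q,\mathbf{s}_\xi)$ keeps $V_q,V_\xi$---hence $\mathbf{q}_e,\boldsymbol{\omega}_e,\boldsymbol{\xi}_e,\boldsymbol{\nu}_e$---uniformly ultimately bounded, and it simultaneously keeps each $\tilde K$ bounded because whenever $|s|>\epsilon\phi$ the gain grows but then $V_q$ or $V_\xi$ is already decreasing, while whenever $|s|\le\epsilon\phi$ the gain decreases (or, below threshold, hovers near $K^{\text{th}}$), so $K$ cannot escape to infinity---this is precisely the overgrowth fix advertised after \eqref{eq:gain_saturation}. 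I anticipate the main obstacle to be making rigorous the interplay between the two adaptation branches and the sign-indefinite residual from the $\epsilon$-shift: one must argue that the switching between $\dot K = \Gamma\rho_\xi|s|\tanh(|s|/\phi-\epsilon)$ and $\dot K=\mu$ does not create a limit cycle that pumps energy into $V_2$, which I would resolve by showing $K$ stays in a compact interval $[\,\underline K,\overline K\,]$ (lower bound from the $\mu$-branch, upper bound from the sign change of $\tanh(\cdot-\epsilon)$ once $|s|$ shrinks inside the boundary layer that $V_q,V_\xi$ boundedness guarantees) and then treating $\mathbf{K}_q,\mathbf{K}_\xi$ as bounded time-varying parameters in a final invocation of the nonsmooth comparison lemma, exactly as in \eqref{eq:v_q_bound_times}--\eqref{eq:70}.
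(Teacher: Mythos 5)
Your proposal matches the paper's proof in all essentials: the same augmented Lyapunov function $V_2 = V_q + V_\xi + \tfrac{1}{2}\sum_i \Gamma_{ii}^{-1}\tilde{K}_{ii}^2$ built from unknown ideal gains $K^\star$ satisfying \eqref{eq:gain_condition} and \eqref{eq:gain_condition_pos}, the same add-and-subtract decomposition into a $K^\star$-part bounded by $-c_5\sqrt{V_2}+c_6$ and a $\tilde{K}$-part to be cancelled by the adaptation, the same nonsmooth split on $q_{w_e}$, the same three-case analysis on the threshold branches producing an extra bounded constant $c_7$ from the $\mu$-branch, and the same piecewise-smoothness argument for the reference generator. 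The one obstacle you flag, the sign-indefinite $\epsilon$-shift residual, is dispatched in the paper by invoking $s\,\tanh(s/\phi)\ge|s|\,\tanh(|s|/\phi-\epsilon)$ before splitting into cases, so that when the gains are above threshold the adaptation term $\Gamma^{-1}\tilde{K}\dot{K}$ cancels the gain-error term exactly rather than leaving a residual to be absorbed.
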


\begin{proof}
Define $\Tilde{\mathbf{K}}_q = {\mathbf{K}}_q - \mathbf{K}_q^\star$ and $\Tilde{\mathbf{K}}_\xi = {\mathbf{K}}_\xi - \mathbf{K}_\xi^\star$, where $\mathbf{K}_q^\star$ and $\mathbf{K}_\xi^\star$ are two diagonal matrices satisfying the conditions for $\mathbf{K}_\xi$ and $\mathbf{K}_q$ in \eqref{eq:gain_condition} and \eqref{eq:gain_condition_pos}.
$\mathbf{K}_q^\star$ and $\mathbf{K}_\xi^\star$ are unknown, but their existence is guaranteed under Assumption \ref{as:disturbance}. 

Next, consider
\begin{equation}\label{eq:V_adapt}
    V_2 = \frac{1}{2}\left(\mathbf{s}^\top _{q} \mathbf{J}\mathbf{s}_{q} + \mathbf{s}^\top _{\xi}\mathbf{s}_{\xi} + {\tr} \left(\boldsymbol{\Gamma}^{-1}_{\xi}\Tilde{\mathbf{K}}^2_{\xi}+     \boldsymbol{\Gamma}^{-1}_{q}\Tilde{\mathbf{K}}^2_{q}\right)\right).
\end{equation}
Similar to the proof of Theorems 2 and 3, we take the generalized derivative of $V_2$, and apply the control laws and upper bounds of uncertain terms. We also add and subtract
$ \mathbf{s}^\top _{\xi}\rho_{\xi}\mathbf{K}^\star_{\xi}\tanh{(\mathbf{s}_{\xi}\oslash\boldsymbol{\phi}_{\xi})}$ and $\mathbf{s}^\top _{q}\hat{\mathbf{J}}\mathbf{K}^\star_q\tanh{(\mathbf{s}_{q}\oslash\boldsymbol{\phi}_{q})}$ to obtain
\begin{equation}\label{eq:v_dot_adaptive_1}
\begin{aligned}
\dot{V}_2^\circ\leq\;&
\sum_{i=1}^{3} \biggl( \lvert s_{\xi_i}\rvert (\bar{\delta}_{\xi_i}+ \bar{{d}}_{a_i})-  \rho_{\xi}K^\star_{\xi_{ii}}s_{\xi_i}\tanh{(\frac{s_{\xi_i}}{\phi_{\xi_i}})} \\& 
+\lvert s_{q_i}\rvert (\bar{\delta}_{q_i} + \bar{{d}}_{{\alpha}_i})- \hat{J}_{ii}K^\star_{q_{ii}}s_{q_i}\tanh{(\frac{s_{q_i}}{\phi_{q_i}})} \\&
- \rho_{\xi}\Tilde{K}_{\xi_{ii}}s_{\xi_i}\tanh(\frac{s_{\xi_i}}{\phi_{\xi_i}}) + \frac{1}{{\Gamma_{\xi_{ii}}}}\Tilde{K}_{\xi_{ii}}\dot{K}_{\xi_{ii}}  \\
& - \hat{J}_{ii}\Tilde{K}_{q_{ii}}s_{q_{i}}\tanh{(\frac{s_{q_i}}{\phi_{q_i}})}+
\frac{1}{{\Gamma_{q_{ii}}}}\Tilde{K}_{q_{ii}}\dot{K}_{q_{ii}} \biggr).
\end{aligned}
\end{equation}
According to the definition of $\mathbf{K}_q^\star$ and $\mathbf{K}_\xi^\star$ stated above, and the proof of Theorem 3, the first four terms on the right hand side of \eqref{eq:v_dot_adaptive_1} are bounded by $-c_5 \sqrt{V_2}+c_6$, with $c_5$ and $c_6$ as in \eqref{eq:70}. 
Furthermore, since $\tanh(\cdot)$ is odd and strictly increasing, we have $
 s  \tanh( s ) \ge  \lvert s \rvert \tanh(\lvert s \rvert  - \epsilon)$ for all $\epsilon > 0$.
Therefore, 
\begin{equation}\label{eq:v_dot_4_adapt}
\begin{aligned}
    \dot{V}_2^\circ \leq\;& -c_5 \sqrt{V_2} + c_6 -\sum_{i=1}^{3}\biggl( 
\rho_{\xi} \Tilde{K}_{\xi_{ii}} \lvert s_{\xi_i} \rvert \tanh(\frac{\lvert s_{\xi_i} \rvert }{\phi_{\xi_i}} - \epsilon_{\xi_i}) \\&
-\frac{1}{{\Gamma_{\xi_{ii}}}}\Tilde{K}_{\xi_{ii}} \dot{K}_{\xi_{ii}}  
+\hat{J}_{ii}\Tilde{K}_{q_{ii}}\lvert s_{q_{ii}}\rvert\tanh{(\frac{\lvert s_{q_{ii}}\rvert}{\phi_{q_i}}- \epsilon_{q_i}}) \\& -
\frac{1}{{\Gamma_{q_{ii}}}}   \Tilde{K}_{q_{ii}}\dot{K}_{q_{ii}}\biggr).
\end{aligned}
\end{equation}

Substituting \eqref{eq:adaptation_law_2} into \eqref{eq:v_dot_4_adapt} leads to three cases.

\textit{Case (i)} $K_{q_{ii}}>K_{q_{ii}}^{\text{th}}$ \textit{and} $K_{\xi_{ii}}>K_{\xi_{ii}}^{\text{th}}$ \textit{for} $i=1,2,3$. The entire summation term in \eqref{eq:v_dot_4_adapt} vanishes, leaving
\begin{equation}\label{eq:v_dot_5_adapt}
    \dot V_2^{\!\circ}\;\le\; -\,c_5\sqrt{V_2}+c_6.
\end{equation}

\textit{Case (ii)} $K_{q_{ii}}\le K_{q_{ii}}^{\text{th}}$ and $K_{\xi_{ii}}\le K_{\xi_{ii}}^{\text{th}}$ \textit{for} $i=1,2,3$. An additional bounded term appears on the right hand of side \eqref{eq:v_dot_5_adapt}.
This regime activates when $\lvert s_{q_{i}} \rvert < \varepsilon_{q_{i}}\phi_{q_{i}}$ and $\lvert s_{\xi_{i}} \rvert < \varepsilon_{\xi_{i}}\phi_{\xi_{i}}$; therefore, $\tanh(\frac{\lvert s_{q_{i}}\rvert}{\phi_{q_i}}-\epsilon_{q_i}) \le 0$ and $\tanh(\frac{\lvert s_{\xi_{i}}\rvert}{\phi_{\xi_i}}-\epsilon_{\xi_i}) \le 0$.
Note that $\tilde{K}_{q_{ii}} \le \lvert K_{q_{ii}}^{\text{th}} - K_{q_{ii}}^\star \rvert$ and $\tilde{K}_{\xi_{ii}} \le \lvert K_{\xi_{ii}}^{\text{th}} - K_{\xi_{ii}}^\star \rvert$.
Therefore,
\begin{equation}\label{eq:v_dot_6_adapt}
    \dot V_2^{\!\circ}\;\le\; -\,c_5\sqrt{V_2}+c_6+c_7
\end{equation}
where $c_7 = \sum_{i=1}^3 \biggl(
    \frac{\mu_{q_{i}}}
    {\Gamma_{q_{ii}}}\lvert K_{q_{ii}}^{\text{th}} - K_{q_{ii}}^\star \rvert+
    \frac{\mu_{\xi_{i}}}{\Gamma_{\xi_{ii}}}\lvert K_{\xi_{ii}}^{\text{th}} - K_{\xi_{ii}}^\star \rvert
    \biggr)$.

\textit{Case (iii) Mixed mode.}
If some gains are above their thresholds and some below, \eqref{eq:v_dot_6_adapt} still holds.

Therefore $V_2(t)$ enters a ball of radius $\left( \frac{c_6+c_7}{c_5} \right)^2$ in finite time 
$t^*= \max\{0,\dfrac{2}{c_5}\bigl(\sqrt{V_2(0)}-\tfrac{c_6+c_7}{c_5}\bigr)\}$, and remains within that ball thereafter, implying boundedness.

Showing the well-posedness of the attitude reference signals follows the same steps as in Theorem 3.
The only difference is that the adaptation laws are piecewise smooth, that is, continuous and differentiable on each interval between a finite set of switching instants, so $\boldsymbol{\kappa}$, $\dot{\boldsymbol{\kappa}}$, and $\ddot{\boldsymbol{\kappa}}$ are piecewise smooth as well.
Because the reference generator \eqref{eq:ref_gen1}–\eqref{eq:trajectory_gen2} involves only algebraic operations on these signals, the resulting $\mathbf{q}_d$, $\boldsymbol{\omega}_d$, and $\boldsymbol{\alpha}_d$ remain piecewise smooth, bounded, and free of singularities.
Consequently, the lumped-uncertainty term in \eqref{eq:delta_q_bar} remains uniformly bounded, satisfying the requirement of Theorem 2.
\end{proof}

\section{Experiments}\label{se:experiments}
\vspace{-0.2\baselineskip} 
\subsection{Experiment scenarios}
We start with attitude tracking experiments on a custom-built 3-DOF gimbal (Fig. \ref{fig:experiment_scenarios_gimbal}).
With this setup, we isolate the attitude controller and directly compare the attitude control performance of various methods.
We further carry out a sensitivity analysis in this scenario to examine the controllers’ behavior under different tuning configurations.
\begin{figure*}[t]
    \centering
    \begin{subfigure}[b]{0.2\linewidth}
        \includegraphics[width=\linewidth, trim={0cm 0cm 0cm 0.15cm}, clip]{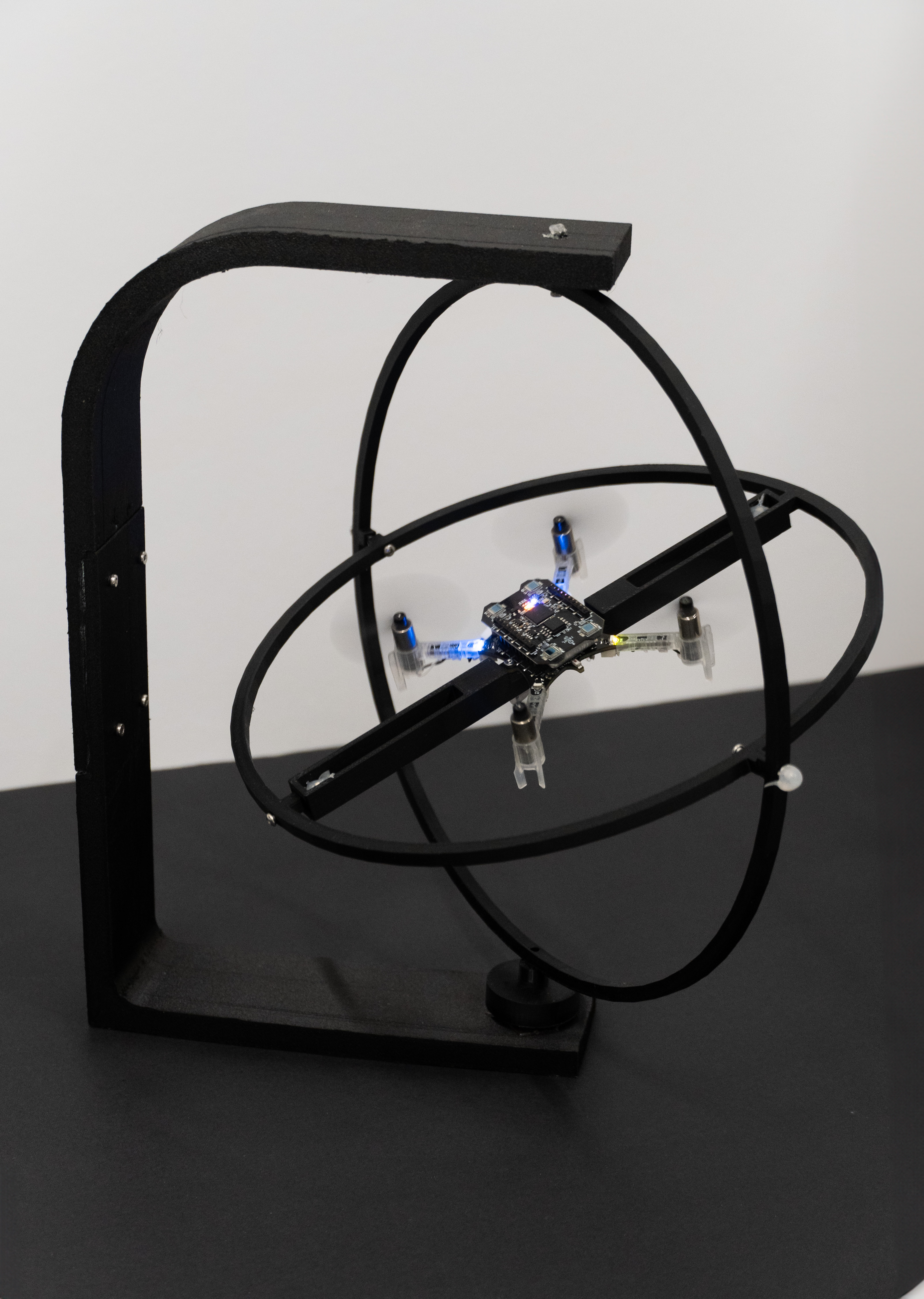}
        \caption{}
        \label{fig:experiment_scenarios_gimbal}
    \end{subfigure}
    \hfill
    \begin{subfigure}[b]{0.2\linewidth}
        \includegraphics[width=\linewidth, trim={2cm 1cm 2cm 9.4cm}, clip]{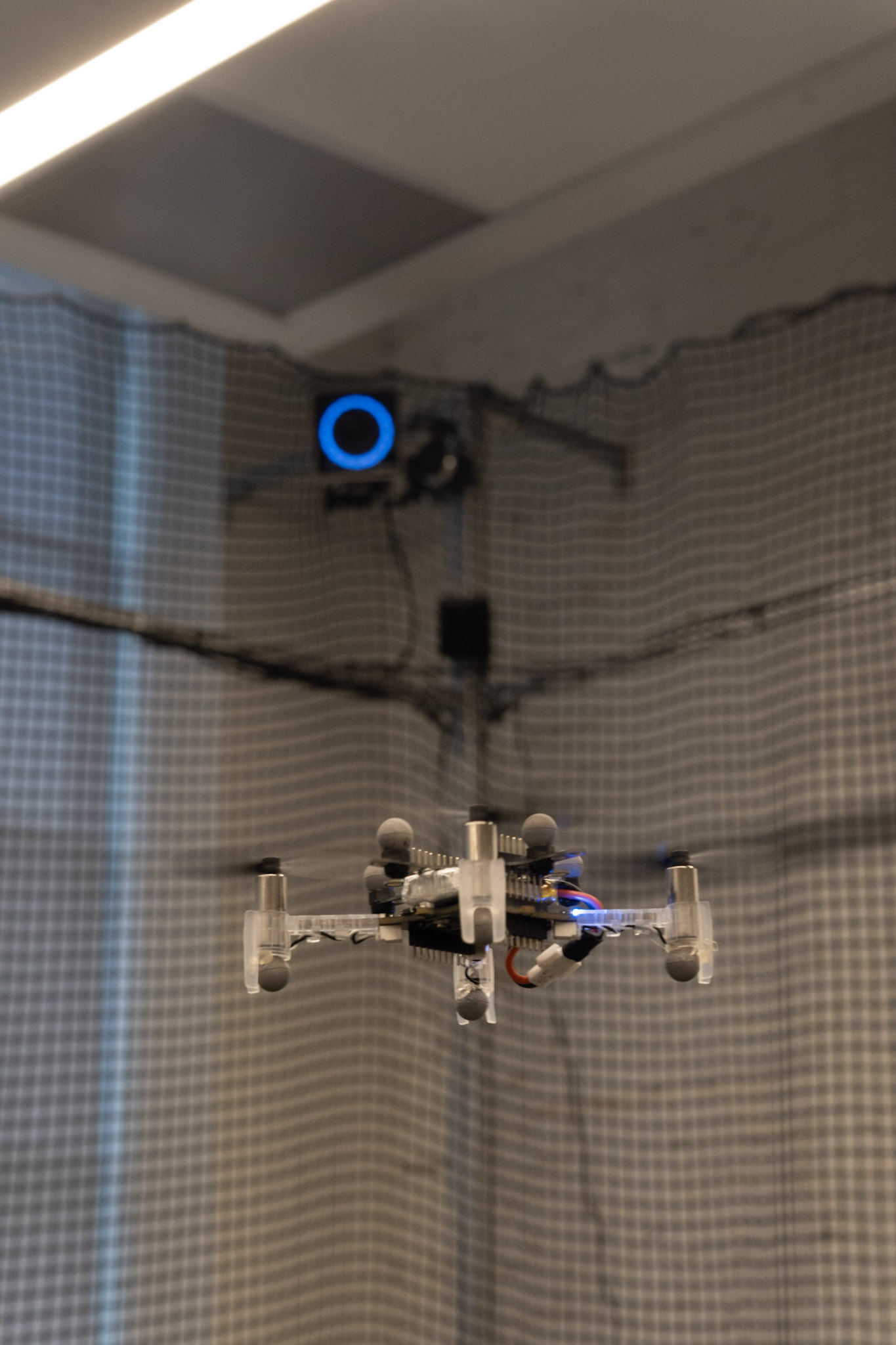}
        \caption{}
        \label{fig:experiment_scenarios_MoCap}
    \end{subfigure}
    \hfill
    \begin{subfigure}[b]{0.28\linewidth}
        \includegraphics[width=\linewidth]{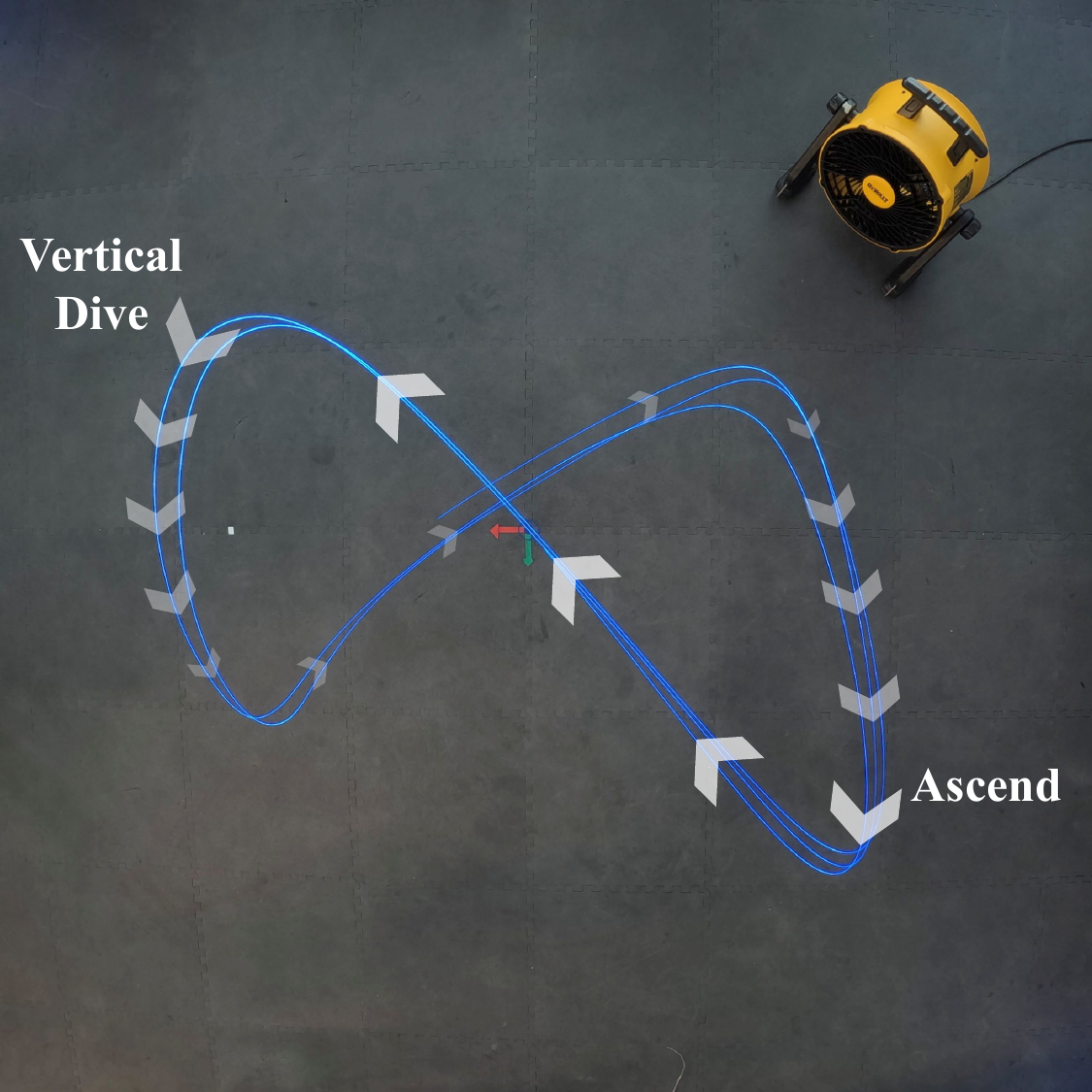}
        \caption{}
        \label{fig:experiment_scenarios_freeflight}
    \end{subfigure}
    \hfill
    \begin{subfigure}[b]{0.212\linewidth}
        \includegraphics[width=\linewidth]{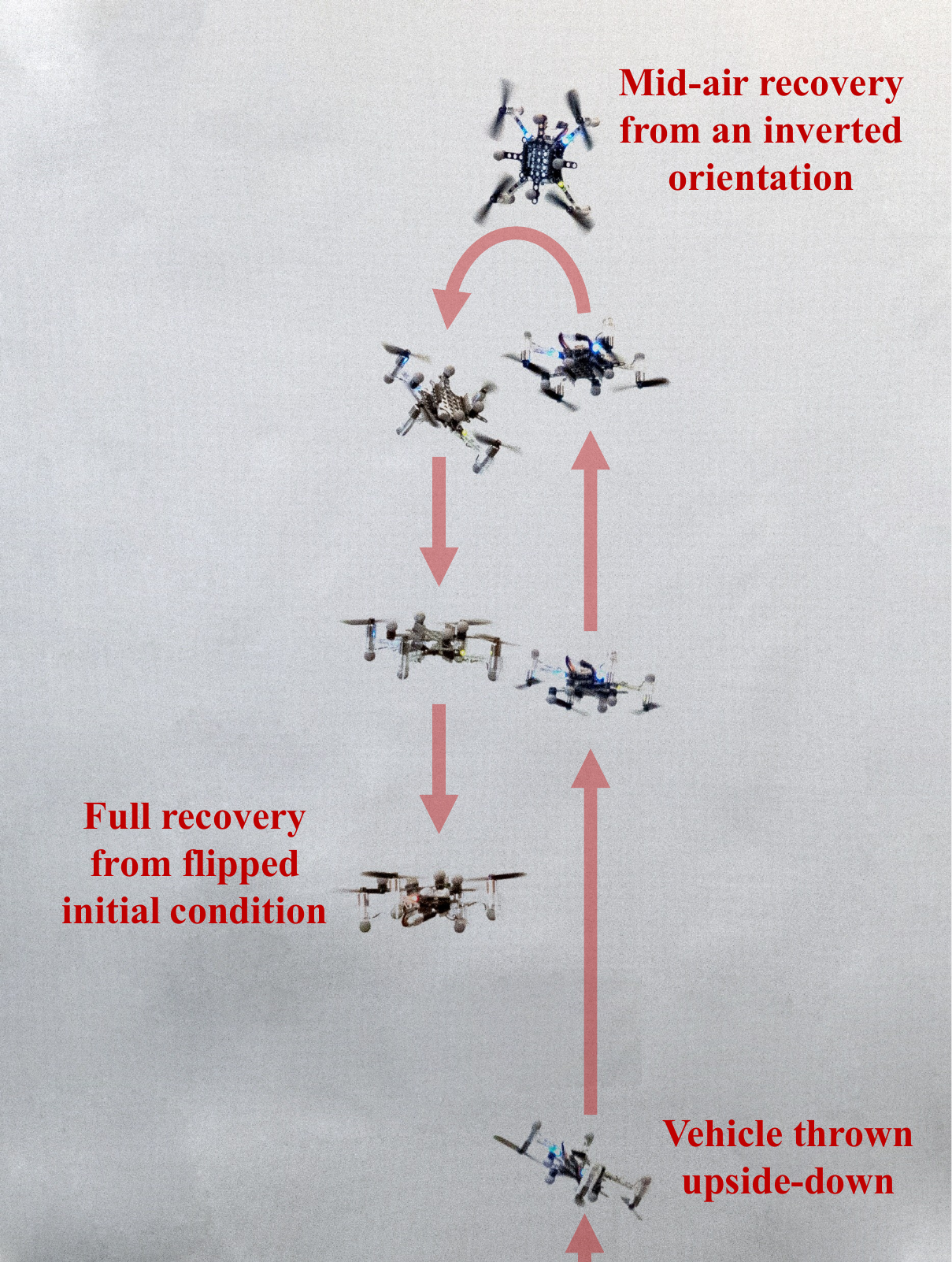}
        \caption{}
        \label{fig:experiment_scenarios_throw}
    \end{subfigure}

    \caption{
    Experimental setups used for controller validation:
    (\subref{fig:experiment_scenarios_gimbal}) custom-built 3-DOF gimbal platform for isolated attitude control testing;
    (\subref{fig:experiment_scenarios_MoCap}) 6-DOF free-flight using motion capture system;
    (\subref{fig:experiment_scenarios_freeflight}) lemniscate trajectory in the presence of wind disturbance;
    (\subref{fig:experiment_scenarios_throw}) dynamic throw-launch experiments.
    }
    \label{fig:experiment_scenarios}
\end{figure*}

Next, we conduct free flight tests, in an indoor environment equipped with motion capture system (Fig. \ref{fig:experiment_scenarios_MoCap}), tracking an aggressive lemniscate trajectory (Fig. \ref{fig:experiment_scenarios_freeflight}), and sensitivity analysis in this setup.
Finally, we carry out dynamic throw launch tests (Fig. \ref{fig:experiment_scenarios_throw}) to evaluate QSMC and AQSMC under abrupt initiation conditions.
\subsection{Benchmark methods and controller parameters values}
Given our focus on classical nonlinear control methods, we benchmark our results against common controllers in this category, including Euler-based SMC (ESMC) \cite{xu2006sliding}, the geometric tracking controller (GTC) \cite{lee2010geometric}, and the proportional derivative- (PD-) quaternion attitude controller with an SMC position controller (QPD). 
{Appendix \ref{se:appendixA}} provides the details of each benchmark method. We fine-tuned each method for each experiment scenario to the best of our ability. Appendix \ref{se:appendixB} presents the parameter values used in our experiments.


We also tested robust GTC (RGTC) \cite{lee2013nonlinear} and differential flatness with an INDI attitude controller (DF-INDI) \cite{sun2022comparative}, but both frequently failed in our test scenarios and were excluded from the reported results. RGTC required negative thrust values, consistent with \cite{lee2013nonlinear}, which our test vehicle could not produce. DF-INDI exhibited degraded performance during aggressive maneuvers and relied on optimization-based control allocation such as that used in \cite{sun2022comparative}, unlike the simpler inversion of \eqref{eq:G_1} used by other methods. This added complexity reduces DF-INDI’s practicality, and any performance gains from advanced allocation would likely extend to other controllers as well.

\subsection{Performance metrics}
To quantitatively compare the controllers, we compute the root mean square (RMS) of tracking errors and control efforts for each method.
As the objective of gimbal experiments is attitude tracking, we calculate the RMS of $\mathbf{q}_e$, denoted by $q_{e_\text{RMS}}$.
However, for free flight experiments, the objective is to track $\boldsymbol{\xi}_d$ and $\psi_d$; therefore, we calculate the RMS of $\boldsymbol{\xi}_e$, and $\psi_e = \psi - \psi_d$, denoted by $\xi_{e_\text{RMS}}$, and $\psi_{e_\text{RMS}}$.

To evaluate the control efforts of different controllers, we investigate the normalized pulse width modulation (NPWM) signal sent to the quadrotor motors.
An NPWM of zero represents no power, and an NPWM of 1 represents full throttle, providing a reliable indicator of the energy expended for vehicle control.
Therefore, we define the following metric
\begin{equation}
    \text{NPWM}_\text{RMS} = \sum_{i=1}^{4}\text{NPWM}_{i_\text{RMS}},
\end{equation}
where $\text{NPWM}_{i_\text{RMS}}$ indicates the RMS value of the NPWM of the $i$-th motor, for the duration of a flight test.

\subsection{Quadrotor used for the experiments}
We used the Bitcraze Crazyflie 2.1 nano quadrotor, with the following parameters: 
$\beta = 45^\circ$, $c_t = 2.88 \times 10^{-8}\;[\mathrm{N/s^2}]$, $c_q = 7.24 \times 10^{-10}\;[\mathrm{Nm/s^2}]$, $l = 92\;[\mathrm{mm}]$, $\hat{m} = 32\;[\mathrm{g}]$, $\hat{\mathbf{J}} = \mathrm{diag}(1.66,1.66,2.93) \times 10^{-5}\;[\mathrm{kg\,m^2}]$. 
The vehicle’s low mass and inertia make it highly sensitive to disturbances, serving as an excellent testbed for evaluating controller robustness.

We implemented all controllers in C and integrated them into the vehicle firmware running on an onboard STM32F405 microcontroller. The position and attitude controllers ran at 250 Hz and 500 Hz, respectively.

For the gimbal experiments, we used onboard inertial measurements. For free flight, we used a motion capture system for measuring attitude and position.
For real-time data communication and logging, we used the Crazyradio 2.0 module.

\subsection{Gimballed attitude control}
The gimbal experiments aimed to evaluate each method’s attitude controller independently of its position controller.

Despite our efforts to minimize the gimbal’s weight and friction, it still produced relatively large moments for our nano quadrotor, requiring high controller gains compared to free flight (Tab. \ref{tab:control_params}).
However, since the moments were unmodeled, they acted as large unknown disturbances, making the gimbal an effective platform for evaluating controller robustness..

To ensure consistent initial conditions, we set the desired attitude to the origin at the start of each trial and let the vehicle to stabilize for 10 $[\text{s}]$ before applying the desired trajectory.

\subsubsection{Scenario 1}
We began with a simple desired trajectory: $\boldsymbol{\eta}_d = 0.2 \left[\sin(0.2\pi t), \cos(0.2 \pi t), 0\right]^\top $.
Figures \ref{fig:gimbal_2_err} and \ref{fig:gimbal_2_pwm} show $\mathbf{q}_d$ and NPWM, with performance metrics tabulated in Tab. \ref{tab:performance_metrics_gimbal}.

\begin{figure}[t]
    \centering
    \includegraphics[width=1\linewidth]{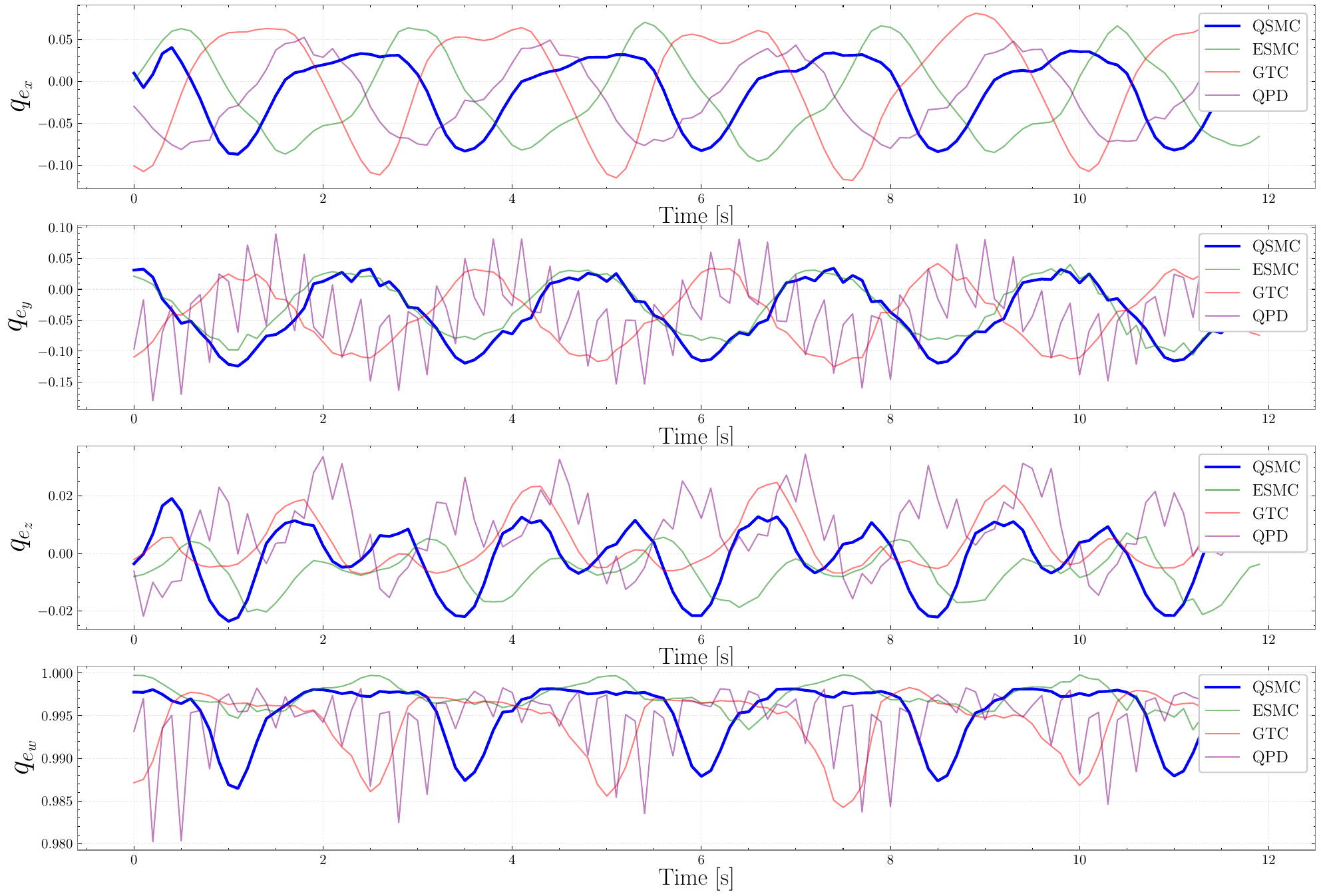}
    \caption{$\mathbf{q}_e$ in gimballed attitude control -- Scenario 1}
    \label{fig:gimbal_2_err}
\end{figure}
\begin{figure}[t]
    \centering
    \includegraphics[width=1\linewidth]{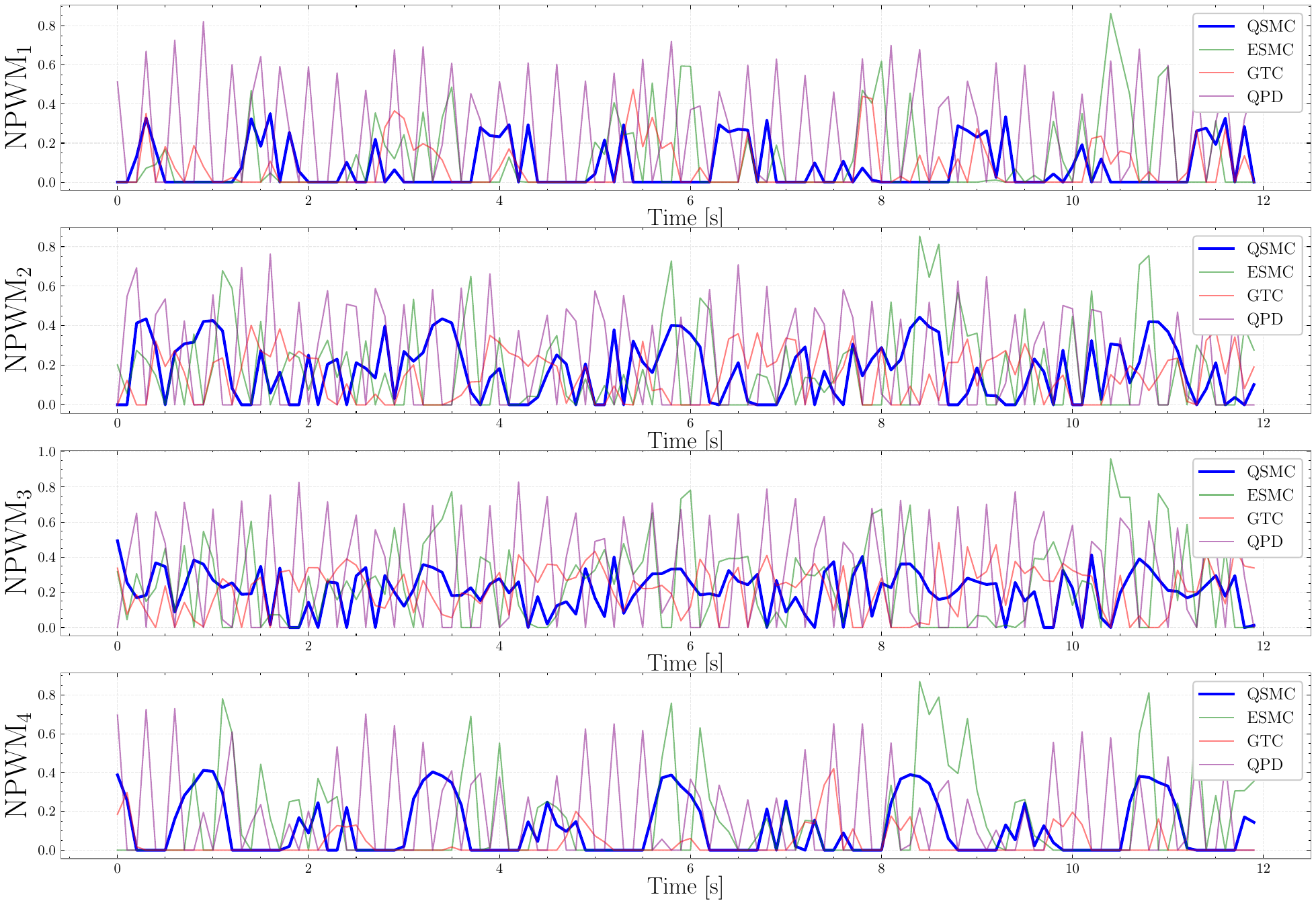}
    \caption{$\text{NPWM}_i$ in gimballed attitude control -- Scenario 1}
    \label{fig:gimbal_2_pwm}
\end{figure}


\begin{table*}[t]
    \centering
    \caption{Performance metrics in gimballed attitude control (mean $\pm$ standard deviation)}
    \label{tab:performance_metrics_gimbal}
    \begin{tabular}{lcccc}
        \toprule
        & \multicolumn{2}{c}{Scenario 1 -- $\boldsymbol{\eta}_d = 0.2 \left[\sin(0.2\pi t), \cos(0.2 \pi t), 0\right]^\top $} & \multicolumn{2}{c}{Scenario 2 -- $\boldsymbol{\eta}_d = 0.5 \left[\sin(0.2\pi t), \cos(0.2 \pi t), 0\right]^\top $}\\
        \cmidrule(lr){2-3} \cmidrule(lr){4-5}
        Controller & $q_{e_\text{RMS}}$ & $\text{NPWM}_\text{RMS}$ & $q_{e_\text{RMS}}$ & $\text{NPWM}_\text{RMS}$ \\
        \midrule
        QSMC & 0.0758 $\pm$ 0.0404 & 0.4013 $\pm$ 0.1316 &    0.1590 $\pm$ 0.0904 & 0.7231 $\pm$ 0.2648 \\
        ESMC & 0.0744 $\pm$ 0.0395 & 0.5821 $\pm$ 0.2243 &    Unstable & Unstable \\
        GTC  & 0.0903 $\pm$ 0.0507 & 0.3536 $\pm$ 0.1115 &    0.2322 $\pm$ 0.1331 & 0.6029 $\pm$ 0.2167 \\
        QPD  & 0.0855 $\pm$ 0.0471 & 0.7220 $\pm$ 0.2669 &    0.2260 $\pm$ 0.1296 & 0.8082 $\pm$ 0.3065 \\
        \bottomrule
    \end{tabular}
\end{table*}

As shown in Fig. \ref{fig:gimbal_2_err}, all controllers maintained small tracking errors throughout the mission. However, the sliding mode controllers, QSMC and ESMC, outperformed QPD and GTC, as confirmed by the $q_{e_\text{RMS}}$ values in Tab. \ref{tab:performance_metrics_gimbal}.
This can be attributed to the robustness of SMC that enables more effective compensation of the unknown gimbal moments.

While the $q_{e_\text{RMS}}$ values for QSMC and ESMC are comparable, the $\text{NPWM}_{\text{RMS}}$ values indicate that QSMC achieves similar tracking performance with 32\% less mean control effort compared to ESMC. This efficiency stems from QSMC’s ability to leverage the inherent characteristics of the $\mathbb{S}^3$ manifold, resulting in smoother and more precise control signals.

The GTC displays the largest tracking errors but requires the least control effort. Its attitude controller, as described in \eqref{eq:gtc_attitude_control_law}, is a nonlinear PD controller on $\mathrm{SO(3)}$ that depends on an accurate model of the vehicle’s dynamics.
This becomes a major limitation here, where unmodeled gimbal moments introduce substantial disturbances, causing the GTC to struggle.

As for the QPD, it demonstrates a slightly smaller tracking error compared to GTC, but with the largest control effort among all methods. 
We achieved QPD's lowest tracking errors by setting large values for $\mathbf{K}_P$, given in Tab. \ref{tab:control_params}.
While such large gains managed to stabilize the vehicle in the face of unmodeled gimbal moments, it created large oscillatory control signals, evident from Fig. \ref{fig:gimbal_2_pwm}.



\subsubsection{Scenario 2}
Next, we set the desired attitude trajectory to $\boldsymbol{\eta}_d = 0.5 \left[\sin(0.2\pi t), \cos(0.2 \pi t), 0\right]^\top $.
Figures \ref{fig:gimbal_5_err} and \ref{fig:gimbal_5_pwm} present the $\mathbf{q}_e$ and NPWM signals, with metrics presented in Tab. \ref{tab:performance_metrics_gimbal}.

\begin{figure}
    \centering
    \includegraphics[width=1\linewidth]{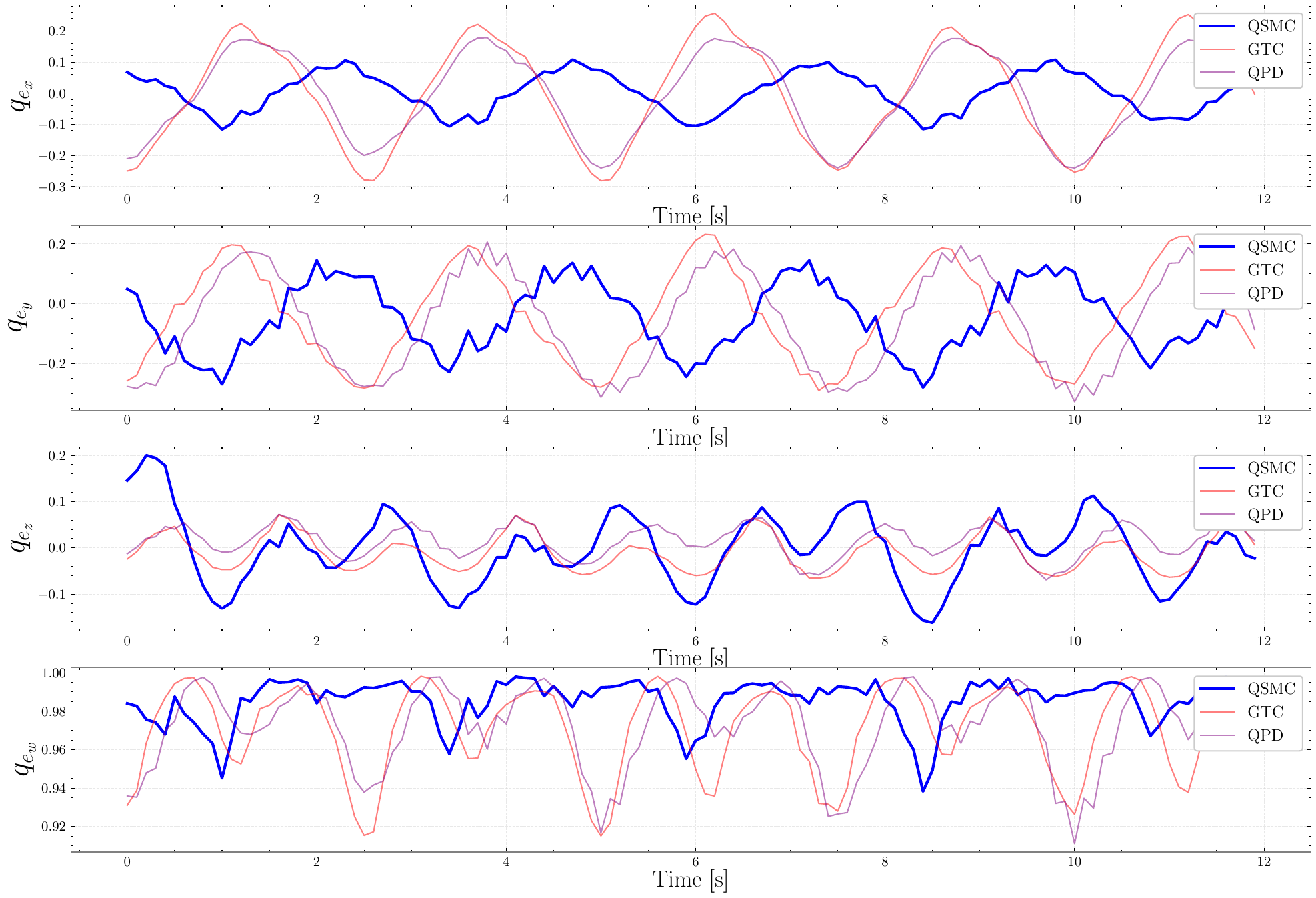}
    \caption{$\mathbf{q}_e$ in gimballed attitude control -- Scenario 2}
    \label{fig:gimbal_5_err}
    \centering
    \includegraphics[width=1\linewidth]{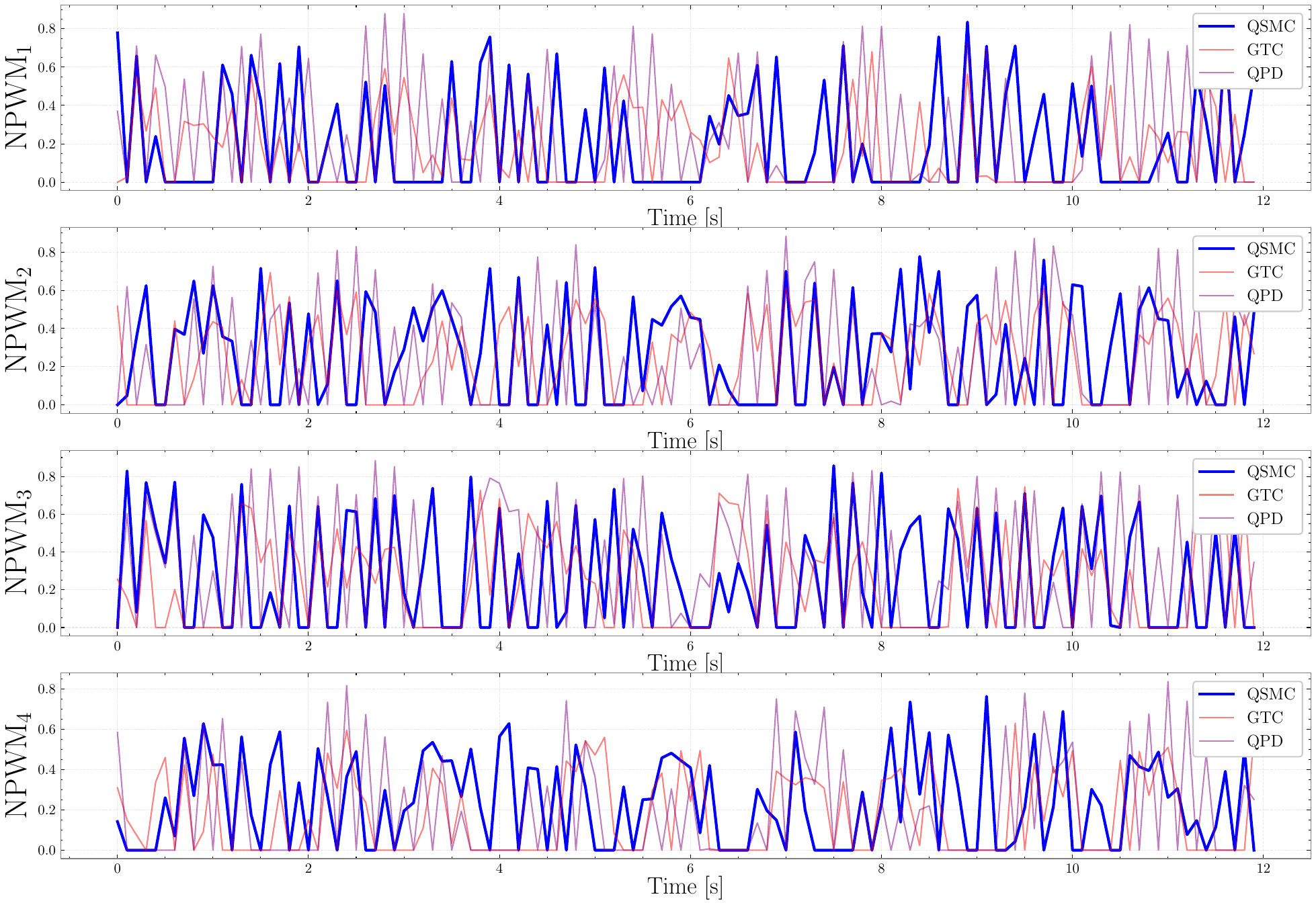}
    \caption{$\text{NPWM}_i$ in gimballed attitude control -- Scenario 2}
    \label{fig:gimbal_5_pwm}
\end{figure}

The larger roll and pitch angles in this scenario challenge the validity of simplified dynamics \eqref{eq:simplified_rot_dyn} used in the ESMC.
Despite several hours of experimentation and tuning followed by the sensitivity analysis to be detailed in the next section, the ESMC did not generate stable results.
This highlights the limitation of this method for quadrotors that has been overlooked in the literature.
Note that this limitation is different than the gimbal lock issue of Euler angles and directly stems from the simplification process explained in Section \ref{se:related_work}.

All other controllers demonstrate stable results in this scenario; however, the tracking errors have increased as the mission has become more challenging.
Examining the metrics in Tab. \ref{tab:performance_metrics_gimbal}, we observe similar trends to those seen in the first set of gimbal experiments. However, with the increase in the difficulty of the mission, the differences between the controllers are more pronounced.
The QSMC features 32\% lower mean tracking error, but 19\% larger mean control effort compared to the GTC, and 30\% lower mean tracking error, and 11\% lower mean control effort compared to the QPD.
This highlights the benefits of QSMC, achieving low tracking errors without a significant increase in the control effort.



\begin{figure}[t]
    \centering
    \includegraphics[width=0.49\linewidth]{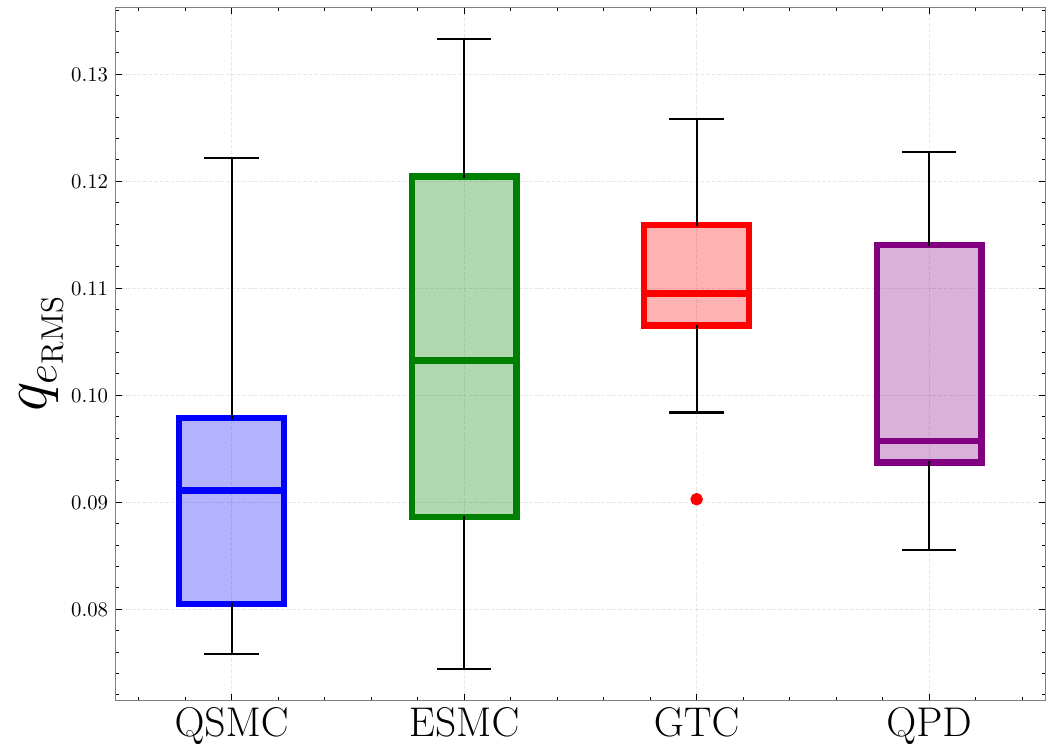}
    \includegraphics[width=0.48\linewidth]{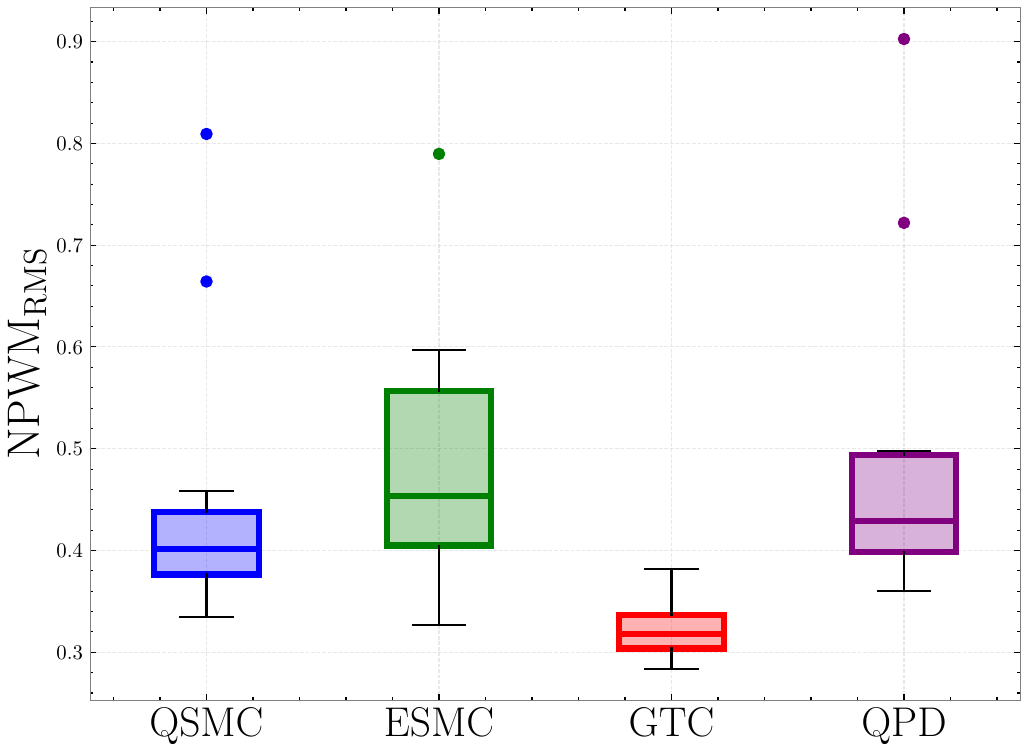}
    \caption{Box plots $q_{e_\text{RMS}}$ and $\text{NPWM}_{\text{RMS}}$ in sensitivity analysis of gimballed attitude control -- Scenario 1}
    \label{fig:s_box_2_q}
\end{figure}
\begin{figure}[t]
    \centering
    \includegraphics[width=0.49\linewidth]{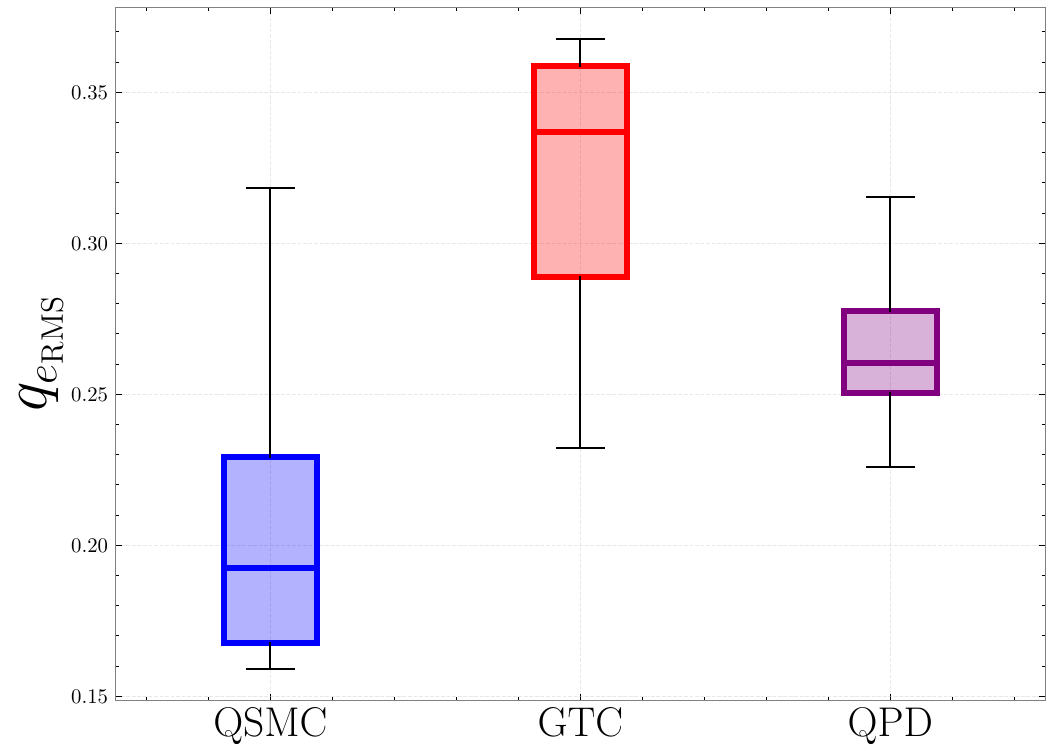}
    \includegraphics[width=0.48\linewidth]{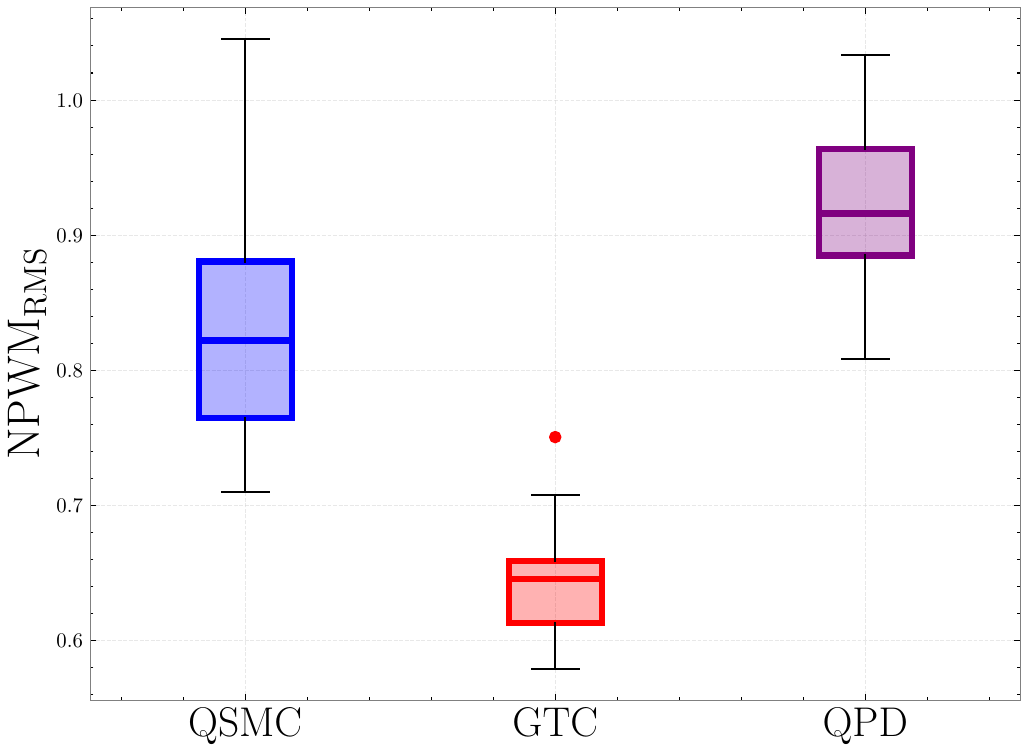}
    \caption{Box plots $q_{e_\text{RMS}}$ and $\text{NPWM}_{\text{RMS}}$ in sensitivity analysis of gimballed attitude control -- Scenario 2}
    \label{fig:s_box_5_q}
\end{figure}

\begin{figure}[t]
    \centering
    \includegraphics[width=1\linewidth]{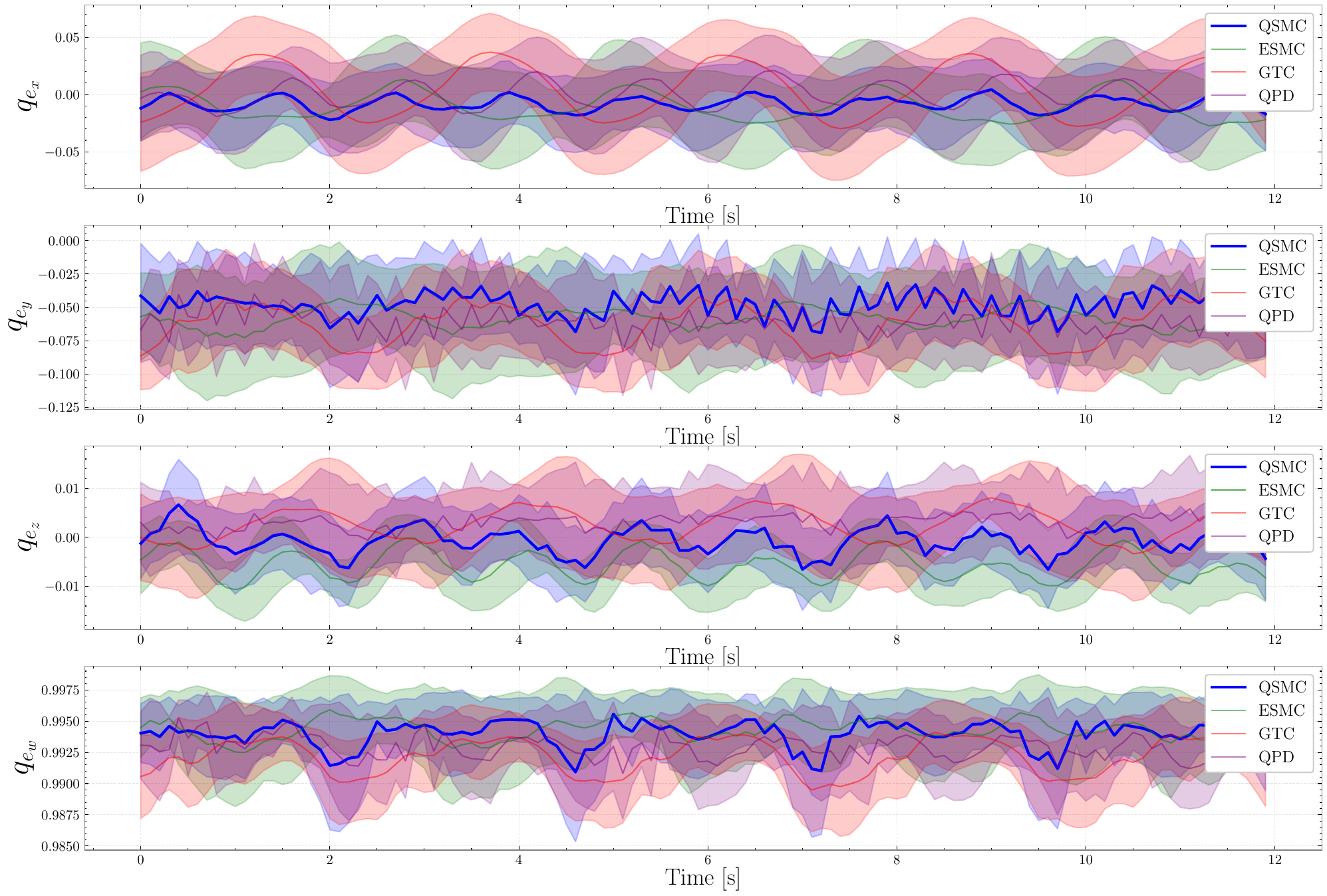}
    \caption{Variations of $\mathbf{q}_e$ in sensitivity analysis of gimballed attitude control -- Scenario 1}
    \label{fig:s_err_bound_2}
\end{figure}
\begin{figure}[t]
    \centering
    \includegraphics[width=1\linewidth]{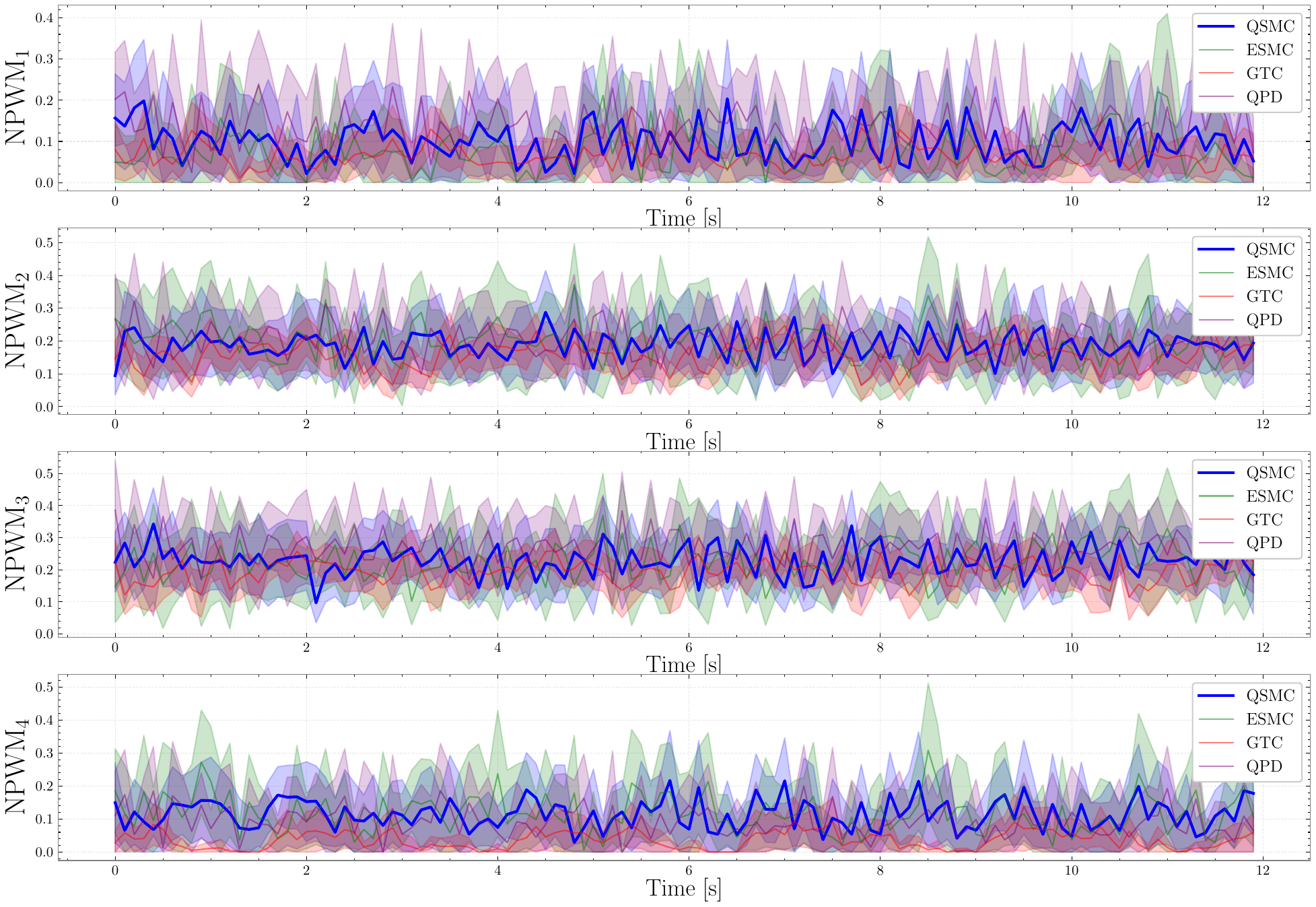}
    \caption{Variations of $\text{NPWM}_i$ in sensitivity analysis of gimballed attitude control -- Scenario 1}
    \label{fig:s_pwm_bound_2}
\end{figure}
\begin{figure}[t]
    \centering
    \includegraphics[width=1\linewidth]{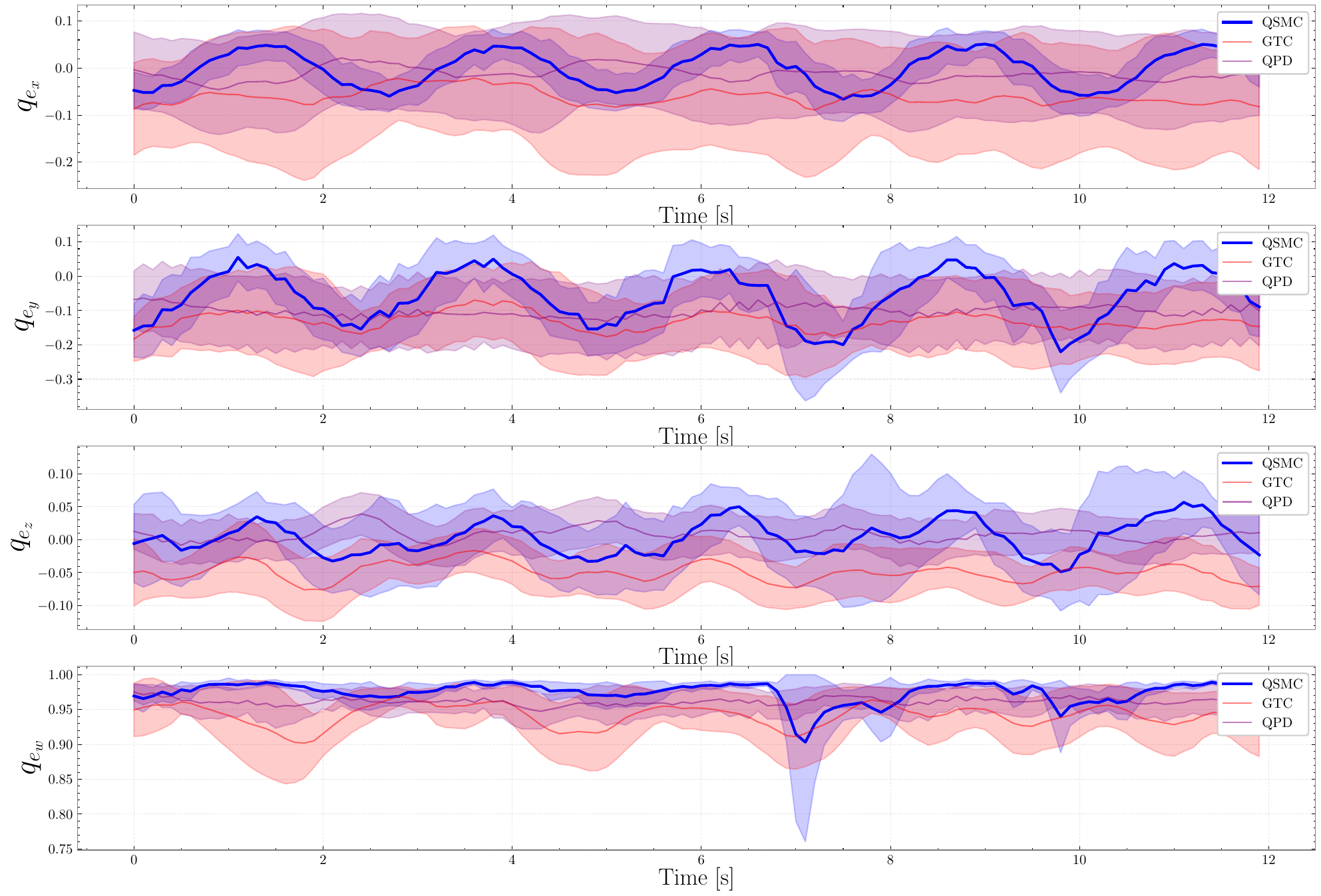}
    \caption{Variations of $\mathbf{q}_e$ in sensitivity analysis of gimballed attitude control -- Scenario 2}
    \label{fig:s_err_bound_5}
\end{figure}
\begin{figure}[t]
    \centering
    \includegraphics[width=1\linewidth]{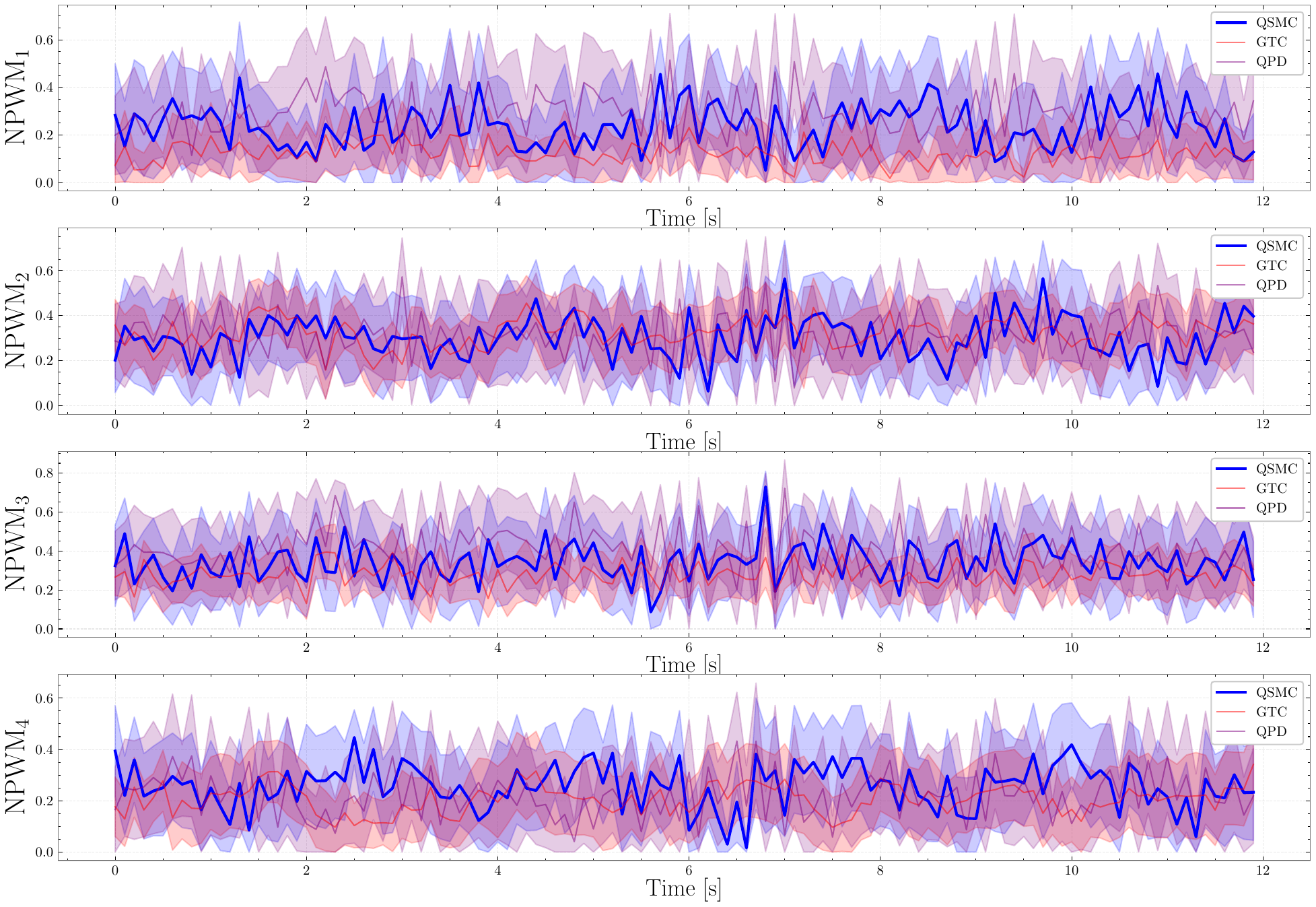}
    \caption{Variations of $\text{NPWM}_i$ in sensitivity analysis of gimballed attitude control -- Scenario 2}
    \label{fig:s_pwm_bound_5}
\end{figure}

\subsubsection{Sensitivity analysis}
As mentioned earlier, we carefully fine-tuned each controller for each experiment scenario to ensure their lowest tracking errors.
However, because this process is empirical, it introduces a potential confounding factor that a controller's apparent superiority might stem from more effective tuning.
While this confounding factor cannot be entirely eliminated, it can be mitigated via sensitivity analysis.

To this end, we conducted a Monte Carlo analysis, generating 10 new sets of values for the parameters of each controller by introducing random deviations of up to $\pm$20\% from the empirically optimized values.
We repeated gimballed attitude control tests using the new parameters values for both Scenario 1 and Scenario 2. 
This led to a total of 80 new trials.

Figures \ref{fig:s_box_2_q} and \ref{fig:s_box_5_q} present the median (the central line within each box), interquantile range (IQR) (the height of each box), and data beyond the IQR (whiskers) of $q_{e_\text{RMS}}$ and $\text{NPWM}_\text{RMS}$ for each controller across these trials.
To illustrate the general trend of $\mathbf{q}_e$ and NPWM over time, we present their error band plots in Figs. \ref{fig:s_err_bound_2}--\ref{fig:s_pwm_bound_5}, showing 95\% confidence intervals of data around the average values.
Note that in Scenario 2, the ESMC generated unstable results for all choices of control parameters; therefore, it is absent from Scenario 2 plots.
The instability of the ESMC even in this set of trials reinforces our earlier observation that the oversimplification of the rotational dynamics in the ESMC leads to performance degradation and instability for high-roll and -pitch maneuvers.

As for Scenario 1, Fig. \ref{fig:s_box_2_q} indicates that the minimum $q_{e_\text{RMS}}$ obtained in one experiment corresponds to the ESMC.
However, the largest IQR of $q_{e_\text{RMS}}$ and $\text{NPWM}_\text{RMS}$ also coincide with the ESMC, indicating the high sensitivity of the ESMC to parameters tuning.
Moreover, the error band plots Fig. \ref{fig:s_err_bound_2} and \ref{fig:s_pwm_bound_2} reveal that the ESMC exhibits the largest peaks in tracking error variations and actuator efforts.

Among the other methods, the GTC exhibits the largest median of $q_{e_\text{RMS}}$ and the smallest median of $\text{NPWM}_\text{RMS}$ in both scenarios.
The QPD has a smaller median of $q_{e_\text{RMS}}$ compared to GTC, consistent with observations in Figs. \ref{fig:gimbal_2_err} -- \ref{fig:gimbal_5_pwm}.

Notably, QSMC presents the lowest median of $q_{e_\text{RMS}}$ and the second lowest median of $\text{NPWM}_\text{RMS}$.
The IQR of $q_{e_\text{RMS}}$ for QSMC remains relatively small, indicating relatively low sensitivity to parameter variations, as opposed to ESMC.
While there were occasions that certain tunings of QSMC led to inferior results compared to other controllers, QSMC has generally demonstrated the lowest tracking errors and the second lowest control efforts, as indicated by both box plots (Figs. \ref{fig:s_box_2_q} and \ref{fig:s_box_5_q}), and error band plots (Figs. \ref{fig:s_err_bound_2} -- \ref{fig:s_pwm_bound_5}).
This stems from the robustness of SMC, and the unwinding-free formulation that leverages the inherent characteristics of $\mathbb{S}^3$.

\begin{figure}[t]
    \centering
    \includegraphics[width=1\linewidth]{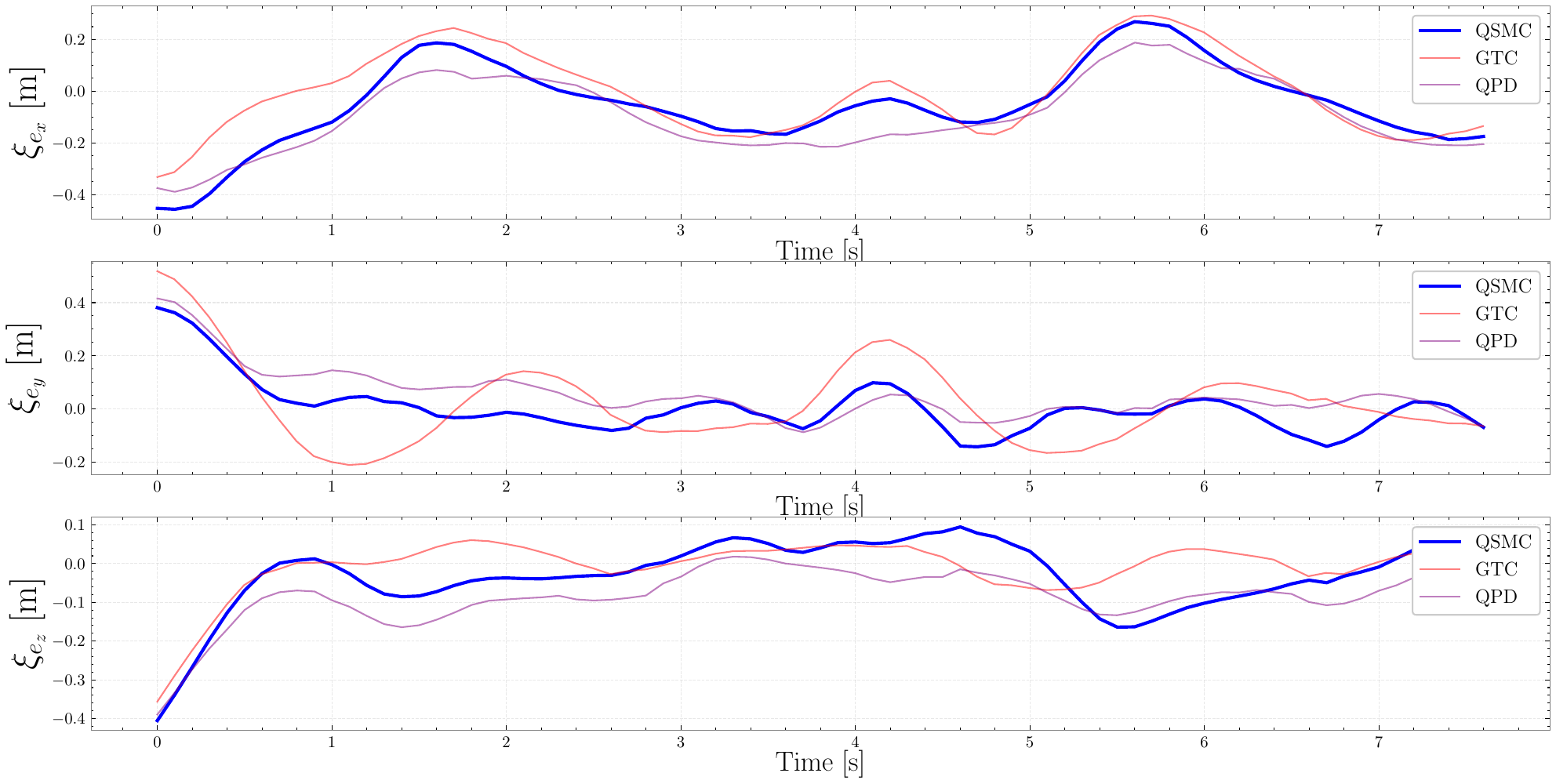}
    \caption{$\boldsymbol{\xi}_e$ in lemniscate trajectory tracking}
    \label{fig:lemniscate_err}
    \centering
    \includegraphics[width=1\linewidth]{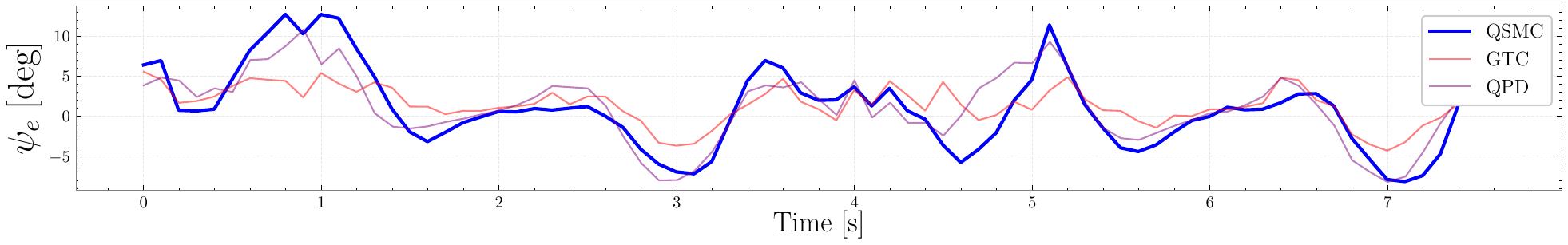}
    \caption{$\psi_e$ in lemniscate trajectory tracking}
    \label{fig:lemniscate_heading_err}
    \centering
    \includegraphics[width=1\linewidth]{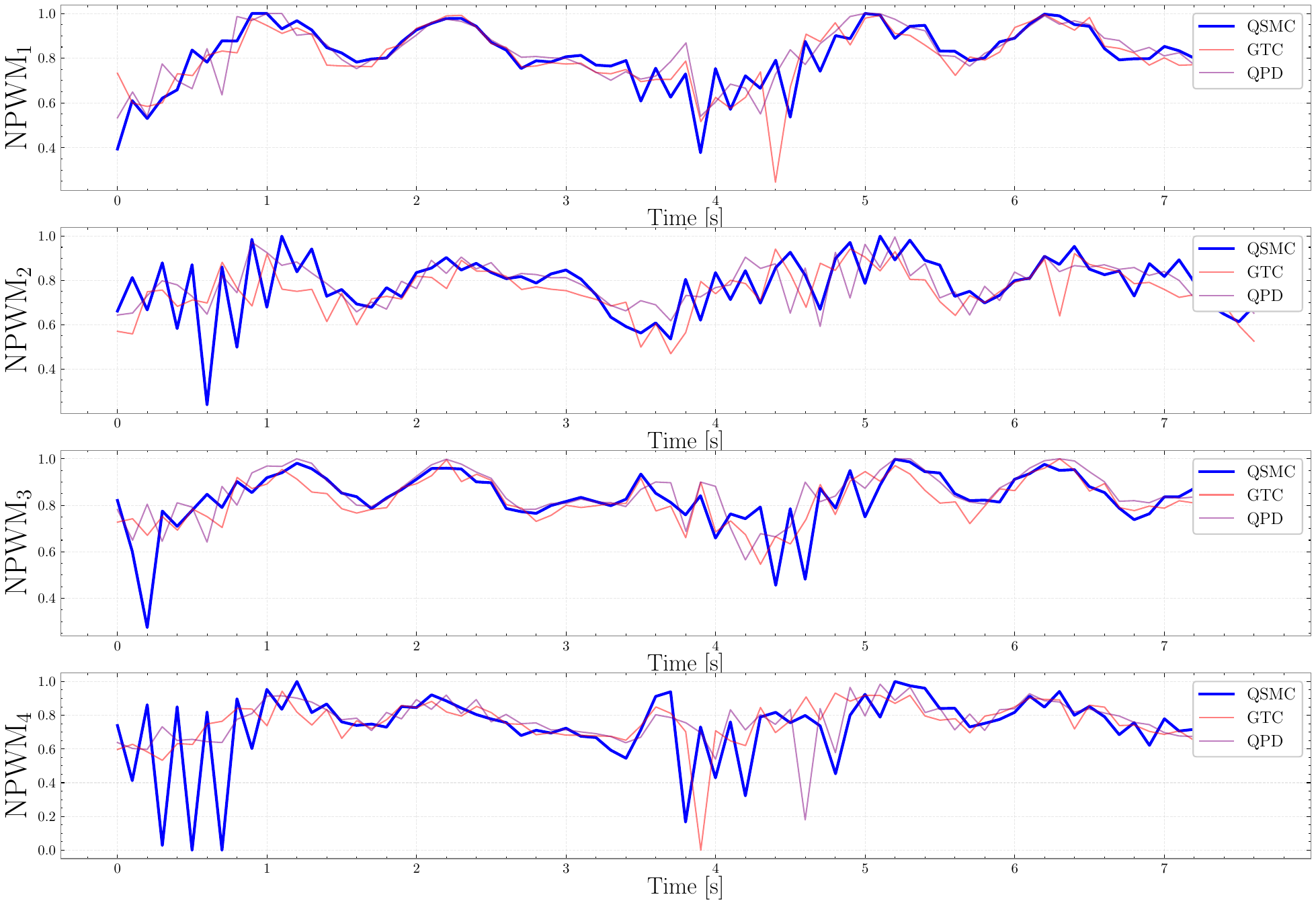}
    \caption{$\text{NPWM}_i$ in lemniscate trajectory tracking}
    \label{fig:lemniscate_pwm}
\end{figure}

\begin{table}[t] 
    \centering
    \caption{Performance metrics for lemniscate trajectory tracking}
    \label{tab:performance_metrics_lemniscate}
    \begin{tabular}{lcccc}
        \toprule
        Controller & $\xi_{e_\text{RMS}}$ & $\psi_{e_\text{RMS}}$ & $\text{NPWM}_\text{RMS}$ \\
        \midrule
        QSMC & 0.2170 $\pm$ 0.1225 & 5.0904 $\pm$ 4.9728 & 1.6078 $\pm$ 0.1553\\
        GTC  & 0.2299 $\pm$  0.1327 & 2.7241 $\pm$ 2.2914 & 1.5840 $\pm$ 0.1183 \\
        QPD  &  0.2284 $\pm$ 0.1254 & 4.3893 $\pm$ 4.2335 & 1.6282 $\pm$ 0.1124 \\
        \bottomrule
    \end{tabular}
\end{table}

\subsection{Lemniscate trajectory tracking}\label{se:limniscate_trajectory_tracking}
After evaluating the attitude controllers, we extend our results to tracking a lemniscate trajectory shown in Fig.~\ref{fig:experiment_scenarios_freeflight}.
To construct this trajectory, we used the polynomial trajectory planning method described in \cite{richter2016polynomial}.
The vehicle completes two loops along this trajectory.

Note that the original lemniscate trajectory does not include variations along the vertical axis; however, to increase the mission’s complexity and induce high accelerations, we introduced a dive in the desired trajectory.  
The acceleration in this trajectory reaches 5.84 $[\text{m}/\text{s}^2]$, challenging for a nano quadrotor.
The low inertial of the vehicle makes it more susceptible to destabilizing effects from such acceleration; making this mission demanding for the controllers.
We also used a fan to generate wind disturbance for a section of the trajectory, adding another layer of complexity to the mission.

\subsubsection{Experiments with empirically tuned parameters}
Despite our efforts, we were not able to obtain stable results using the ESMC.
The amplitude of $\varphi_d$ and $\theta_d$ angles in this mission exceeds 30$^\circ$ and 50$^\circ$, respectively.
This is far beyond the Euler angles range where the approximation in \eqref{eq:simplified_rot_dyn} remains valid.
While not presented here, we obtained stable results with the ESMC during sluggish maneuvers involving small $\varphi$ and $\theta$; however, the ESMC did not demonstrate any clear advantage over the other methods.
Overall, the instability of the ESMC in tracking the lemniscate trajectory aligns with our observations in Scenario 2 of the gimbal experiments, highlighting a clear limitation of this method.
This likely explains why, despite a substantial body of literature, the adoption of ESMC in practical implementations remains low.

The other three controllers successfully accomplished this mission, with their results summarized in Figs. \ref{fig:lemniscate_err} -- \ref{fig:lemniscate_pwm}, and Tab. \ref{tab:performance_metrics_lemniscate}.
We tuned the controllers with the primary goal of achieving their lowest position tracking errors.
Among them, the QSMC achieved the lowest position tracking performance, with the minimum mean and standard deviation of $\xi_{e_\text{RMS}}$, while requiring less control effort than the QPD. 
The QPD marginally outperformed the GTC in position tracking; potentially due to the robustness offered by its SMC position controller that compensated for the wind disturbance.
However, the GTC excelled in other metrics, demonstrating the lowest control effort and the smallest heading error.
Despite these advantages, its position tracking accuracy - our primary tuning goal - remained inferior to the QSMC and the QPD. 

Note that the QSMC and the QPD share the same position control law. However, achieving the minimal position tracking errors for each controller required slightly different tuning due to differences in their attitude controllers.



\begin{figure}[t]
    \centering
    \includegraphics[width=0.49\linewidth]{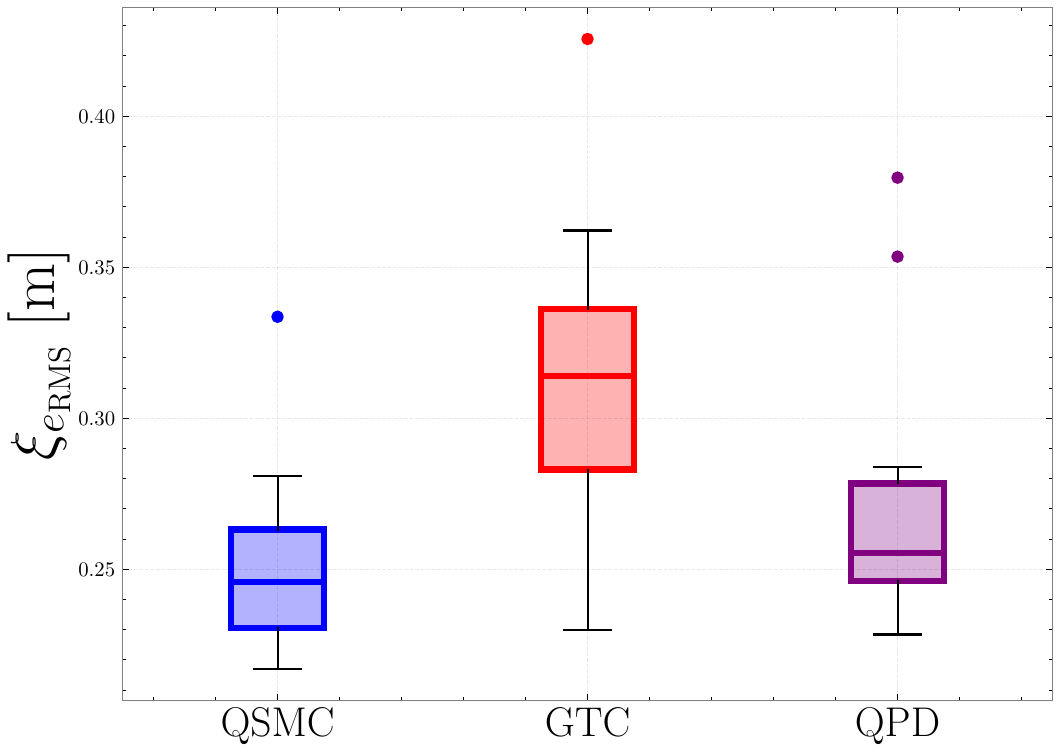}
    \includegraphics[width=0.48\linewidth]{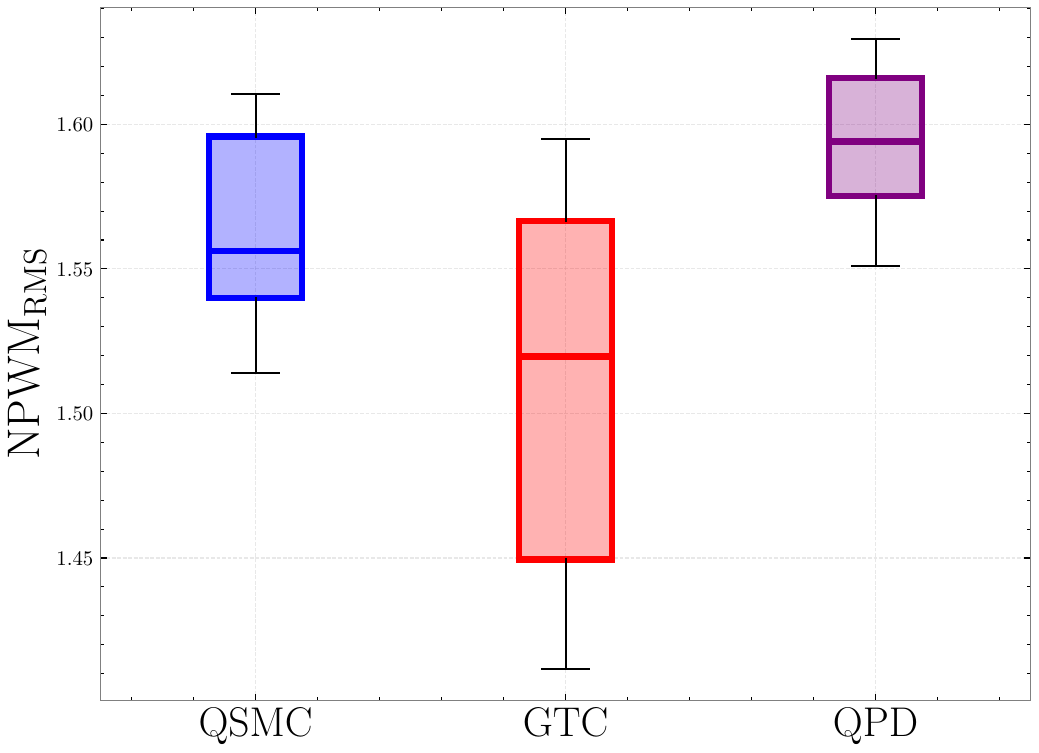}
    \caption{Box plot of $\xi_{e_\text{RMS}}$ and $\text{NPWM}_\text{RMS}$ in sensitivity analysis of lemniscate trajectory tracking. The GTC and QPD had one failed trial. The QSMC succeeded in all trials.}
    \label{fig:s_box_err_lemniscate}
    \centering
    \includegraphics[width=1\linewidth]{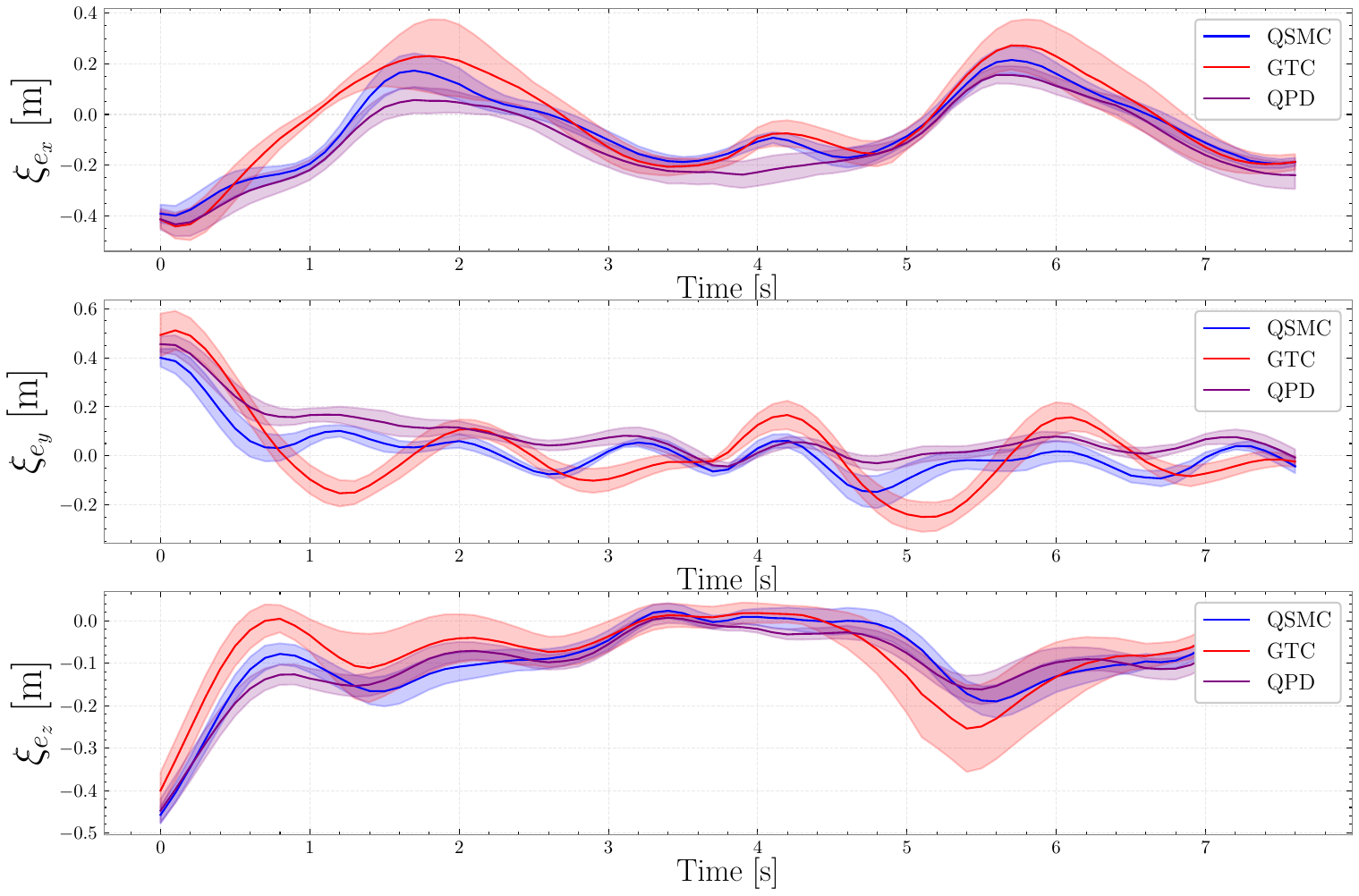}
    \caption{Variations of $\boldsymbol{\xi}_e$ in sensitivity analysis of lemniscate trajectory tracking}
    \label{fig:s_err_bound_err_lemniscate}
    \centering
    \includegraphics[width=1\linewidth]{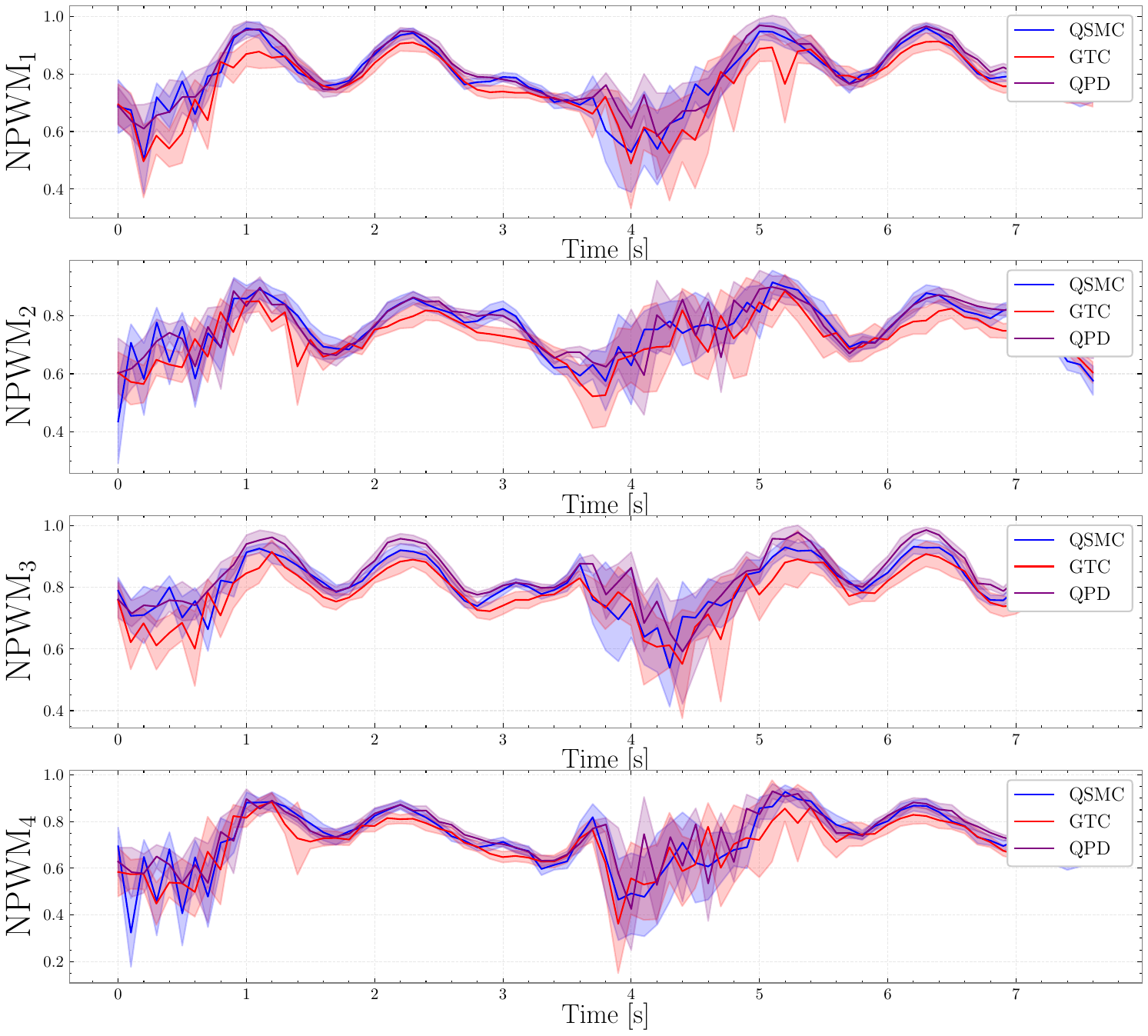}
    \caption{Variations of $\text{NPWM}_i$ in sensitivity analysis of lemniscate trajectory tracking}
    \label{fig:s_err_bound_pwm_lemniscate}
\end{figure}
\vspace{-0.2\baselineskip}
\subsubsection{Sensitivity analysis}
To alleviate the confounding factor of empirical tuning in the above conclusions, we present a sensitivity analysis of the four controllers. 
Similar to the sensitivity analysis in gimballed attitude control, we generated 10 new sets of parameter values for each controller by introducing random deviations of up to $\pm 20\%$ from the empirically optimized values, conducting 40 new trials.

Figures \ref{fig:s_box_err_lemniscate}--\ref{fig:s_err_bound_pwm_lemniscate} summarize the results of these trials.
The ESMC failed to generate stable results and is therefore absent from these plots.
Notably, the GTC and QPD had one failed trial where the quadrotor crashed.
In contrast, the QSMC was the only controller to succeed in all trials.

The QSMC achieves the lowest median value of $\xi_{e_\text{RMS}}$, and the second-lowest median of $\text{NPWM}_{e_\text{RMS}}$.
Consistent with previous observations, the GTC exhibits the lowest control efforts; however, it has the highest median and IQR of $\xi_{e_\text{RMS}}$, indicating that its position tracking accuracy and robustness are inferior to the other two methods.
The QPD displays a smaller median of $\xi_{e_\text{RMS}}$ compared to the GTC and an IQR comparable to the QSMC, likely due to the robustness of its SMC position controller.
However, it requires the highest control effort among the controllers.

Overall, these results indicate that even with variations in the tuning parameters, the QSMC consistently exhibits superior position tracking performance.
It was also the only controller that did not crash the vehicle in any of the trials, showcasing remarkable robustness compared to the other methods.

The advantages of SMC are also evident when it is applied to the position control loop in QPD, leading to a smaller median and IQR of position error compared to the GTC.
While the GTC remains very capable of generating low control efforts, it reveals limitations when robust position tracking under disturbances is prioritized.

\subsection{AQSMC}\label{se:adaptive_experiments}
Having established the advantages of QSMC, we now compare AQSMC and QSMC, exploring the additional benefits offered by the adaptive switching gains.
However, before this comparison, we first experimentally investigate the effects of the design parameters in the adaptation laws, providing practical recommendations for their tuning.

\begin{figure}[t]
    \centering
    \includegraphics[width=1\linewidth, trim={0cm 0.14cm 0cm 0cm}, clip]{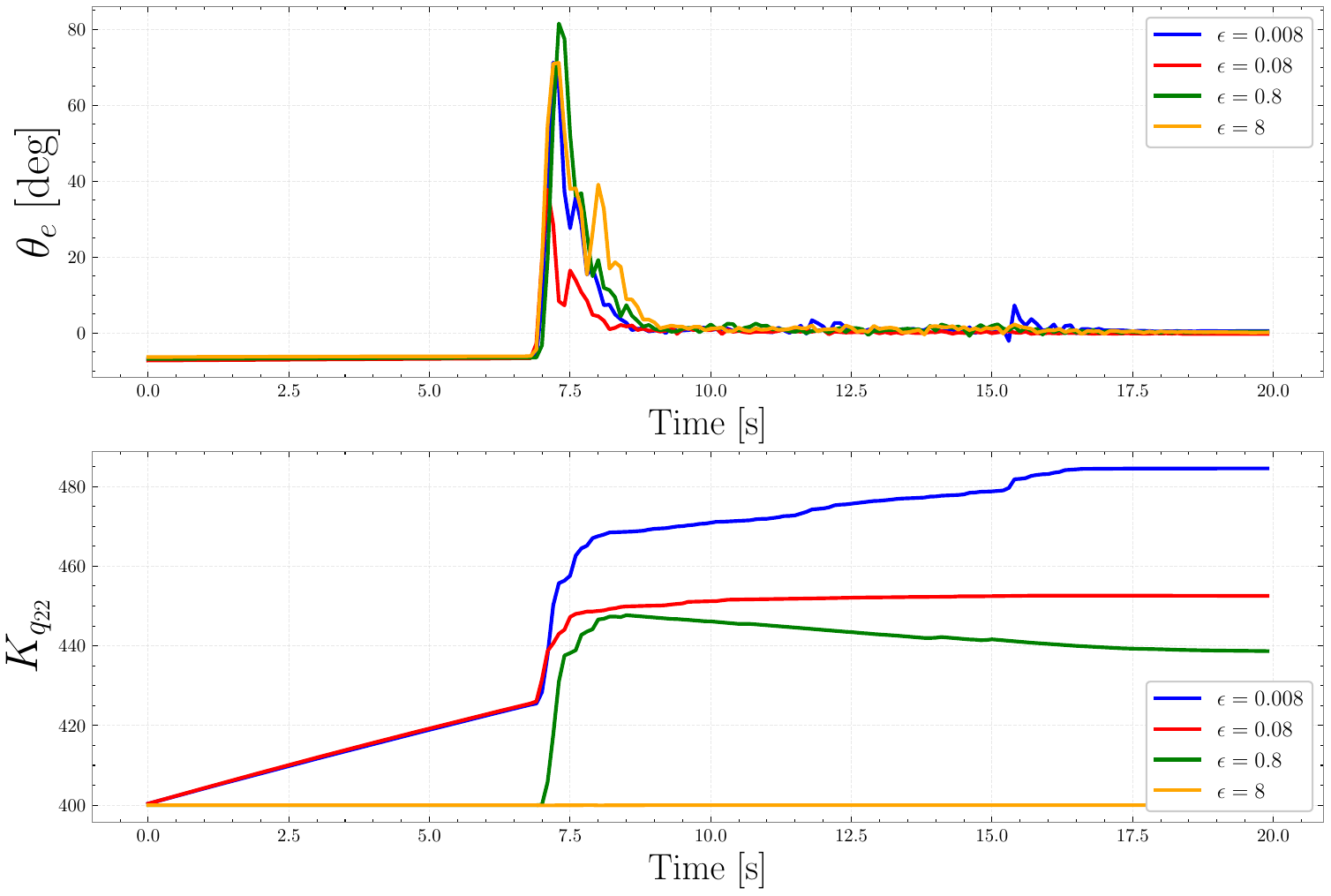}
    \caption{The effect of $\epsilon$ on tracking error and adaptive switching gain. A disturbance is applied at $t=7\;[\text{s}]$.}
    \label{fig:epsilon_err_k}
    \centering
    \includegraphics[width=1.0\linewidth, trim={0cm 0.14cm 0cm 0cm}, clip]{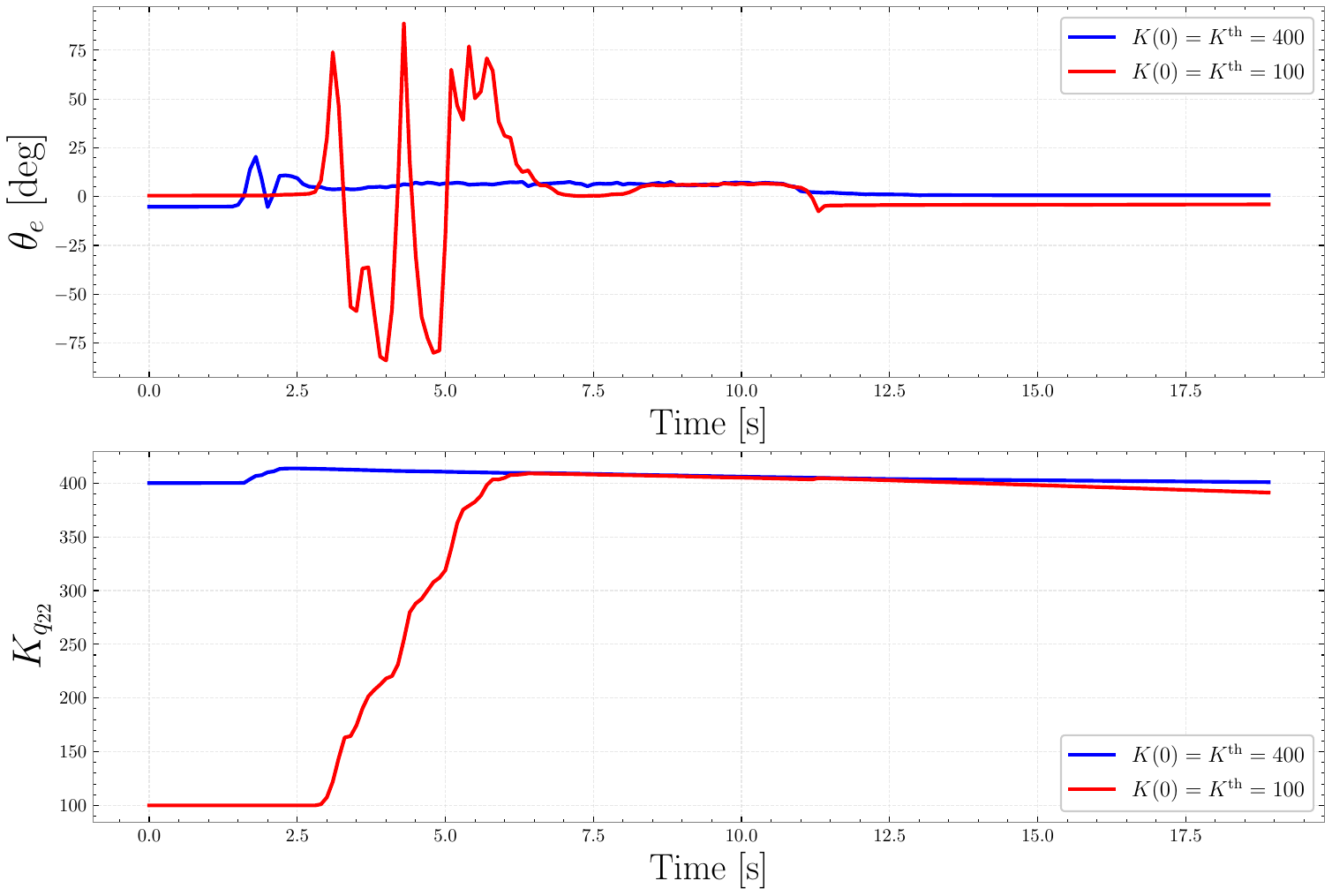}
    \caption{The effect of $K(0)$ switching gain value on disturbance rejection. A disturbance is applied at $t=2\;[\text{s}]$.}
    \label{fig:k_threshold}
\end{figure}

\subsubsection{Tuning AQSMC-specific parameters}
The adaptation laws \eqref{eq:adaptation_law_2} involve several design parameters.
Among them, $\Gamma$ is commonly recognized as the learning rate, governing the speed at which the adaptive gains are updated. 
The other parameters are however less prevalent, and we direct our attention to study their roles.
For brevity, we omit the indices of the parameters in our notations as our analysis apply to all axes of position and attitude control laws.


\textit{The effects of $\epsilon$:}
This parameter governs the condition under which ${K}$ is either increasing or decreasing. 
Smaller values of $\epsilon$, make $\dot{K} > 0$ more easily triggered.

We conducted single-axis gimballed attitude control experiments, locking two axes and focusing solely on the pitch axis, with a desired pitch angle of $\theta_d = 0$.
The vehicle operates in attitude hold for 20 seconds.
We apply a 5\;$[\text{m}/\text{s}]$ wind disturbance at $t = 7\;[\text{s}]$ for 10 seconds, observing the controller’s response to the disturbance and its subsequent recovery behavior for different $\epsilon$ values.

Figure \ref{fig:epsilon_err_k} illustrates the results.
At the onset of the experiment, $K$ begins to increase for the two smallest values of $\epsilon$. While this improves the controller’s ability to reject disturbance, particularly evident for $\epsilon = 0.08$, it also introduces sensitivity, causing oscillatory behavior near $\theta_e = 0$.
For these smaller values of $\epsilon$, $K$ shows a limited tendency to decrease as $s$ must converge to a small boundary defined by $\phi \epsilon$.

Conversely, larger values of $\epsilon$ make the controller less reactive. For instance, with $\epsilon = 8$, $K$ does not respond to disturbance, resulting in slower disturbance rejection. Therefore, $\epsilon$ should not be too large, to avoid sluggish responses, nor too small, to prevent excessive sensitivity. 

The case of $\epsilon = 0.8$ appears to strike a balance. It allows $K$ to respond effectively to the disturbance while ensuring that $K$ decreases after the error converges to zero. 
Note that, overall, AQSMC offers low sensitivity to changes in $\epsilon$. 
We were able to observe the above effects only by changing $\epsilon$ by orders of magnitude and applying a severe wind disturbance.

\textit{The effects of $K^\text{th}$ and $K(0)$}:
We note that $\dot{K} < 0$ only when $\lvert s \rvert < \epsilon \phi$, implying small tracking errors.
Under these conditions, minimizing $K$ is desirable to reduce control efforts. However, this can pose a risk: a sudden disturbance increase can lead to poor disturbance rejection or even instability.

Figure \ref{fig:k_threshold} illustrates this through a gimballed pitch stabilization experiment with a disturbance applied at $t=2\;[\text{s}]$.
While both controllers reject the disturbance, the lower gain causes erratic oscillations and a long settling time that could destabilize the vehicle in a free flight. 
For this reason, $K^\text{th}$ should not be set to overly small values. 
To maintain stable and responsive behavior under sudden disturbances, we recommend selecting values for $\hat{K(0)}$ and $K^\text{th}$ similar to the QSMC switching gains. 
Accordingly, for all subsequent experiments, we set $K^{\text{th}}$ to match the switching gains used in the QSMC.

\textit{The effects of $\mu$:}
This parameter defines the rate at which $K$ increases when $K < K^{\text{th}}$.
In our hardware experiments, $\mu$ did not appear to be as influential as other parameters in shaping the behavior of $K$.
However, large values of $\mu$ can cause oscillations in $K$, which may degrade system performance.
To avoid this, we recommend using smaller values of $\mu$ to ensure $K$ remains close to $K^{\text{th}}$ when $\lvert s \rvert < \epsilon \phi$.
We selected $\mu = 0.02$ for the rest of the experiments.

\begin{figure}[t]
    \centering
    \includegraphics[width=1\linewidth, trim={0cm 0.14cm 0cm 0cm}, clip]{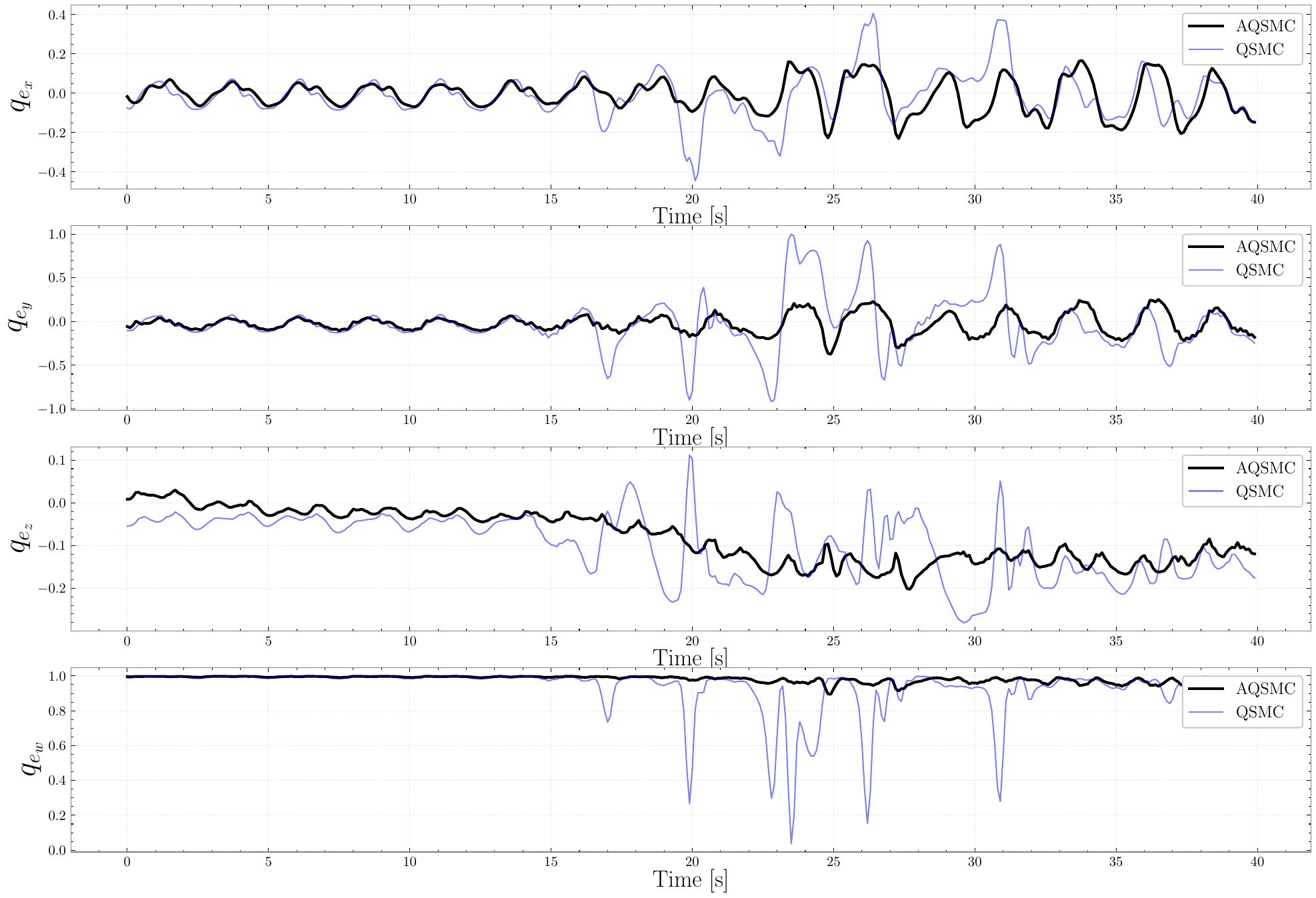}
    \caption{$\mathbf{q}_e$ in gimballed attitude control comparing AQSMC and QSMC}
    \label{fig:gimbal_aqsmc_err}
    \centering
    \includegraphics[width=1\linewidth, trim={0cm 0.14cm 0cm 0cm}, clip]{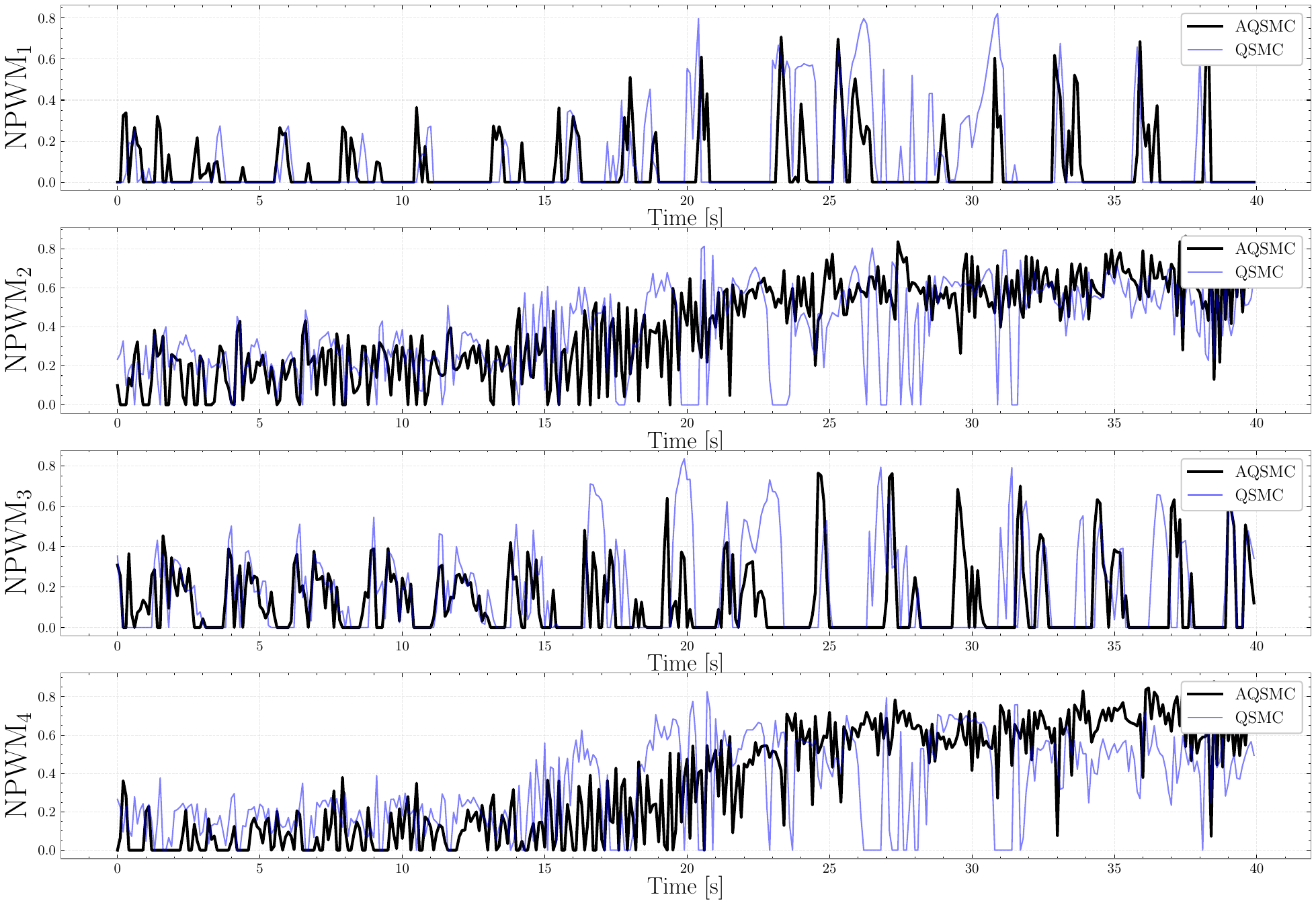}
    \caption{$\text{NPWM}_i$ in gimballed attitude control comparing AQSMC and QSMC}
    \label{fig:gimbal_aqsmc_pwm}
    \centering
    \includegraphics[width=1\linewidth, trim={0cm 0.14cm 0cm 0cm}, clip]{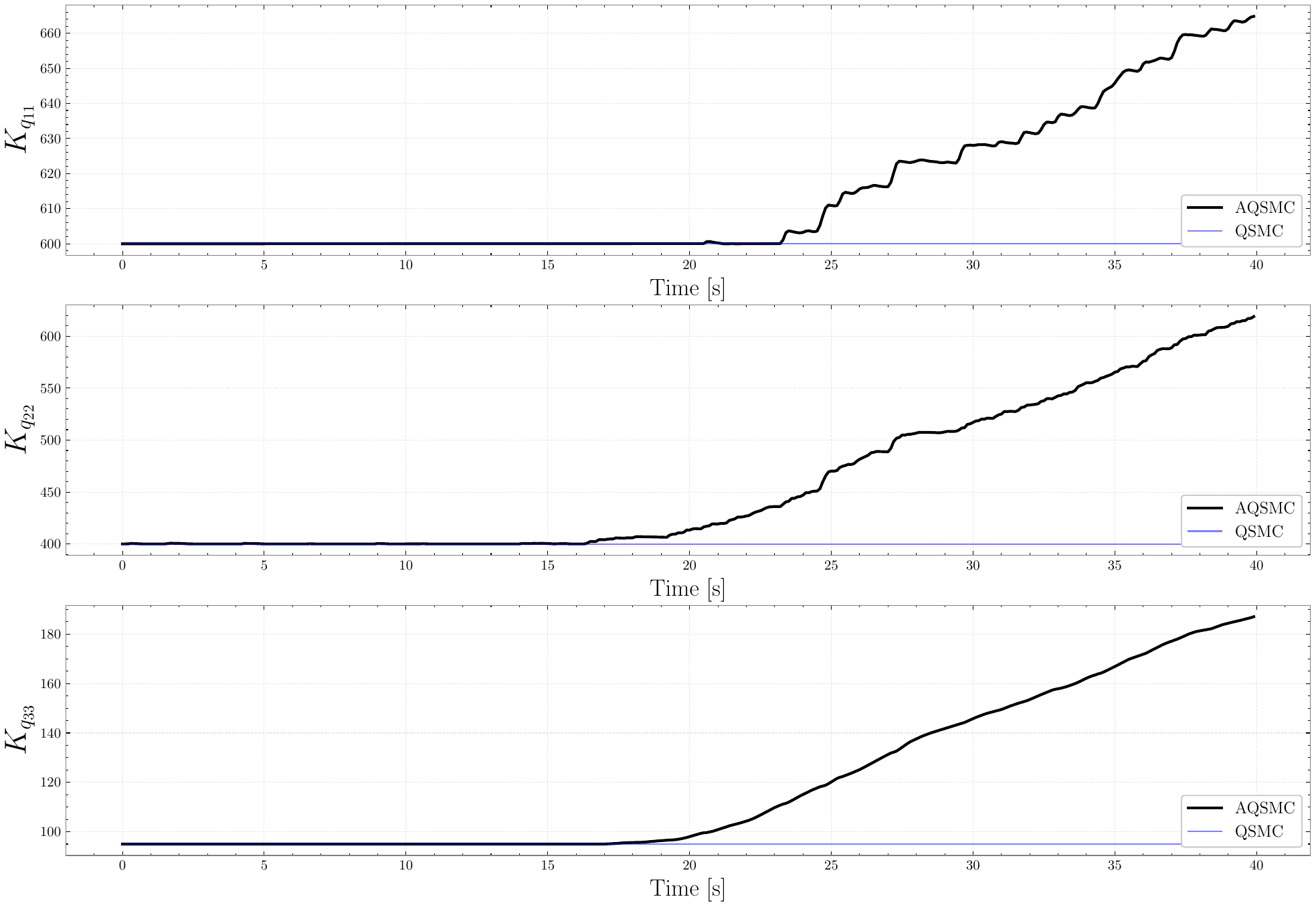}
    \caption{Switching gains in gimballed attitude control comparing AQSMC and QSMC}
    \label{fig:gimbal_aqsmc_k}
\end{figure}
\begin{table}[t]
    \centering
    \caption{Performance metrics for gimballed attitude control of QSMC and AQSMC }
    \label{tab:performance_metrics_gimbal_aqsmc_vs_qsmc}
    \begin{tabular}{lccc}
        \toprule
        Controller & $q_{e_\text{RMS}}$ & $\text{NPWM}_\text{RMS}$ \\
        \midrule
        QSMC & 0.3350 $\pm$ 0.1880  & 0.6943 $\pm$ 0.2165\\
        AQSMC  & 0.1744 $\pm$  0.0928 & 0.6907 $\pm$ 0.2076 \\
        \bottomrule
    \end{tabular}
\end{table}
\subsubsection{Gimballed attitude control}
Using the above recommendations for tuning adaptive gains, we conducted a new set of gimbal attitude control experiments with a focus on comparing AQSMC and QSMC in response to a wind disturbance.
The vehicle tracks $\boldsymbol{\eta}_d = 0.2 [\sin(0.2\pi t), \cos(0.2 \pi t), 0]^\top $, with a strong wind disturbance applied at $t = 15\;[\text{s}]$.
Figures \ref{fig:gimbal_aqsmc_err}--\ref{fig:gimbal_aqsmc_k} and Tab. \ref{tab:performance_metrics_gimbal_aqsmc_vs_qsmc} present the results.

For $t < 15\;[s]$, both the AQSMC and QSMC display similar tracking performance with minimal attitude errors.
With the application of the wind disturbance, QSMC struggles to maintain stability. Not only did the wind directly impact the vehicle, but it also affected the gimbal, which subsequently exacerbated the unmodeled gimbal moments exerted on the vehicle.
As such, the disturbance likely exceeds the upper bounds considered in \eqref{eq:gain_condition_pos}, leading to significant attitude deviations and eventual instability.
In contrast, as depicted in Fig. \ref{fig:gimbal_aqsmc_k}, the AQSMC adapts dynamically by increasing the gains in response to the disturbance. This enables AQSMC to stabilize the system, albeit with a noticeable increase in steady-state error.
Interestingly, AQSMC achieves the above results with lower control efforts highlighted by $\text{NPWM}_{\text{RMS}}$ in Tab. \ref{tab:performance_metrics_gimbal_aqsmc_vs_qsmc}

\begin{figure}
    \centering
    \includegraphics[width=1\linewidth]{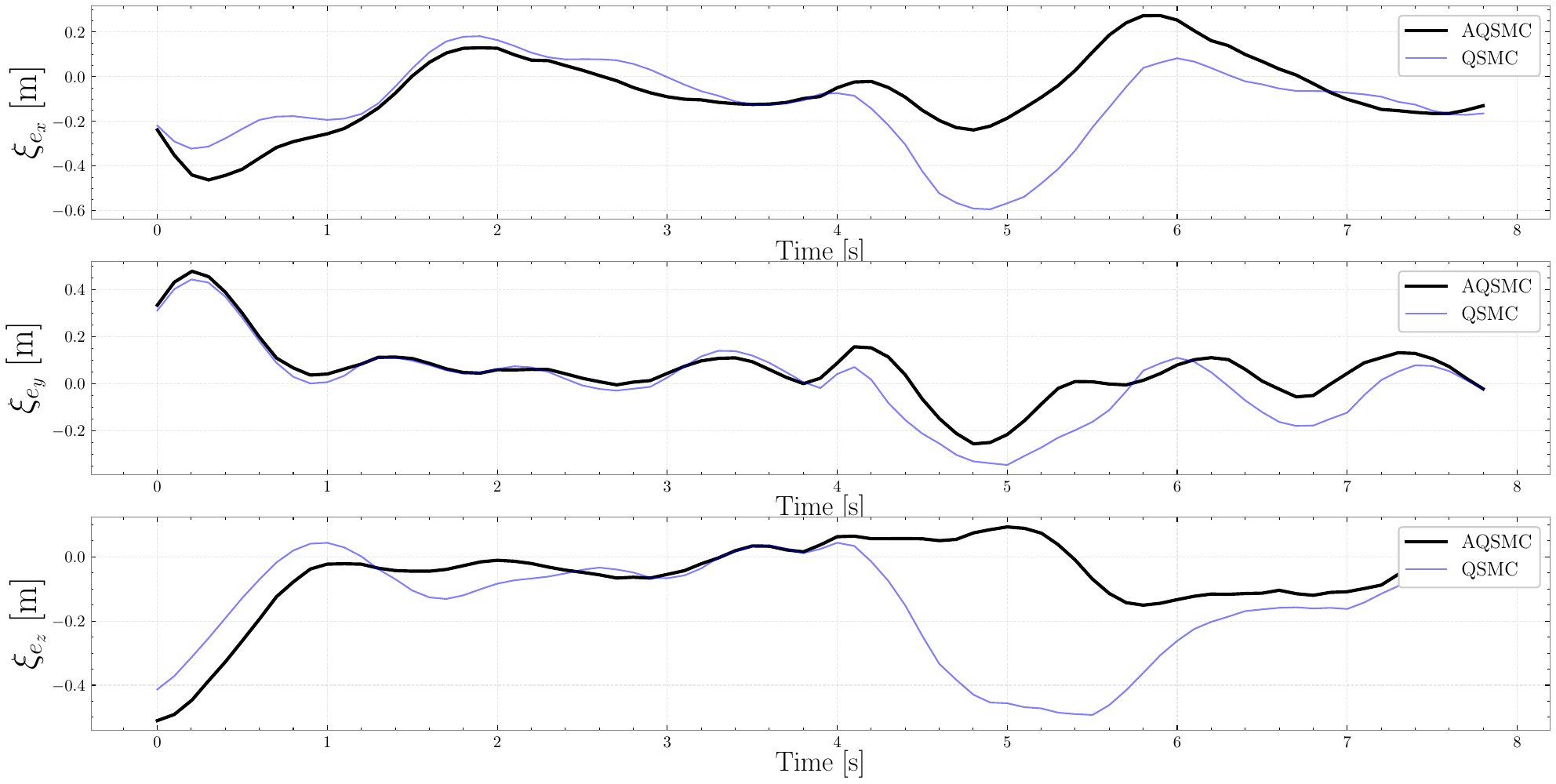}
    \caption{$\boldsymbol{\xi}_e$ in lemniscate trajectory tracking comparing AQSMC and QSMC with the highest level of disturbance}
    \label{fig:pos_error_lemniscate_aqsmc_vs_qsmc}
    \centering
    \includegraphics[width=1\linewidth]{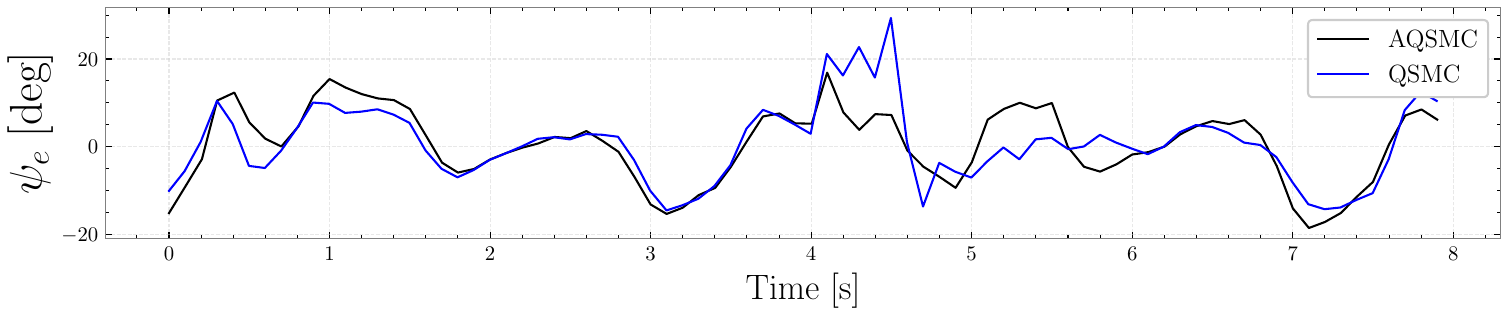}
    \caption{${\psi}_e$ in lemniscate trajectory tracking comparing AQSMC and QSMC with the highest level of disturbance}
    \label{fig:psi_error_lemniscate_aqsmc_vs_qsmc}
    \centering
    \includegraphics[width=1\linewidth]{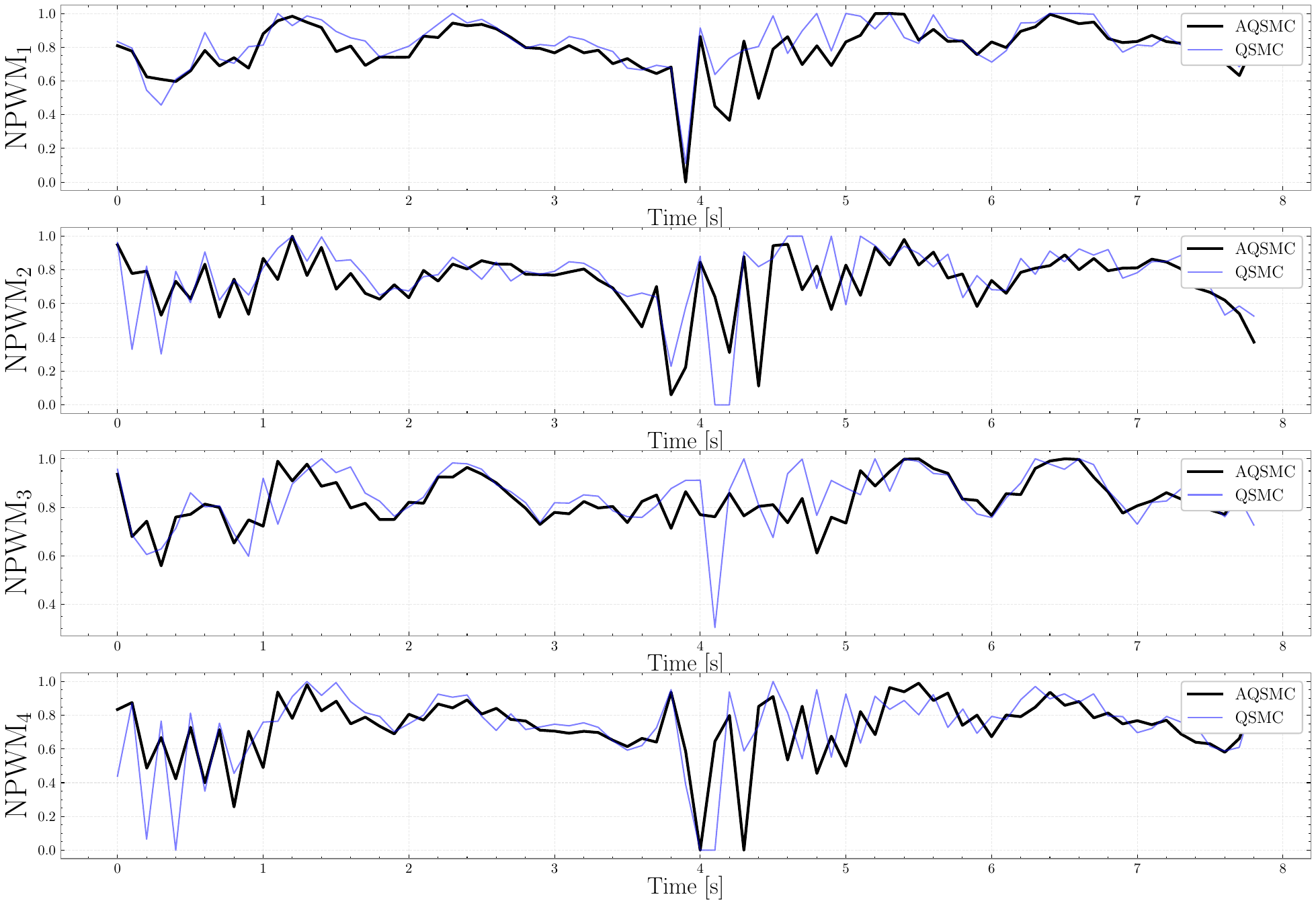}
    \caption{$\text{NPWM}_i$ in lemniscate trajectory tracking comparing AQSMC and QSMC with the highest level of disturbance}
    \label{fig:motor_lemniscate_aqsmc_vs_qsmc}
\end{figure}
\begin{table*}[t]
    \centering
    \caption{Performance metrics in lemniscate trajectory tracking comparing AQSMC and QSMC}
    \label{tab:performance_metrics_lemniscate_aqsmc_vs_qsmc}
    \begin{tabular}{lcccccc}
        \toprule
        & \multicolumn{3}{c}{QSMC} & \multicolumn{3}{c}{AQSMC}\\
        \cmidrule(lr){2-4} \cmidrule(lr){5-7}
        Wind Speed& $\xi_{e_\text{RMS}}$ & $\psi_e$ & $\text{NPWM}_\text{RMS}$ & $\xi_{e_\text{RMS}}$ & $\psi_e$ & $\text{NPWM}_\text{RMS}$ \\
        \midrule
        Up to 3.8 $\text[m/s]$ &0.2536 $\pm$ 0.1461& 7.4813 $\pm$ 7.2829& 1.6437 $\pm$ 0.1404  &0.2065 $\pm$ 0.1189 & 8.0102 $\pm$ 7.9883& 1.5533 $\pm$ 0.1311\\
        Up to 4.6 $\text[m/s]$ &0.6068 $\pm$ 0.3490& 10.0517$\pm$ 9.9257& 1.6229 $\pm$ 0.2092  &0.2284 $\pm$ 0.1302 & 7.6474 $\pm$ 7.5598& 1.5503 $\pm$ 0.1345\\
        Up to 5.6 $\text[m/s]$ &0.3469 $\pm$ 0.1795& 8.6149 $\pm$ 8.5771& 1.6223 $\pm$ 0.1765  &0.2484 $\pm$ 0.1417 & 8.1526 $\pm$ 8.1152& 1.5695 $\pm$ 0.1630\\
        \bottomrule
    \end{tabular}
\end{table*}

\subsubsection{Lemniscate trajectory tracking}
We revisited the lemniscate trajectory tracking experiment from Section \ref{se:limniscate_trajectory_tracking}, this time focusing on comparing AQSMC and QSMC under varying levels of wind disturbance.

Table \ref{tab:performance_metrics_lemniscate_aqsmc_vs_qsmc} summarizes the performance metrics under varying wind disturbance levels.
As wind speed increases, $\xi_{e_\text{RMS}}$ and $\psi_{e_\text{RMS}}$ rise noticeably for QSMC. 
In contrast, AQSMC maintains a consistent performance, with only a slight increase in $\boldsymbol{\xi_{e_\text{RMS}}}$ and $\psi_{e_\text{RMS}}$.
Notably, AQSMC demonstrates a remarkably consistent $\text{NPWM}_\text{RMS}$, indicating efficient control effort even under challenging conditions, whereas QSMC shows greater $\text{NPWM}_\text{RMS}$ fluctuations, especially at higher wind speeds.

We present the time evolution of $\boldsymbol{\xi}_e$, $\psi_e$, and $\text{NPWM}_i$ for the lemniscate trajectory tracking with the highest disturbance level in Figs. \ref{fig:pos_error_lemniscate_aqsmc_vs_qsmc}–-\ref{fig:motor_lemniscate_aqsmc_vs_qsmc}.
The results clearly demonstrate the superior performance of AQSMC in handling large wind disturbances. As disturbance levels increase, the performance gap between AQSMC and QSMC widens, showcasing the added benefits of the proposed adaptive gains.

\subsubsection{Throw launches}
We extended our flight tests to include throw launches, which are more aggressive than the lemniscate trajectory tracking experiments. 
In these trials, we manually launched the aircraft in various orientations and at different speeds. Once the vehicle velocity reached 2.5 $[\text{m/s}]$, the controllers were activated, powering the motors and attempting to stabilize the vehicle in position $[0,0,1]^\top \;[\text{m}]$.

For the controllers' tuning, we used the same parameters used for the lemniscate trajectory tracking tests.
Due to the manual nature of the throw launches, maintaining consistent initial conditions was challenging, making it difficult to provide quantitative metrics for this scenario.
Nevertheless, both controllers demonstrated remarkable recovery capabilities, even stabilizing the vehicle after upside-down throws by executing flip maneuvers.
Across 10 successful upside-down throw launches trials for each controller, AQSMC achieved an average recovery time of $1.04\;[\text{s}]$, compared to $1.67\;[\text{s}]$ for QSMC. 
Additionally, the maximum acceleration observed during these trails reached $30.17\;  [\text{m}/\text{s}^2]$ for AQSMC, and 
 $27.06\;[\text{m}/\text{s}^2]$ for QSMC.
Figure \ref{fig:throw_launch_aqsmc} illustrates one such trial.
These results highlight the remarkable agility of the controllers, especially AQSMC, in handling aggressive initial conditions and large accelerations, which are challenging to manage on a vehicle with low mass and inertia.

Of note, for comparison, we attempted similar throw launches with the other controllers. 
However, their performance proved inconsistent and resulted in several crashes, indicating that these controllers would require additional tuning to match the reliability and robustness demonstrated by QSMC and AQSMC under such demanding conditions.



\begin{figure}
    \centering
    \includegraphics[scale=0.25, trim={6cm 0cm 4cm 0cm}, clip ]{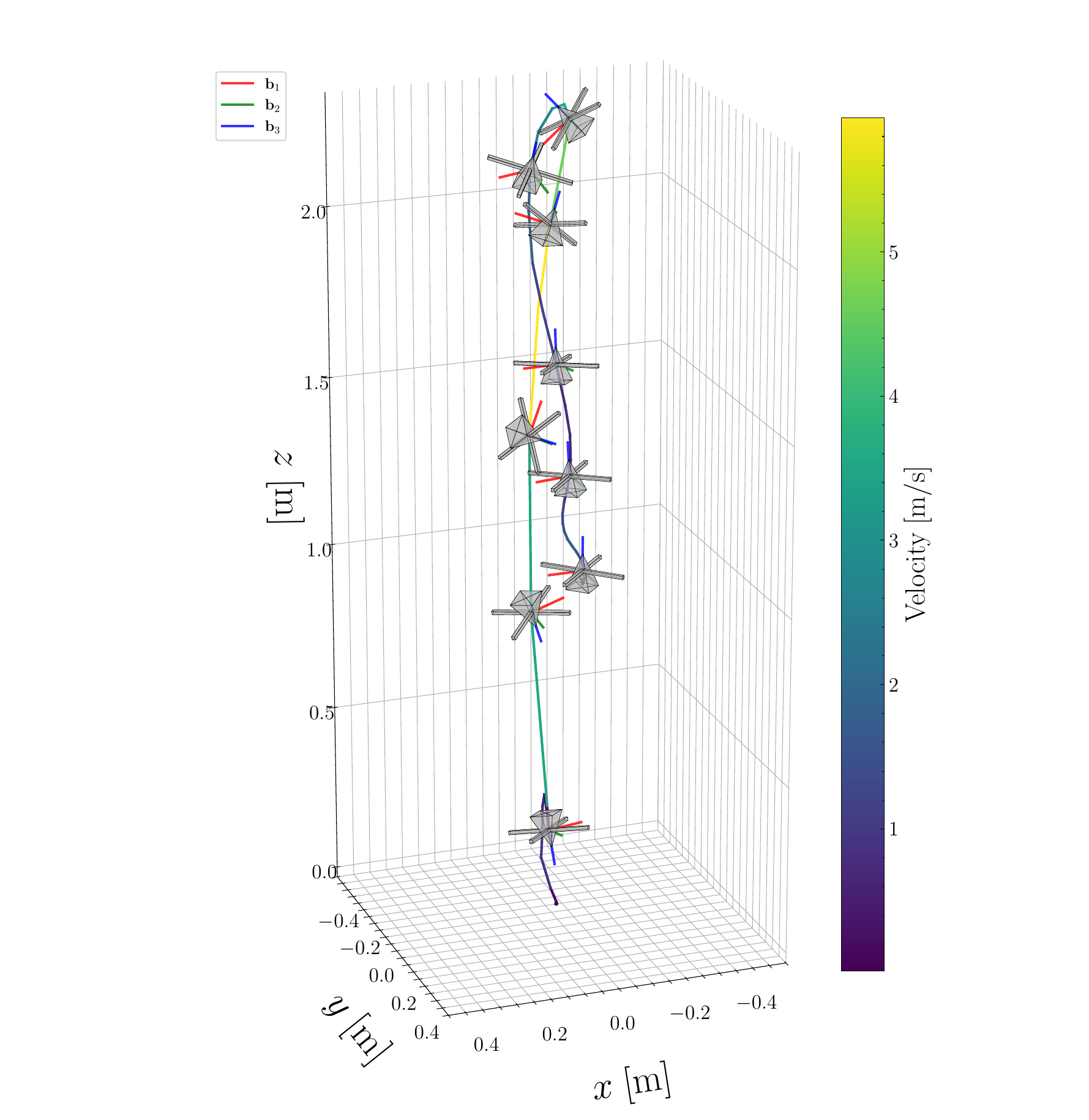}
    \caption{Quadrotor trajectory in a throw launch conducting a flip maneuver using AQSMC}
    \label{fig:throw_launch_aqsmc}
\end{figure}

\section{Conclusion}
In this paper, we presented an adaptive unwinding-free quaternion-based sliding mode controller for 6-DOF quadrotor control.
Our goal was to present a solution that can effectively offer the robustness advantages of SMC for both position and attitude control.
As noted in our literature review, prior sliding mode controllers have suffered from certain limitations, which the proposed QSMC and AQSMC address.
Our comparative hardware experiments verified their performance advantages, and our sensitivity analysis demonstrated that the performance gains offered by the proposed methods are not merely the result of better parameter tuning.

We also highlighted the added benefits of adaptive switching gains in AQSMC over QSMC, showcasing its ability to excel under severe disturbances where even QSMC fails.
The proposed adaptation laws themselves feature a new approach that prevents the excessive growth of switching gains seen in prior adaptive SMC methods, which had limited their practicality.

Beyond theoretical contributions, this paper presented two key strengths in its study design. 
To compare the controllers, we presented a sensitivity analysis on the effect of tuning the controller parameters to reduce the confounding error arising from empirical tuning. 
Furthermore, we designed gimbal experiments that allowed us to evaluate attitude controllers in isolation from the outer-loop position controllers.
The gimbal experiments also highlighted the limitations of Euler-based approaches that have been largely overlooked in the literature.

Given the computational efficiency of AQSMC, which enabled its implementation on the microcontroller of a nano quadrotor, along with its robust performance, we believe our results are promising and open several avenues for future work. Potential directions include integrating advanced SMC formulations such as HOSMC and TSMC, combining the method with optimization- or learning-based approaches, and exploring integrated planning and control frameworks. These advancements could further enhance the utility of AQSMC in complex real-world applications.

\appendices
\section{Control Law of Benchmark Methods}\label{se:appendixA}
For ESMC, we rely on the seminal work of \cite{xu2006sliding}, which has been the basis for many ESMC works in the literature. 
However, the control law in \cite{xu2006sliding} utilizes $\sat(\cdot)$ to reduce chattering.
For a fair comparison with our hyperbolic QSMC law, we replace it with $\tanh(\cdot)$.

For ESMC, the attitude tracking error is $\boldsymbol{\eta}_e = \boldsymbol{\eta} - \boldsymbol{\eta}_d$, and $\boldsymbol{\xi}_e$, $\boldsymbol{\nu}_e$, and $\boldsymbol{\omega}_e$ follow the definitions in \eqref{eq:error_definition}.
The sliding surface for the position controller is the same as \eqref{eq:s_pos}, whereas the sliding surface for the attitude controller is defined as
\begin{equation}
\mathbf{s}_\eta = \boldsymbol{\omega}_e + \Lambda_{\eta} \boldsymbol{\eta}_e.
\end{equation}
The position controller generates thrust using
\begin{equation}\label{eq:ESMC}
\begin{array}{ll}
f^\text{ESMC} &= \frac{\hat{m}}{\cos{\eta_1}\cos{\eta_2}} \bigg( \ddot{\xi}_{d_3} + g -\Lambda_{\xi_{33}}^{\text{ESMC}} {\nu}_{e_3} \\ &
- K_{\xi_{33}}^{\text{ESMC}} \tanh( \frac{s_{\xi_3}}{\phi_{\xi_3}^{\text{ESMC}}}) \bigg)
\end{array}
\end{equation}
The position controller also computes $\varphi_d$ and $\theta_d$ as follows 
\begin{equation}
\begin{array}{ll}
    \varphi_d = \arcsin\left(u_x \sin{\psi_d} - u_y \cos{\psi_d}\right),\\
    \theta_d =\arcsin\left(\frac{u_x \cos{\psi_d} +u_y \sin{\psi_d}}{\cos{\varphi_d}}\right),
\end{array}
\end{equation}
where $u_x$ and $u_y$ are two auxiliary variables defined as
\begin{equation}
u_i =\frac{\hat{m}}{f^{\text{ESMC}}}\left(\ddot{\xi}_{d_i}-\Lambda_{\xi_{ii}}^{\text{ESMC}}{\nu}_{e_i} - K_{\xi_{ii}}^{\text{ESMC}}\tanh(\frac{s_{\xi_i}}{\phi_{\xi_i}^{\text{ESMC}}})\right),
\end{equation}
where $i\in\{1,2\}$.
Finally, the attitude controller takes the following form
\begin{equation}
\tau_{i}^\text{ESMC} =\hat{J}_{ii}\left(\ddot{\eta}_{d_i} - \Lambda_{\eta_{ii}}\eta_{e_i}- K_{\eta_{ii}}\tanh(\frac{s_{\eta_i}}{\phi_{\eta_i}})-a_{i}\prod^{3}_{j \neq i}\dot{\eta}_j \right),
\end{equation}
where $i \in \{1,2,3\}$, $a_1 = \frac{\hat{J}_{22}-\hat{J}_{33}}{\hat{J}_{11}}$, $a_2 = \frac{\hat{J}_{33}-\hat{J}_{11}}{\hat{J}_{22}}$, and $a_3 = \frac{\hat{J}_{11}-\hat{J}_{22}}{\hat{J}_{33}}$.

For GTC, we adopt the approach presented in \cite{lee2010geometric}.
In this method, the attitude tracking error is $ \mathbf{R}_e = \mathbf{R}_d^T \mathbf{R}$. The rest of the tracking error terms retain their definitions from \eqref{eq:error_definition}.
The position control law comprises
\begin{equation}\label{eq:gtc}
f^{\text{GTC}} =\boldsymbol{\kappa}^\text{GTC} \cdot \mathbf{R}\mathbf{e}_3,
\end{equation}
\begin{equation}
    \boldsymbol{\kappa}^\text{GTC} =\hat{m}\left(\ddot{\boldsymbol{\xi}}_d +g\mathbf{e}_3 -\mathbf{K}_{\xi}^{\text{GTC}}\boldsymbol{\xi}_e- \mathbf{K}_{\nu}\boldsymbol{\nu}_e\right),
\end{equation}
and the attitude control has the following structure  
\begin{equation}\label{eq:gtc_attitude_control_law}
\begin{array}{ll}
\boldsymbol{\tau}^\text{GTC} & = \boldsymbol{\omega} \times \hat{\mathbf{J}}\boldsymbol{\omega} -\hat{\mathbf{J}} \left(\boldsymbol{\omega}^{\times} \mathbf{R}^T\mathbf{R}_d \boldsymbol{\omega}_d -\mathbf{R}^T\mathbf{R}_d\dot{\boldsymbol{\omega}}_d \right)\\ &
    -\mathbf{K}_{\omega}\boldsymbol{\omega}_e- \mathbf{K}_{R}\mathbf{R}_e.
\end{array}
\end{equation}

For QPD, the tracking error definitions are identical to \eqref{eq:error_definition}, and its position controller matches that of QSMC. However, for attitude control, we use the following formulation
\begin{equation}\label{eq:QPD}
\begin{aligned}
\boldsymbol{\tau}^{QPD} =-\mathbf{K}_D \boldsymbol{\omega}_e - \mathbf{K}_P \operatorname{sgn_{+}}(q_{w_e})\Vec{\mathbf{q}}_e,
\end{aligned}
\end{equation}
where we use $\operatorname{sgn_{+}}(\cdot)$ to ensure unwinding-free control law similar to QSMC.
\vspace{-1.0\baselineskip}
\section{Controller Parameter Values}\label{se:appendixB}
Table \ref{tab:control_params} summarizes the parameters used for each controller across the different experimental scenarios.

\begin{table*}[t]
\centering
\caption{Values of design parameters for each controller across experiments}
\label{tab:control_params}
\begin{tabular}{lccc}
\toprule
Parameter & Gimballed Attitude Control - Scenario 1 & Gimballed Attitude Control - Scenario 2 & Lemniscate Trajectory Tracking \\
\midrule
&&\textbf{QSMC}&\\
\midrule
\midrule
$\mathbf{K}_{q}$ & $\diag([679.9,501.6,99.9])$ & $\diag([4502.3,1083.5,121.7])$  &$\diag([400,400,400])$  \\
$\boldsymbol{\Lambda}_{q}$ & $\diag([11.3,9.8,13.3])$ & $\diag([11.62,9.80,8.48])$ & $\diag([8,8,8])$ \\
$\boldsymbol{\phi}_{q}$ & $[1.901,1.818,1.136]^T$ & $[4.878,4.424,4.484]^T$ & $[3.33,3.33,5]^T$ \\
$\mathbf{K}_{\xi}$ & --- &--- & $\diag([4,4,3.5])$  \\
$\boldsymbol{\Lambda}_{\xi}$ & --- &---  &$\diag([3,3,2])$  \\
$\boldsymbol{\phi}_{\xi}$ &  --- & --- & $[1.25,1.25,1.25]^T$ \\
\midrule
&&\textbf{ESMC}&\\
\midrule
\midrule
$\mathbf{K}_{\eta}$ & $\diag([10,10,10])$ & ---   & $\diag([8,8,8])$   \\
$\boldsymbol{\Lambda}_{\eta}$ & $\diag([7,7,7])$ & ---    &$\diag([4,4,4])$   \\
$\boldsymbol{\phi}_{\eta}$ & $[4,4,2]^T$ &  ---  &$[2,2,2]^T$ \\
$\mathbf{K}_{\xi}^{\text{ESMC}}$ &  --- & ---  &  $\diag([5,5,8])$    \\
$\boldsymbol{\Lambda}_{\xi}^{\text{ESMC}}$ & --- & ---   & $\diag([2,2,1])$  \\
$\boldsymbol{\phi}_{\xi}^{\text{ESMC}}$ &  --- & --- &  $[1.11,1.11,1.11]^T$ \\
\midrule
&&\textbf{GTC}&\\
\midrule
\midrule
$\mathbf{K}_{R}$ & $\diag([752.1,834.7,156.0])\hat{\mathbf{J}}$ &$\diag([794.0,826.3,150.0])\hat{\mathbf{J}}$  &$\diag([414.43,345.56,246.42])\hat{\mathbf{J}}$ \\
$\mathbf{K}_{\omega}$ & $\diag([202.1,209.9,222.1])\hat{\mathbf{J}}$ &$\diag([215.8,232.8,237.3])\hat{\mathbf{J}}$  &$\diag([59.08,69.54,63.90])\hat{\mathbf{J}}$  \\
$\mathbf{K}_{\xi}^\text{GTC}$ &  --- & ---   &$\diag([7.70,6.91,7.37])$ \\
$\mathbf{K}_{\nu}$ &  --- & ---  &$\diag([2.34,1.67,4.18])$  \\
\midrule
&&\textbf{QPD}&\\
\midrule
\midrule
$\mathbf{K}_{P}$  & $\diag([2251.2,2166.9,729.9])\hat{\mathbf{J}}$ &$\diag([1926.1,1644.3,1003.9])\hat{\mathbf{J}}$  &$\diag([1045.54,881.97,678.06])\hat{\mathbf{J}}$  \\
$\mathbf{K}_{D}$ & $\diag([232.3,196.0,127.1])\hat{\mathbf{J}}$ &$\diag([366.0,392.1,138.2])\hat{\mathbf{J}}$  &$\diag([97.97,106.79,118.29])\hat{\mathbf{J}}$  \\
$\mathbf{K}_{\xi}^{\text{QPD}}$ &  --- & ---  &$\diag([4.54,3.52,3.48])$ \\
$\boldsymbol{\Lambda}_{\xi}^{\text{QPD}}$ &  --- & ---  &$\diag([2.27,2.01,1.32])$  \\
$\boldsymbol{\phi}_{\xi}^{\text{QPD}}$ &  --- & --- & $[1.102,1.277,1.131]^T$ \\
\bottomrule
\end{tabular}
\end{table*}

\bibliographystyle{IEEEtran}
\bibliography{References}

\begin{thebibliography}{10}
\providecommand{\url}[1]{#1}
\csname url@samestyle\endcsname
\providecommand{\newblock}{\relax}
\providecommand{\bibinfo}[2]{#2}
\providecommand{\BIBentrySTDinterwordspacing}{\spaceskip=0pt\relax}
\providecommand{\BIBentryALTinterwordstretchfactor}{4}
\providecommand{\BIBentryALTinterwordspacing}{\spaceskip=\fontdimen2\font plus
\BIBentryALTinterwordstretchfactor\fontdimen3\font minus \fontdimen4\font\relax}
\providecommand{\BIBforeignlanguage}[2]{{%
\expandafter\ifx\csname l@#1\endcsname\relax
\typeout{** WARNING: IEEEtran.bst: No hyphenation pattern has been}%
\typeout{** loaded for the language `#1'. Using the pattern for}%
\typeout{** the default language instead.}%
\else
\language=\csname l@#1\endcsname
\fi
#2}}
\providecommand{\BIBdecl}{\relax}
\BIBdecl

\bibitem{yazdanshenas2024quaternion}
A.~Yazdanshenas and R.~Faieghi, ``Quaternion-based sliding mode control for six degrees of freedom flight control of quadrotors,'' in \emph{Proceedings of the IEEE/RSJ International Conference on Intelligent Robots and Systems (IROS)}.\hskip 1em plus 0.5em minus 0.4em\relax IEEE, 2024.

\bibitem{nan2022nonlinear}
F.~Nan, S.~Sun, P.~Foehn, and D.~Scaramuzza, ``Nonlinear mpc for quadrotor fault-tolerant control,'' \emph{IEEE Robotics and Automation Letters}, vol.~7, no.~2, pp. 5047--5054, 2022.

\bibitem{romero2022model}
A.~Romero, S.~Sun, P.~Foehn, and D.~Scaramuzza, ``Model predictive contouring control for time-optimal quadrotor flight,'' \emph{IEEE Transactions on Robotics}, vol.~38, no.~6, pp. 3340--3356, 2022.

\bibitem{izadi2024multi}
M.~Izadi, Z.~Shayan, and R.~Faieghi, ``Multi-model predictive attitude control of quadrotors,'' \emph{arXiv preprint arXiv:2406.15610}, 2024.

\bibitem{sun2022comparative}
S.~Sun, A.~Romero, P.~Foehn, E.~Kaufmann, and D.~Scaramuzza, ``A comparative study of nonlinear mpc and differential-flatness-based control for quadrotor agile flight,'' 2022.

\bibitem{bhattacharjee2020robust}
D.~Bhattacharjee and K.~Subbarao, ``Robust control strategy for quadcopters using sliding mode control and model predictive control,'' in \emph{AIAA Scitech 2020 Forum}, 2020, p. 2071.

\bibitem{li2022enhanced}
B.~Li and Y.~Wang, ``An enhanced model predictive controller for quadrotor attitude quick adjustment with input constraints and disturbances,'' \emph{International Journal of Control, Automation and Systems}, vol.~20, no.~2, pp. 648--659, 2022.

\bibitem{michel2019design}
N.~Michel, S.~Bertrand, S.~Olaru, G.~Valmorbida, and D.~Dumur, ``Design and flight experiments of a tube-based model predictive controller for the ar. drone 2.0 quadrotor,'' \emph{IFAC-PapersOnLine}, vol.~52, no.~22, pp. 112--117, 2019.

\bibitem{xue2024output}
R.~Xue, L.~Dai, P.~Wang, Z.~Sun, and Y.~Xia, ``Output feedback stochastic mpc for tracking control of quadrotors with disturbances,'' \emph{IET Control Theory \& Applications}, vol.~18, no.~5, pp. 566--580, 2024.

\bibitem{kamel2015fast}
M.~Kamel, K.~Alexis, M.~Achtelik, and R.~Siegwart, ``Fast nonlinear model predictive control for multicopter attitude tracking on so (3),'' in \emph{2015 IEEE conference on control applications (CCA)}.\hskip 1em plus 0.5em minus 0.4em\relax IEEE, 2015, pp. 1160--1166.

\bibitem{tal2020accurate}
E.~Tal and S.~Karaman, ``Accurate tracking of aggressive quadrotor trajectories using incremental nonlinear dynamic inversion and differential flatness,'' \emph{IEEE Transactions on Control Systems Technology}, vol.~29, no.~3, pp. 1203--1218, 2020.

\bibitem{achtelik2013inversion}
M.~W. Achtelik, S.~Lynen, M.~Chli, and R.~Siegwart, ``Inversion based direct position control and trajectory following for micro aerial vehicles,'' in \emph{2013 IEEE/RSJ International Conference on Intelligent Robots and Systems}.\hskip 1em plus 0.5em minus 0.4em\relax IEEE, 2013, pp. 2933--2939.

\bibitem{bouabdallah2007full}
S.~Bouabdallah and R.~Siegwart, ``Full control of a quadrotor,'' in \emph{2007 IEEE/RSJ international conference on intelligent robots and systems}.\hskip 1em plus 0.5em minus 0.4em\relax Ieee, 2007, pp. 153--158.

\bibitem{song2023reaching}
Y.~Song, A.~Romero, M.~M{\"u}ller, V.~Koltun, and D.~Scaramuzza, ``Reaching the limit in autonomous racing: Optimal control versus reinforcement learning,'' \emph{Science Robotics}, vol.~8, no.~82, p. eadg1462, 2023.

\bibitem{romero2024actor}
A.~Romero, Y.~Song, and D.~Scaramuzza, ``Actor-critic model predictive control,'' in \emph{2024 IEEE International Conference on Robotics and Automation (ICRA)}.\hskip 1em plus 0.5em minus 0.4em\relax IEEE, 2024, pp. 14\,777--14\,784.

\bibitem{yogi2024neural}
S.~C. Yogi, L.~Behera, and T.~Tripathy, ``Neural-fxsmc: A robust adaptive neural fixed-time sliding mode control for quadrotors with unknown uncertainties,'' \emph{IEEE Robotics and Automation Letters}, 2024.

\bibitem{salzmann2023real}
T.~Salzmann, E.~Kaufmann, J.~Arrizabalaga, M.~Pavone, D.~Scaramuzza, and M.~Ryll, ``Real-time neural mpc: Deep learning model predictive control for quadrotors and agile robotic platforms,'' \emph{IEEE Robotics and Automation Letters}, vol.~8, no.~4, pp. 2397--2404, 2023.

\bibitem{annaswamy2023integration}
A.~M. Annaswamy, A.~Guha, Y.~Cui, S.~Tang, P.~A. Fisher, and J.~E. Gaudio, ``Integration of adaptive control and reinforcement learning for real-time control and learning,'' \emph{IEEE Transactions on Automatic Control}, vol.~68, no.~12, pp. 7740--7755, 2023.

\bibitem{voos2009nonlinear}
H.~Voos, ``Nonlinear control of a quadrotor micro-uav using feedback-linearization,'' in \emph{2009 IEEE International Conference on Mechatronics}.\hskip 1em plus 0.5em minus 0.4em\relax IEEE, 2009, pp. 1--6.

\bibitem{xu2006sliding}
R.~Xu and U.~Ozguner, ``Sliding mode control of a quadrotor helicopter,'' in \emph{Proceedings of the 45th IEEE Conference on Decision and Control}.\hskip 1em plus 0.5em minus 0.4em\relax IEEE, 2006, pp. 4957--4962.

\bibitem{madani2006backstepping}
T.~Madani and A.~Benallegue, ``Backstepping control for a quadrotor helicopter,'' in \emph{2006 IEEE/RSJ International Conference on Intelligent Robots and Systems}.\hskip 1em plus 0.5em minus 0.4em\relax IEEE, 2006, pp. 3255--3260.

\bibitem{nicol2011robust}
C.~Nicol, C.~Macnab, and A.~Ramirez-Serrano, ``Robust adaptive control of a quadrotor helicopter,'' \emph{Mechatronics}, vol.~21, no.~6, pp. 927--938, 2011.

\bibitem{souza2014passivity}
C.~Souza, G.~V. Raffo, and E.~B. Castelan, ``Passivity based control of a quadrotor uav,'' \emph{IFAC Proceedings Volumes}, vol.~47, no.~3, pp. 3196--3201, 2014.

\bibitem{lee2010geometric}
T.~Lee, M.~Leok, and N.~H. McClamroch, ``Geometric tracking control of a quadrotor uav on se (3),'' in \emph{49th IEEE Conf. Decis. Control (CDC)}.\hskip 1em plus 0.5em minus 0.4em\relax IEEE, 2010, pp. 5420--5425.

\bibitem{garcia2020robust}
O.~Garcia, E.~G. Rojo-Rodriguez, A.~Sanchez, D.~Saucedo, and A.-J. Munoz-Vazquez, ``Robust geometric navigation of a quadrotor uav on se (3),'' \emph{Robotica}, vol.~38, no.~6, pp. 1019--1040, 2020.

\bibitem{lee2013backstepping}
H.~Lee, S.~Kim, T.~Ryan, and H.~J. Kim, ``Backstepping control on se (3) of a micro quadrotor for stable trajectory tracking,'' in \emph{2013 IEEE international conference on systems, man, and cybernetics}.\hskip 1em plus 0.5em minus 0.4em\relax IEEE, 2013, pp. 4522--4527.

\bibitem{lee2012robust}
T.~Lee, ``Robust adaptive attitude tracking on so(3) with an application to a quadrotor uav,'' \emph{IEEE Transactions on Control Systems Technology}, vol.~21, no.~5, pp. 1924--1930, 2012.

\bibitem{lee2013nonlinear}
T.~Lee, M.~Leok, and N.~H. McClamroch, ``Nonlinear robust tracking control of a quadrotor uav on se (3),'' \emph{Asian journal of control}, vol.~15, no.~2, pp. 391--408, 2013.

\bibitem{parra2012toward}
V.~Parra-Vega, A.~Sanchez, and C.~Izaguirre, ``Toward force control of a quadrotor uav in se (3),'' in \emph{2012 IEEE 51st IEEE Conference on Decision and Control (CDC)}.\hskip 1em plus 0.5em minus 0.4em\relax IEEE, 2012, pp. 1802--1809.

\bibitem{gong2020adaptive}
K.~Gong, Y.~Liao, and Y.~Wang, ``Adaptive fixed-time terminal sliding mode control on se (3) for coupled spacecraft tracking maneuver,'' \emph{Int. J. Aerosp. Eng}, vol. 2020, pp. 1--15, 2020.

\bibitem{ren2023adaptive}
J.~Ren, S.~Tang, and T.~Chen, ``Adaptive sliding mode control of spacecraft attitude-orbit dynamics on se (3),'' \emph{Advances in Space Research}, vol.~71, no.~1, pp. 525--538, 2023.

\bibitem{lee2012exponential}
T.~Lee, ``Exponential stability of an attitude tracking control system on so (3) for large-angle rotational maneuvers,'' \emph{Systems \& Control Letters}, vol.~61, no.~1, pp. 231--237, 2012.

\bibitem{teng2022lie}
S.~Teng, W.~Clark, A.~Bloch, R.~Vasudevan, and M.~Ghaffari, ``Lie algebraic cost function design for control on lie groups,'' in \emph{2022 IEEE 61st Conf. Decis. Control (CDC)}.\hskip 1em plus 0.5em minus 0.4em\relax IEEE, 2022, pp. 1867--1874.

\bibitem{lopez2020sliding}
B.~T. Lopez and J.-J.~E. Slotine, ``Sliding on manifolds: Geometric attitude control with quaternions,'' 2020.

\bibitem{bhat2000topological}
S.~P. Bhat and D.~S. Bernstein, ``A topological obstruction to continuous global stabilization of rotational motion and the unwinding phenomenon,'' \emph{Systems \& control letters}, vol.~39, no.~1, pp. 63--70, 2000.

\bibitem{d2024efficient}
S.~D'Angelo, F.~Pagano, F.~Longobardi, F.~Ruggiero, and V.~Lippiello, ``Efficient development of model-based controllers in px4 firmware: A template-based customization approach,'' in \emph{2024 International Conference on Unmanned Aircraft Systems (ICUAS)}.\hskip 1em plus 0.5em minus 0.4em\relax IEEE, 2024, pp. 1155--1162.

\bibitem{crazyflie_lee_controller}
\BIBentryALTinterwordspacing
{Bitcraze Development Team}, \emph{Crazyflie Firmware Documentation: Sensor to Control - Controllers}, 2024, accessed: October 13, 2024. [Online]. Available: \url{https://www.bitcraze.io/documentation/repository/crazyflie-firmware/master/functional-areas/sensor-to-control/controllers/}
\BIBentrySTDinterwordspacing

\bibitem{zheng2014second}
E.-H. Zheng, J.-J. Xiong, and J.-L. Luo, ``Second order sliding mode control for a quadrotor uav,'' \emph{ISA transactions}, vol.~53, no.~4, pp. 1350--1356, 2014.

\bibitem{xiong2017global}
J.-J. Xiong and G.-B. Zhang, ``Global fast dynamic terminal sliding mode control for a quadrotor uav,'' \emph{ISA transactions}, vol.~66, pp. 233--240, 2017.

\bibitem{hou2020nonsingular}
Z.~Hou, P.~Lu, and Z.~Tu, ``Nonsingular terminal sliding mode control for a quadrotor uav with a total rotor failure,'' \emph{Aerospace Science and Technology}, vol.~98, p. 105716, 2020.

\bibitem{hassani2021robust}
H.~Hassani, A.~Mansouri, and A.~Ahaitouf, ``Robust autonomous flight for quadrotor uav based on adaptive nonsingular fast terminal sliding mode control,'' \emph{International Journal of Dynamics and Control}, vol.~9, pp. 619--635, 2021.

\bibitem{mechali2021observer}
O.~Mechali, L.~Xu, Y.~Huang, M.~Shi, and X.~Xie, ``Observer-based fixed-time continuous nonsingular terminal sliding mode control of quadrotor aircraft under uncertainties and disturbances for robust trajectory tracking: Theory and experiment,'' \emph{Control Engineering Practice}, vol. 111, p. 104806, 2021.

\bibitem{wang2020fixed}
J.~Wang, X.~Ma, G.~Zhang, Y.~Zhang, and Q.~Miao, ``Fixed-time terminal sliding mode control for quadrotor aircraft,'' in \emph{Proceedings of the 11th International Conference on Modelling, Identification and Control (ICMIC2019)}.\hskip 1em plus 0.5em minus 0.4em\relax Springer, 2020, pp. 413--421.

\bibitem{ai2019fixed}
X.~Ai and J.~Yu, ``Fixed-time trajectory tracking for a quadrotor with external disturbances: A flatness-based sliding mode control approach,'' \emph{Aerospace Science and Technology}, vol.~89, pp. 58--76, 2019.

\bibitem{yu2021novel}
L.~Yu, G.~He, X.~Wang, and L.~Shen, ``A novel fixed-time sliding mode control of quadrotor with experiments and comparisons,'' \emph{IEEE Control Systems Letters}, vol.~6, pp. 770--775, 2021.

\bibitem{noordin2021sliding}
A.~Noordin, M.~A.~M. Basri, and Z.~Mohamed, ``Sliding mode control with tanh function for quadrotor uav altitude and attitude stabilization,'' in \emph{Intelligent Manufacturing and Mechatronics: Proceedings of SympoSIMM 2020}.\hskip 1em plus 0.5em minus 0.4em\relax Springer, 2021, pp. 471--491.

\bibitem{nguyen2021adaptive}
N.~P. Nguyen, N.~X. Mung, H.~L. N.~N. Thanh, T.~T. Huynh, N.~T. Lam, and S.~K. Hong, ``Adaptive sliding mode control for attitude and altitude system of a quadcopter uav via neural network,'' \emph{IEEE Access}, vol.~9, pp. 40\,076--40\,085, 2021.

\bibitem{noordin2022position}
A.~Noordin, M.~A. Mohd~Basri, Z.~Mohamed, and I.~Mat~Lazim, ``Position and attitude control of quadrotor mav using sliding mode control with tanh function,'' in \emph{Enabling Industry 4.0 through Advances in Mechatronics: Selected Articles from iM3F 2021, Malaysia}.\hskip 1em plus 0.5em minus 0.4em\relax Springer, 2022, pp. 193--204.

\bibitem{zare2022quadrotor}
M.~Zare, F.~Pazooki, and S.~E. Haghighi, ``Quadrotor uav position and altitude tracking using an optimized fuzzy-sliding mode control,'' \emph{IETE Journal of Research}, vol.~68, no.~6, pp. 4406--4420, 2022.

\bibitem{jing2019quadrotor}
Q.~Jing, Z.~Chang, H.~Chu, Y.~Shao, and X.~Zhang, ``Quadrotor attitude control based on fuzzy sliding mode control theory,'' in \emph{2019 Chinese Control Conference (CCC)}.\hskip 1em plus 0.5em minus 0.4em\relax IEEE, 2019, pp. 8360--8364.

\bibitem{munoz2017second}
F.~Mu{\~n}oz, I.~Gonz{\'a}lez-Hern{\'a}ndez, S.~Salazar, E.~S. Espinoza, and R.~Lozano, ``Second order sliding mode controllers for altitude control of a quadrotor uas: Real-time implementation in outdoor environments,'' \emph{Neurocomputing}, vol. 233, pp. 61--71, 2017.

\bibitem{chandra2022higher}
A.~Chandra and P.~P. Lal, ``Higher order sliding mode controller for a quadrotor uav with a suspended load,'' \emph{IFAC-PapersOnLine}, vol.~55, no.~1, pp. 610--615, 2022.

\bibitem{bouadi2011adaptive}
H.~Bouadi, S.~S. Cunha, A.~Drouin, and F.~Mora-Camino, ``Adaptive sliding mode control for quadrotor attitude stabilization and altitude tracking,'' in \emph{2011 IEEE 12th international symposium on computational intelligence and informatics (CINTI)}.\hskip 1em plus 0.5em minus 0.4em\relax IEEE, 2011, pp. 449--455.

\bibitem{huang2019robust}
T.~Huang, D.~Huang, Z.~Wang, and A.~Shah, ``Robust tracking control of a quadrotor uav based on adaptive sliding mode controller,'' \emph{Complexity}, vol. 2019, no.~1, p. 7931632, 2019.

\bibitem{mofid2018adaptive}
O.~Mofid and S.~Mobayen, ``Adaptive sliding mode control for finite-time stability of quad-rotor uavs with parametric uncertainties,'' \emph{ISA transactions}, vol.~72, pp. 1--14, 2018.

\bibitem{nekoukar2021robust}
V.~Nekoukar and N.~M. Dehkordi, ``Robust path tracking of a quadrotor using adaptive fuzzy terminal sliding mode control,'' \emph{Control Engineering Practice}, vol. 110, p. 104763, 2021.

\bibitem{nadda2018adaptive}
S.~Nadda and A.~Swarup, ``On adaptive sliding mode control for improved quadrotor tracking,'' \emph{Journal of Vibration and Control}, vol.~24, no.~14, pp. 3219--3230, 2018.

\bibitem{lian2021adaptive}
S.~Lian, W.~Meng, Z.~Lin, K.~Shao, J.~Zheng, H.~Li, and R.~Lu, ``Adaptive attitude control of a quadrotor using fast nonsingular terminal sliding mode,'' \emph{IEEE Transactions on Industrial Electronics}, vol.~69, no.~2, pp. 1597--1607, 2021.

\bibitem{thanh2018quadcopter}
H.~L. N.~N. Thanh and S.~K. Hong, ``Quadcopter robust adaptive second order sliding mode control based on pid sliding surface,'' \emph{IEEE Access}, vol.~6, pp. 66\,850--66\,860, 2018.

\bibitem{rios2018continuous}
H.~R{\'\i}os, R.~Falc{\'o}n, O.~A. Gonz{\'a}lez, and A.~Dzul, ``Continuous sliding-mode control strategies for quadrotor robust tracking: Real-time application,'' \emph{IEEE Transactions on Industrial Electronics}, vol.~66, no.~2, pp. 1264--1272, 2018.

\bibitem{shao2021adaptive}
X.~Shao, G.~Sun, W.~Yao, J.~Liu, and L.~Wu, ``Adaptive sliding mode control for quadrotor uavs with input saturation,'' \emph{IEEE/ASME Transactions on Mechatronics}, vol.~27, no.~3, pp. 1498--1509, 2021.

\bibitem{sanchez2013time}
A.~Sanchez, V.~Parra-Vega, O.~Garcia, F.~Ruiz-Sanchez, and L.~Ramos-Velasco, ``Time-parametrization control of quadrotors with a robust quaternion-based sliding mode controller for aggressive maneuvering,'' in \emph{2013 Eur. Control Conf. (ECC)}.\hskip 1em plus 0.5em minus 0.4em\relax IEEE, 2013, pp. 3876--3881.

\bibitem{serrano2023terminal}
F.~Serrano, O.~Castillo, M.~Alassafi, F.~Alsaadi, and A.~Ahmad, ``Terminal sliding mode attitude-position quaternion based control of quadrotor unmanned aerial vehicle,'' \emph{Advances in Space Research}, vol.~71, no.~9, pp. 3855--3867, 2023.

\bibitem{arellano2015quaternion}
C.~A. Arellano-Muro, B.~Castillo-Toledo, A.~G. Loukianov, L.~F. Luque-Vega, and L.~E. Gonz{\'a}lez-Jim{\'e}nez, ``Quaternion-based trajectory tracking robust control for a quadrotor,'' in \emph{2015 10th Syst. Syst. Eng. Conf. (SoSE)}.\hskip 1em plus 0.5em minus 0.4em\relax IEEE, 2015, pp. 386--391.

\bibitem{abaunza2019quadrotor}
H.~Abaunza and P.~Castillo, ``Quadrotor aggressive deployment, using a quaternion-based spherical chattering-free sliding-mode controller,'' \emph{IEEE Trans. Aerosp. Electron. Syst.}, vol.~56, no.~3, pp. 1979--1991, 2019.

\bibitem{clarke2008nonsmooth}
F.~H. Clarke, Y.~S. Ledyaev, R.~J. Stern, and P.~R. Wolenski, \emph{Nonsmooth analysis and control theory}.\hskip 1em plus 0.5em minus 0.4em\relax Springer Science \& Business Media, 2008, vol. 178.

\bibitem{shevitz1994lyapunov}
D.~Shevitz and B.~Paden, ``Lyapunov stability theory of nonsmooth systems,'' \emph{IEEE Transactions on automatic control}, vol.~39, no.~9, pp. 1910--1914, 1994.

\bibitem{bacciotti1999stability}
A.~Bacciotti and F.~Ceragioli, ``Stability and stabilization of discontinuous systems and nonsmooth lyapunov functions,'' \emph{ESAIM: Control, Optimisation and Calculus of Variations}, vol.~4, pp. 361--376, 1999.

\bibitem{khalil2002nonlinear}
H.~K. Khalil and J.~W. Grizzle, \emph{Nonlinear systems}.\hskip 1em plus 0.5em minus 0.4em\relax Prentice hall Upper Saddle River, NJ, 2002, vol.~3.

\bibitem{stevens2015aircraft}
B.~L. Stevens, F.~L. Lewis, and E.~N. Johnson, \emph{Aircraft control and simulation: dynamics, controls design, and autonomous systems}.\hskip 1em plus 0.5em minus 0.4em\relax John Wiley \& Sons, 2015.

\bibitem{richter2016polynomial}
C.~Richter, A.~Bry, and N.~Roy, ``Polynomial trajectory planning for aggressive quadrotor flight in dense indoor environments,'' in \emph{Robotics Research: The 16th International Symposium ISRR}.\hskip 1em plus 0.5em minus 0.4em\relax Springer, 2016, pp. 649--666.

\end{thebibliography}
\vspace{-2.5\baselineskip}
\begin{IEEEbiography}[{\includegraphics[width=1in,height=1.25in,clip,keepaspectratio]{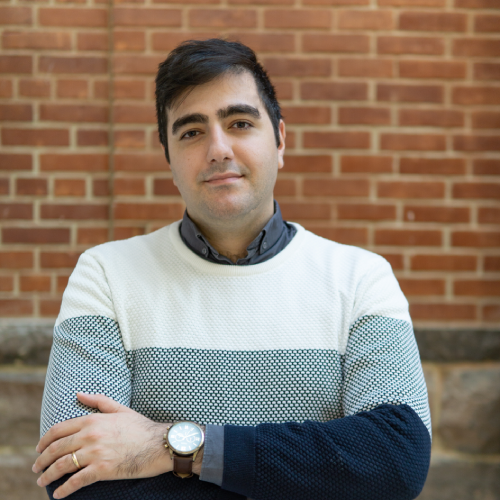}}]{Amin Yazdanshenas}
received the M.Sc. degree in aerospace engineering from Amirkabir University of Technology, Tehran, Iran, in 2022.  
He is currently pursuing the Ph.D. degree in aerospace engineering with the Autonomous Vehicles Laboratory (AVL) at Toronto Metropolitan University, Toronto, Canada. His research focuses on designing advanced control strategies, including sliding mode control, model predictive control, and adaptive control, for aerial and space robotics. 
His work aims to enhance the stability, precision, and performance of autonomous systems operating in dynamic and complex environments.
\end{IEEEbiography}
\vspace{-2\baselineskip}
\begin{IEEEbiography}[{\includegraphics[width=1in,height=1.25in,clip,keepaspectratio]{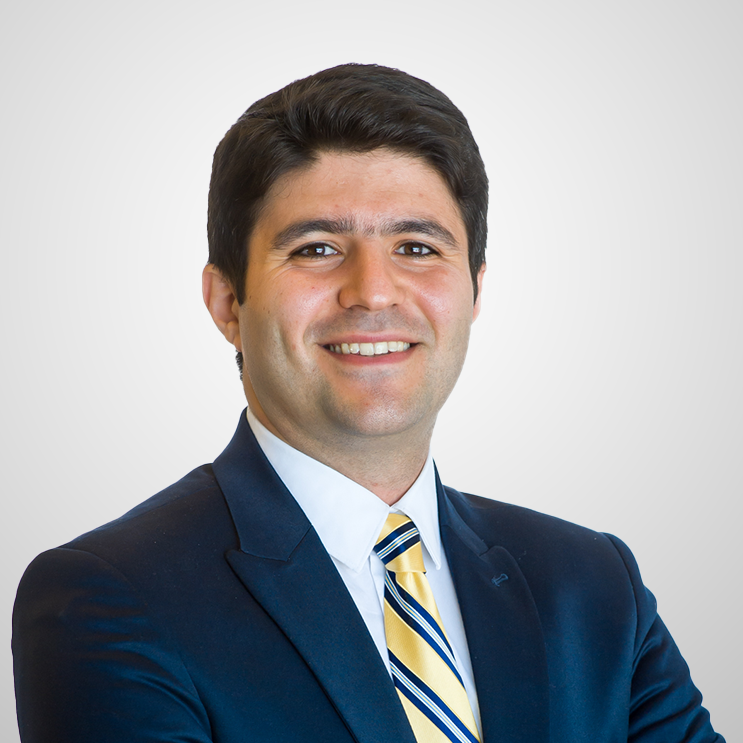}}]{Reza Faieghi}
received his Ph.D. degree from the University of Western Ontario, London, ON, Canada, in 2018. From 2019 to 2020, he was a postdoctoral fellow at the University of Toronto. In 2020, he joined the Department of Aerospace Engineering at Toronto Metropolitan University, where he founded and currently directs the Autonomous Vehicles Laboratory (AVL). His research interests include a broad range of topics in motion planning and control for robotic systems.
\end{IEEEbiography}
\appendices
\end{document}